\newcommand{\N}{\mathbb{N}}
\newcommand{\R}{\mathbb{R}}
\crefname{hypothesis}{Hypothesis}{Hypotheses}
\title{Numerical and statistical analysis of NeuralODE with Runge-Kutta time integration
}
\author{
    Emily C. Ehrhardt\thanks{Department of Mathematics, Technical University of Berlin, Germany (\email{emily.ehrhardt24@imperial.ac.uk}, \email{gottschalk@math.tu-berlin.de}, \email{riedlinger@math.tu-berlin.de}}
    \and
    Hanno Gottschalk\footnotemark[1]
    \and
    Tobias J. Riedlinger\footnotemark[1]}
\newcommand*{\addFileDependency}[1]{
  \typeout{(#1)}
  \@addtofilelist{#1}
  \IfFileExists{#1}{}{\typeout{No file #1.}}
}
\newcommand*{\myexternaldocument}[1]{%
    \externaldocument{#1}%
    \addFileDependency{#1.tex}%
    \addFileDependency{#1.aux}%
}
\newtheorem{assumptions}[theorem]{Assumptions}
\newcommand{\ThetaLW}{\Theta_{L, W}}
\newcommand{\Lip}{\mathrm{Lip}}
\newcommand{\e}{\mathrm{e}}
\newcommand{\LamRK}{\Lambda_{\mathrm{RK}}}
\newcommand{\id}{\mathrm{id}}
\begin{document}

\maketitle

\begin{abstract}
NeuralODE is one example for generative machine learning based on the push forward of a simple source measure with a bijective mapping, which in the case of NeuralODE is given by the flow of a ordinary differential equation. Using Liouville's formula, the log-density of the push forward measure is easy to compute and thus NeuralODE can be trained based on the maximum Likelihood method such that the Kulback-Leibler divergence between the push forward through the flow map and the target measure generating the data becomes small.

In this work, we give a detailed account on the consistency of Maximum Likelihood based empirical risk minimization for a generic class of target measures. In contrast to prior work, we do not only consider the statistical learning theory, but also give a detailed numerical analysis of the NeuralODE algorithm based on the 2nd order Runge-Kutta (RK) time integration. Using the universal approximation theory for deep ReQU networks, the stability and convergence rated for the RK scheme as well as metric entropy and concentration inequalities, we are able to prove that NeuralODE is a probably approximately correct (PAC) learning algorithm. 
\end{abstract}

\begin{keywords}
  NeuralODE $\bullet$ Runge Kutta time integration $\bullet$ numerical analysis $\bullet$ statistical learning theory.
\end{keywords}

\begin{AMS}
 68T05, 34A34, 65L06.
\end{AMS}

\section{Introduction}
NeuralODE  stands for a generative machine learning algorithm that models a transport map between a source and a target probability measure as the flow of an ordinary differential equation (ODE)~\cite{neuralordinarydifferentialequations}. The vector field on the right hand side of the ODE is represented by a neural network with trainable weights, which  makes the ODE neural. From the theory of optimal transport, it is known that under rather general conditions vector fields and flows exist that transport the source measure exactly on the target measure~\cite{SantambrogioFilippo2015OTfA}. Recently, under suitable regularity conditions on the source and target densities, the existence of $C^k$ vector fields has been proven~\cite{marzouk2024distribution}. 

The remaining question is, given that such vector fields exist, can they be learned by neural networks and can new samples be generated by solving ODEs numerically. As the target measure is generally unknown, we have to use data drawn from the target measure to control the error between target and learned target measure. NeuralODE permit an explicit representation of the density of the learned target measure by Liouville's  formula, therefore the training of NeuralODE is likelihood based. However, Liouville's formula contains an integral over the divergence of the vector field along trajectories. A rigorous analytical treatment in the spirit of statistical learning theory thus requires control over the first derivatives of the neural vector fields. Therefore, recent results on the universal approximation in $C^1$ using deep $\mathtt{ReQU}$ networks with activation $\mathtt{ReQU}(x)=\max{\{0,x\}}^2$~\cite{simultaneous_approx} are employed to prove sufficiently strong universal approximation properties for densities generated by neural flows.

In practice, NeuralODE have to be solved numerically. It is therefore necessary to investigate the interplay between the numerical analysis of the ODE and the statistical learning of the vector field using the neural network. In this work, we analyze these two aspects and thereby for the first time give a proof for successful leaning that also takes the numerical analysis into account. More specifically, we conduct this program for second order Runge-Kutta time integration~\cite{SolvingODE}. Here one has to guarantee that a single time step does not spoil the resulting map being a differentiable bijection, because otherwise the change of variables formula for densities breaks down. Here we prove that for sufficiently small step size this can be avoided and numerical solutions to flow maps still represent diffeomorphisms.   

The mathematical theory for generative models at the time of writing is a relatively young field that combines universal approximation~\cite{cybenko1989approximation,yarotsky2017error,simultaneous_approx} and statistical learning theory~\cite{shalev2014understanding} and non parametric and infinite dimensional statistics~\cite{gine2016mathematical}.

For generative learning, often a weaker notion of the universal approximation property is employed. A class of functions $\mathcal{F}$ from $[0,1]^d$ to $[0,1]^d$ is defined to be a distributional universal approximator for a given source measure $\nu$ and a class of target measures $\mathcal{T}$, if the closure of $\mathcal{F}_*\nu=\{\Phi_*\nu:\Phi\in\mathcal{F}\}$ with respect to some topology (e.g.\ weak convergence in law) contains $\mathcal{T}$.  Distributional universal approximation has been proven for a number of generative models, from restricted Boltzmann machines and deep belief networks~\cite{montufar2011refinements}, Generative adversarial neural networks (GAN)~\cite{puchkin2024rates}, invertible neural networks~\cite{ishikawa2023universal,rochau2024minimal}, NeuralODE~\cite{li2022deep} and diffusion models~\cite{oko2023diffusion}.  

The second aspect is statistical learning theory with hypothesis spaces of adaptive size making use of distributional universal approximation to control model misspecification. Here, variance bias tradeoffs have to be balanced to avoid overfitting on the one hand and to keep the induced error from the model misspecification under control. Usually in statistics, metric entropy estimates and concentration inequalities are employed to control the statistical part of the learning error in the large sample limit. Based on convergence rates obtained here, the adaptive growth of the hypothesis space can then be determined. This program has, e.g., successfully been conducted for GAN~\cite{asatryan2023convenient,biau2020some,biau2021some,puchkin2024rates} and diffusion models~\cite{oko2023diffusion}.

The closest paper to this work is~\cite{marzouk2024distribution}, where a detailed account on the statistical learning theory for NeuralODE is developed. While this  work provides similar insights as the present article, there nevertheless exist several differences in the techniques applied and the derived results. First and most importantly,~\cite{oko2023diffusion} does not consider the numerical analysis of the NeuralODE, as we do here. The numerical analyis of NeuralODE has been considered in~\cite{zhu2022numerical}, however only with regard of the flow map, not of Liouville's formula, as needed for statistical learning. Secondly,~\cite{marzouk2024distribution} proves convergence in the Hellinger distance, while here we work with the Kulback-Leibler divergence, which induces a stronger topology than the Hellinger distance. Finally, we develop our regularity analysis of flows along the lines of the Beckmann problem \cite{SantambrogioFilippo2015OTfA} and prove existence and Hölder regularity for the vector field generating the flow, while \cite{marzouk2024distribution} relies on the representation of the Rosenblatt-Knothe map as a flow endpoint \cite{wang2022minimax}. 
In addition, treating numerical integration of the vector field ultimately leads to an exponential convergence rate of the PAC probability in $n$ as opposed to the effectively algebraic rate achieved in \cite{marzouk2024distribution}. However, this comes at the cost of our sample requirement also growing exponentially with $1/\varepsilon$ where $\varepsilon$ is the PAC precision.
Other aspects are similar, e.g. the use of metric entropy and concentration estimates and the use of $\mathtt{ReQU}$ networks for the approximation of vector fields and their first derivatives.      

Our paper is organized in the following way. 
Section 2 provides the existence of flow maps as regular solutions of Beckmann's problem using potential and Schauder theory.
Section 3 introduces neural ODE models and the numerical Runge-Kutta integration scheme of second order.
We introduce \texttt{ReQU} neural networks and a universal approximation result which is vital in our treatment of the model error.
We further describe the learning objective and the setting of statistical learning of distributions and empirical risk minimization together with the error decomposition for generative models.
Section 4 contains bounds on the model error of neural ODE models and we use metric entropy estimates (Dudley's inequality) and concentration (Mc Diarmid's inequality) to control the generalization error. 
Finally, we combine both kinds of error and design an adaptive enlargement for the hypothesis space in terms on network width and depth to prove PAC learning for NeuralODE. 

\section{Existence of generative flows}

\subsection{Flow Maps and Liouville's Formula}\label{sec:liouville-formula}
Many generative algorithms learn transport maps $\Phi:\Omega\to\Omega$ which map a source distribution $\nu$ to a target distribution $\mu$, i.e.  $\mu$, $\nu$ are probability measures on a domain $\Omega\subseteq \R^d$ with Lipschitz boundary and we require $\Phi_*\nu=\mu$ for $\Phi$ to be a transport map, where $\Phi_*\nu(A)=\nu(\Phi^{-1}(A))$ is the image measure of $\nu$ under $\Phi$.  Optimal transport theory deals with the existence and properties of transport maps. NeuralODE, use maps that are endpoints of flows $\Phi(\cdot,t)$ with respect to some vector field $\xi:\Omega\times [0,1]\to \R^d$, i.e.
\begin{equation}
    \label{eq:flow}
    \frac{\mathrm{d}}{\mathrm{d} t}\Phi_{s,t}(x)=\xi(\Phi_t(x),t),~~\Phi_{s,s}(x)=x.
\end{equation}
In the following, we assume that $\xi(\cdot,t)\cdot\vec{\eta}=0$ holds on the boundary $\partial \Omega$ of $\Omega$, where $\vec\eta$ is the outward normal vector field.
The flow endpoint is then defined as $\Phi=\Phi_{0,1}(\cdot)$. 
For a general bijective and differentiable map $\Phi$, the density of $\Phi_*\nu$, given that $\nu=f_\nu\, \mathrm{d}x$ has a probability density $f_\nu$ with respect to the Lebesgue measure $\mathrm{d}x$, the density of $\Phi_*\nu $ is given by the transformation of densities formula \cite[Thm.\ IV.8.9]{wernerEinfuehrungHoehereAnalysis2009a}
\begin{equation}
    \label{eq:transformation_of_densities}
    f_\Phi=f_\nu\circ \Phi^{-1} \, \cdot \left|\det D\Phi^{-1}\right|,
\end{equation}
where $D\Phi^{-1}$ stands for the Jacobi matrix of the function $\Phi^{-1}$ and $\det A$ denotes the determinant of a quadratic matrix $A$. If, in particular, $\Phi$ is a flow endpoint w.r.t. a vector field $\xi(x,t)$ which is differentiable in $x$, Liouville's formula gives a representation for $\det D\Phi_{s,t}$ 
\begin{lemma}{(Representation of Log-Determinant)}\label{lem:logdet-formula}
For $s,t \in \R, k \in \N, k\geq 2$ let $\Phi_{s,t}$ be the flow generated by $\xi \in C^{1}(\R^d \times \R, \R^d)$ with $\xi(\cdot , \tau)$ globally Lipschitz for all $\tau \in \R$. Then,
    \begin{equation}
    \log(|\det(D_y\Phi_{s,t}(y;\xi))|) = \int_s^t \mathrm{div}_y\left(\xi(\Phi_{s,\tau}(y;\xi)), \tau\right) \; \mathrm{d}\tau.
    \end{equation}
\begin{proof}
    It is a well-known fact (e.g., \cite[Sec.\ 3.6.1]{KnaufAndreas2018MPCM}) that the Jacobian $J(y, t) := D_y \Phi_{s,t}(y; \xi)$ satisfies the linear initial value problem
    \begin{equation}
        \frac{\mathrm{d}}{\mathrm{d}t} J(y,t) = A(y,t) \cdot J(y,t), \qquad J(y,s) = \mathrm{id}_{\R^d}    ,
    \end{equation}
    where $A(y,t) := D_y \xi(y,t)$.
    An application of Liouville's formula \cite[Thm.\ 1.2]{HartmanODE} yields $\det(J(y,t) = \det(J(y,s)) \cdot \exp(\int_s^t \mathrm{tr}(A(y,\tau)) \,\mathrm{d} \tau$.
\end{proof}
\end{lemma}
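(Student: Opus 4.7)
The natural plan is the classical one invoking the Abel--Jacobi--Liouville identity for linear systems. First I would verify that under the stated hypotheses the flow $\Phi_{s,t}(\cdot;\xi)$ is globally defined and continuously differentiable in the spatial variable. Global existence follows from the global Lipschitz assumption on $\xi(\cdot,\tau)$ via the Picard--Lindelöf theorem, and smooth dependence on initial conditions (using that $\xi$ is $C^1$ in $y$) yields $y \mapsto \Phi_{s,t}(y;\xi) \in C^1$ with Jacobian $J(y,t):=D_y\Phi_{s,t}(y;\xi)$ well defined for all $s,t \in \mathbb{R}$.

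Next I would differentiate the flow equation~\eqref{eq:flow} with respect to $y$. Interchanging $\partial_t$ with $D_y$ (justified by the $C^1$ regularity just established) and applying the chain rule yields the linear variational equation
\begin{equation*}
    \frac{\mathrm{d}}{\mathrm{d}t} J(y,t) \;=\; A(y,t)\, J(y,t), \qquad J(y,s) \;=\; \mathrm{id}_{\mathbb{R}^d},
\end{equation*}
with matrix coefficient $A(y,t) := D_y\xi(\Phi_{s,t}(y;\xi),t)$, which is continuous in $t$ for each fixed $y$ because $\Phi_{s,\cdot}(y;\xi)$ is $C^1$ and $D_y\xi$ is continuous.

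The third step is to feed this into the classical Liouville formula for linear matrix ODEs, which gives
\begin{equation*}
    \det J(y,t) \;=\; \det J(y,s)\cdot \exp\!\left(\int_s^t \mathrm{tr}(A(y,\tau))\,\mathrm{d}\tau\right)
    \;=\; \exp\!\left(\int_s^t \mathrm{tr}(A(y,\tau))\,\mathrm{d}\tau\right),
\end{equation*}
since $J(y,s) = \mathrm{id}$ has determinant $1$. In particular $\det J(y,t) > 0$ for every $t$, so the absolute value on the left-hand side of the claim may be dropped. Using $\mathrm{tr}(D_y \xi) = \mathrm{div}_y \xi$ and taking logarithms then yields exactly the asserted identity.

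I do not expect any serious obstacle: the content of the statement is essentially the Abel--Jacobi--Liouville identity applied to the first variation of the flow. The only points that require some care are (i) justifying that the spatial derivative $D_y\Phi_{s,t}$ exists globally and satisfies the variational equation -- for which the global Lipschitz and $C^1$ hypotheses on $\xi$ are precisely what is needed -- and (ii) observing that $\det J$ stays strictly positive so the logarithm of the absolute value equals the integral without any sign correction.
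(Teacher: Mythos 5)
Your argument is exactly the paper's proof: establish the variational (first-variation) equation for $J(y,t)=D_y\Phi_{s,t}(y;\xi)$ and apply the Abel--Jacobi--Liouville formula for linear systems, then use $\mathrm{tr}(D_y\xi)=\mathrm{div}_y\xi$ and positivity of the determinant. If anything you are slightly more careful than the paper, which writes the coefficient matrix as $A(y,t)=D_y\xi(y,t)$ rather than, as you correctly do, $D_y\xi(\Phi_{s,t}(y;\xi),t)$ evaluated along the trajectory.
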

If, in particular, if $\Phi$ is a flow endpoint the integral in Lemma \ref{lem:logdet-formula} ranges from 0 to one. As $\Phi^{-1}$ is the flow endpoint with respect to the vector field $-\xi$, Lemma \ref{lem:logdet-formula} can directly be inserted into the transformation of densities formula \eqref{eq:transformation_of_densities}.

\subsection{Beckmann's problem}
From the previous subsection we have seen how given vector fields transform distributions. Furthermore, by Liouville's formula we recognize the role of differentiability of the flow. This leads to the question, when a flow exists that transforms a given source distribution represented by the density $f_\nu\,\mathrm{d}x$ to a target distribution with density $f_\mu \,\mathrm{d}x$ and what can be said about the regularity the underlying vector field $\xi(\cdot,\tau)$.   
Here we give an account to this problem based on potential theory and Schauder estimates for elliptic partial differential equations (PDE)~\cite{agmon1959estimates,gilbarg1977elliptic}. Our solution is related to the so called Beckmann problem in optimal transport \cite[Chapter 4]{SantambrogioFilippo2015OTfA}.  

In Beckmann's approach, we are looking for a vector field $w:\Omega \to \R^d$ which models the total influx or outflux of a differential volume inside a bounded domain $\Omega\subseteq \R^d$ with Lipschitz boundary $\partial\Omega$.  We consider a time dependent probability density $f_t(x)$ that evolves according to
\begin{equation}
\label{eq:densityFlow}
\frac{\partial}{\partial t} f_t+\textrm{div}_x w(t,\cdot)=0,~~f_0=f_\nu,~~\text{on}~~\Omega~~(a.e.). 
\end{equation}
Let us assume for a moment that some \emph{time independent} vector field exists that fulfills 
\begin{equation}
\label{eq:divergenceConstraint}
\textrm{div}_x w=f_\nu-f_\mu~~\text{on}~~\Omega, ~~\text{and}~~w\cdot \vec\eta=0,~~\text{on}~~\partial\Omega~~(a.e.),    
\end{equation}
where $\vec\eta$ is the outward pointing normal vector field on $\partial\Omega$.
Combining~\eqref{eq:densityFlow} and~\eqref{eq:divergenceConstraint} easily implies
\begin{equation}
\label{eq:interpolateDensities}
f_t=(1-t)f_\nu+tf_\mu,~~t\in[0,1], ~~\text{on}~~ \Omega.    
\end{equation}
If we compare this to the continuity equation for the flow of probability densities caused by the flow of a vector field $\xi(x,t)$ applied to an initial condition $f_\nu(x)$
\begin{equation}
\label{eq:continuityEquation}
\frac{\partial}{\partial t} f_t=\textrm{div}_x (f_t\xi(\cdot,t)),~~f_0=f_\nu,~~\text{on}~~\Omega,
\end{equation}
we see that given $w(x)$, the vector field 
\begin{equation}
\label{eq:beckmannVectorField}
\xi(\cdot,t)=\frac{w}{f_t}
\end{equation}
is well-defined for $t\in[0,1]$, provided $f_\mu(x),f_\nu(x)>0$ on $\Omega$, and fulfills $\xi(x,t)\cdot \vec\eta(x)=0,~\text{for a.e.}~x\in\partial\Omega$ and $t\in[0,1]$, which implies that the probability mass is conserved inside $\Omega$.

The Beckmann problem is to solve
\begin{equation}
\label{eq:min_sq_norm_w}
J(w)=\frac{1}{2}\int_\Omega |w|^2\, \mathrm{d}x \rightarrow \min ~, 
\end{equation}
where the minimization takes place under vector fields $w$ which fulfill the constraint\linebreak \eqref{eq:divergenceConstraint} and $w$ is, e.g., in some Sobolev space $W^{1,p}(\Omega,\R^d)$, $p\in[2,\infty)$, that allows taking a derivative and a trace on $\partial\Omega$ \cite{adams2003sobolev}. While the existence of optimal solutions $w$ can be proven under suitable conditions, the known regularity properties of $w$ are not yet strong enough to assure the existence of flows connected to $\xi(x,t)$ \cite[Chapter 4]{SantambrogioFilippo2015OTfA}, see however~\cite{dweik2022w1,Lorenz2022} for some recent progress in more irregular setting that we consider here. 
In the following we identify a sufficiently strong  set of assumptions that allows us to derive Hölder regularity for the vector field $w$. 

\subsection{Unconstrained formulation of the Beckmann problem}

Here we pass from the constraint formulation of Beckmann's problem to an unrestricted formulation using Langrangian multipliers $u:\Omega\to\R$ and $v:\partial\Omega\to \R$. We thus consider the Lagrangian functional
\begin{equation}
    \label{eq:Lagrangian}
    \mathcal{L}(w,u,v)=\frac{1}{2}\int_\Omega |w|^2\, \mathrm{d}x+\int_\Omega (\mathrm{div}_x w-f_\nu+f_\mu)\,u\, \mathrm{d}x+\int_{\partial\Omega} (\vec\eta \cdot w)\, v\, \mathrm{d}S,
\end{equation}
where $dS$ stands for the induced surface volume element on $\partial\Omega$. If $u$ and $v$ are chosen from sufficiently large linear function spaces, the constraint minimization of \eqref{eq:min_sq_norm_w} is equivalent to 
\begin{equation}
    \label{eq:unconstraint_min_w}
    \min_w \sup_{u,v} \mathcal{L}(w,u,v)
\end{equation}
In fact, should one of the constraints in \eqref{eq:divergenceConstraint} be violated, one could find suitable Lagrange multipliers $u$, $v$ such that the last two terms in \eqref{eq:Lagrangian}. Thus the supremum of these two terms in $u$ and $v$ is infinite, which implies that such a $w$ can not be the minimizer.

We now give a rigorous mathematical formulation of \eqref{eq:unconstraint_min_w} based on the Hölder regularity of the densities of source and target measures.
To this purpose, let \linebreak $C^{k,\alpha}(\bar\Omega,\R^q)$ be the set of $k$ times differentiable functions from $\bar \Omega$ to $\R^q$ with $\alpha$-H\"older continuous $k$-th derivative, $k\in\N_0$, $\alpha \in (0,1]$, $q\in\N$. We utilize the following convention on the Hölder norms
\begin{align}
\begin{split}
\label{eq:HoelederNorms}
 \| f\|_{C^{k,\alpha}}&=\max \left\{\|D^\beta f_j\|_{C(\bar\Omega,\R)}:j=1,\ldots,q,\, |\beta|\leq k\right\} \\
 &+\max \left\{\sup_{x \neq x'\in\bar \Omega}\frac{\left|D^\beta (f_j(x)-f_j(x'))\right|}{|x-x'|^\alpha}:j=1,\ldots,q,\, |\beta|=k\right\},
\end{split}
\end{align}
where $\beta=(\beta_1,\ldots\beta_d)\in\N_0^d$ is a multi index  with degree with $|\beta|=\sum_{j=1}^d\beta_j$ and $D^\beta =\frac{\partial^{|\beta|}}{\partial x_1^{\beta_1}\cdots \partial x_d^{\beta_d}}$ the corresponding differential operator.

If $\Omega$ has a $C^{k,\alpha}$ boundary, i.e. the boundary can be straightened by $C^{k,\alpha}$ hemisphere transformations, then we write $g\in C^{k,\alpha}(\partial\Omega,\R)$ if $g:\partial\Omega\to \R$ such that for any of the hemisphere transformations, the mapping from the boundary of the hemisphere to $\partial \Omega$ composed with the hemisphere transformation restricted to the straightened boundary is locally given by a function from $C(\R^{d-1},\R^d)$, see~\cite{agmon1959estimates,gilbarg1977elliptic} for further details.  

\begin{assumptions}
Let $k\in \N_0$ and $\alpha\in(0,1)$.
    \begin{itemize}
        \item [A1)] Let $\Omega \subseteq \R^d$ be bounded with $C^{k+2,\alpha}$ boundaries.
        \item[A2)] Let $f_\mu>0$ and $f_\nu>0$ on $\Omega$.
        \item[A3)] Let $f_\nu, f_\mu \in C^{k,\alpha}(\bar\Omega,\R)$ for some $k\in\N_0$ and $\alpha\in(0,1]$.
    \end{itemize}
\end{assumptions}

Assumption A2) clearly implies that
\begin{equation}
    \label{eq:kappaPositive}
    \kappa:=\min\left\{\inf_{x\in\bar \Omega} f_\mu(x),\inf_{x\in\bar \Omega} f_\nu(x)\right\}>0,
\end{equation}
as the continuous functions $f_\mu$ and $f_\nu$ attain their minimum on the compact set $\bar \Omega$.

Let us now specify suitable spaces for the optimization variables, i.e. we chose $w\in W^{1,p}(\Omega,\R^d)$, $p\geq 2$, $u\in C^{k+2,\alpha}(\bar\Omega,\R)$ and $v\in C^0(\partial\Omega,\R)$. By the Sobolev embedding theorem \cite{adams2003sobolev}, $w$ has a continuous extension to $\bar \Omega$ if $p\geq d$ and admits a trance with values in $L^q(\partial\Omega)$ for $p<d$ if $1\leq q\leq \frac{(d-1)p}{(d-p)}$ if $1\leq p<d$. Furthermore, under the given conditions the integration by parts formula holds for the first derivatives of $w$ multiplied by $u$ \cite{adams2003sobolev}. 

Let us thus investigate the first order optimality conditions of the optimization problem \eqref{eq:unconstraint_min_w}
\begin{subequations}
\begin{align}
    \label{eq:first_order_opt_a}
    0&= \frac{\delta}{\delta v}\mathcal{L}(w,u,v)=\vec\eta\cdot w~~\text{on}~~\partial\Omega ~~(a.e.)\\
    \label{eq:first_order_opt_b}
    0&= \frac{\delta}{\delta u}\mathcal{L}(w,u,v)=\mathrm{div}_x w-f_\nu+f_\mu=0~~\text{on}~~\Omega ~~(a.e.)\\
    \label{eq:first_order_opt_c}
    0&= \frac{\delta}{\delta w}\mathcal{L}(w,u,v)=w -\nabla u=0~~\text{on}~~\Omega ~~(a.e.)
\end{align}
\end{subequations}
To derive \eqref{eq:first_order_opt_c}, we used integration by parts by the divergence theorem and \eqref{eq:first_order_opt_a}, $\nabla$ stands for the gradient operator. Since $p\geq 2$, all terms can be interpreted as Fr\'echet derivatives of $\mathcal{L}$ in the respective spaces.  

Combining  \eqref{eq:first_order_opt_a}-- \eqref{eq:first_order_opt_c}, we see that the Lagrangian function $u:\Omega\to\R$ has to solve the Poisson equation with Neumann boundary conditions
\begin{equation}
\label{eq:Poisson}
\Delta u=f_\nu-f_\mu,~~~x\in\Omega, ~~\nabla u\cdot \vec\eta=0,~~~x\in\partial\Omega,~~w=\nabla u.
\end{equation}
Here $\Delta=\sum_{j=1}^d\frac{\partial^2}{\partial x_j^2}$ denotes the Laplace operator.

\subsection{Potential theory and Schauder estimates}

In this section we derive the existence and regularity of the solution $u$ of the first order optimality conditions of the Beckmann problem in the form \eqref{eq:Poisson}  using elliptic regularity theory~\cite{agmon1959estimates,gilbarg1977elliptic}.

Let us first recall a result on the existence of strong solutions of the Poisson equation with Neumann boundary conditions via the Fredholm alternative:

\begin{theorem}[Existence and $C^{2,\alpha}$ Schauder estimate~\cite{nardi2015schauder}]
\label{theorem:Schauder}
Let $d>2$, $\alpha\in(0,1)$ and $\Omega$ fulfill assumption A1) for $k=0$. Let $f\in C^{0,\alpha}(\Omega,\R)$ be given and $g:\partial \Omega\to \R$ be in $C^{1,\alpha}(\partial\Omega,\R)$. If
\begin{equation}
    \label{eq:FredholmAlternative}
    \int_\Omega f\, \mathrm{d}x+\int_{\partial\Omega} g \,\mathrm{d} S=0,
\end{equation}
then there exits a solution $u\in C^{2,\alpha}(\overline{\Omega},\R)$ to the Poisson equation $\Delta u=f$ on $\Omega$ with Neumann boundary conditions $\nabla u  \cdot \vec\eta = g$ on $\partial \Omega$ which is unique up to a constant.

Furthermore, there exists a constant $C=C(\Omega,\alpha)$ such that
\begin{equation}
\label{eq:2ndOrderSchauder}
\left\| u-\frac{1}{|\Omega|}\int_\Omega u\,\mathrm{d}x\right\|_{C^{2,\alpha}(\Omega,\R)}\leq C(\Omega,d,\alpha)\left(\| f\|_{C^{0,\alpha}(\Omega,\R)}+\|g\|_{C^{1,\alpha}(\partial\Omega,\R)}\right).
\end{equation}
\end{theorem}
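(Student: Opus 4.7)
My approach is to prove existence via a weak/variational formulation combined with the Fredholm alternative, and then upgrade regularity and extract the quantitative estimate via Schauder theory localized by flattening the boundary.

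First, I would set up the weak formulation on $H^1(\Omega)$: define the bilinear form $a(u,v)=\int_\Omega \nabla u\cdot\nabla v\,\mathrm{d}x$ and the linear functional $\ell(v)=-\int_\Omega fv\,\mathrm{d}x+\int_{\partial\Omega}gv\,\mathrm{d}S$, so that a weak solution of the Neumann problem is a $u\in H^1(\Omega)$ with $a(u,v)=\ell(v)$ for all test functions. Testing with $v\equiv 1$ shows that the compatibility condition \eqref{eq:FredholmAlternative} is necessary; conversely, passing to the quotient $\dot H^1(\Omega):=H^1(\Omega)/\mathbb{R}$, the Poincaré--Wirtinger inequality gives coercivity of $a$ on $\dot H^1$, and the boundedness of $\ell$ on $\dot H^1$ is precisely \eqref{eq:FredholmAlternative}. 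Lax--Milgram then produces a unique weak solution in $\dot H^1(\Omega)$, which I normalize by $\int_\Omega u\,\mathrm{d}x=0$.

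Second, I would upgrade regularity by localization. In the interior, classical Schauder estimates for the Laplacian give $u\in C^{2,\alpha}$ on any ball $B\Subset\Omega$, with a quantitative bound in terms of $\|f\|_{C^{0,\alpha}}$ and $\|u\|_{C^0}$. For boundary points I would use the $C^{2,\alpha}$-hemisphere charts from A1) to straighten $\partial\Omega$ locally; after pullback, the Laplacian becomes a second-order uniformly elliptic operator with $C^{0,\alpha}$-coefficients and the Neumann condition becomes an oblique-derivative condition with $C^{1,\alpha}$ coefficients, because the chart is of class $C^{2,\alpha}$. Boundary Schauder estimates for the oblique-derivative problem then yield $C^{2,\alpha}$ control up to the flattened boundary, and pushing back preserves H\"older norms with constants depending only on the charts, i.e.\ on $\Omega$, $d$, $\alpha$.

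Third, a finite open cover of $\bar\Omega$ by interior balls and boundary charts, combined with a partition-of-unity argument, gives a global estimate of the form
\begin{equation*}
\|u\|_{C^{2,\alpha}(\bar\Omega)}\le C\bigl(\|f\|_{C^{0,\alpha}(\Omega)}+\|g\|_{C^{1,\alpha}(\partial\Omega)}+\|u\|_{C^0(\bar\Omega)}\bigr).
\end{equation*}
The lower-order term $\|u\|_{C^0}$ is then absorbed by a standard compactness--contradiction argument: assuming no such estimate exists, a rescaled sequence $u_n$ of normalized, zero-mean solutions with data tending to zero would subconverge in $C^{2,\alpha'}$, $\alpha'<\alpha$, to a harmonic function with zero Neumann data and zero mean, hence to the zero function, contradicting $\|u_n\|_{C^0}=1$. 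This delivers \eqref{eq:2ndOrderSchauder} for $u-|\Omega|^{-1}\int_\Omega u\,\mathrm{d}x$.

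The main obstacle is the boundary Schauder estimate. Interior Schauder theory is routine, but handling the Neumann boundary condition in a rigorous local calculation requires oblique-derivative Schauder theory in its full form, with careful tracking of how the H\"older norms of the coefficients (and hence the constants in the estimate) depend on the $C^{2,\alpha}$-regularity of the boundary charts; this is the step that really uses assumption A1) with $k=0$ and is where I would invoke the machinery of \cite{agmon1959estimates,gilbarg1977elliptic,nardi2015schauder}.
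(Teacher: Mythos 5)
The paper does not prove this theorem at all: it is imported verbatim from Nardi~\cite{nardi2015schauder}, so there is no internal argument to compare yours against. Your sketch is, in outline, the standard proof of that cited result (and essentially the one in~\cite{nardi2015schauder}): Lax--Milgram on the mean-zero quotient of $H^1(\Omega)$ for existence and uniqueness up to constants, interior Schauder estimates, boundary flattening with the $C^{2,\alpha}$ charts from A1), a partition of unity to globalize, and a compactness--contradiction argument to absorb the $\|u\|_{C^0}$ term. That architecture is correct and delivers \eqref{eq:2ndOrderSchauder} with a constant depending only on $\Omega$, $d$, $\alpha$.

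Two points deserve tightening. First, the step you yourself flag as the main obstacle hides the actual content of Nardi's note: the oblique-derivative Schauder bounds in \cite{agmon1959estimates,gilbarg1977elliptic} are \emph{a priori} estimates for functions already known to lie in $C^{2,\alpha}(\bar\Omega)$, and the accompanying existence theorems (e.g.\ Gilbarg--Trudinger, Thm.\ 6.31) require a strictly negative zero-order boundary coefficient, which excludes the pure Neumann condition because of the kernel of constants. So from your Lax--Milgram solution, which is only in $H^1$, you still need a genuine regularity-up-to-the-boundary or approximation argument (e.g.\ smooth the data preserving compatibility, solve a regularized problem, and pass to the limit under uniform Schauder bounds, or go through $W^{2,p}$ Neumann theory and bootstrap) before the a priori estimate may legitimately be applied; this is precisely the folklore gap that \cite{nardi2015schauder} was written to close, and invoking ``the machinery'' of the references without this bridge leaves the circularity unresolved. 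Second, a small sign issue: with the outward normal, testing the weak formulation with $v\equiv 1$ gives $\int_\Omega f\,\mathrm{d}x=\int_{\partial\Omega}g\,\mathrm{d}S$, which differs in the sign of $g$ from \eqref{eq:FredholmAlternative} as printed; this is immaterial for the paper (which uses $g=0$), but your claim that boundedness of $\ell$ on the quotient ``is precisely \eqref{eq:FredholmAlternative}'' should be stated modulo that convention.
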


\cref{theorem:Schauder} is not yet fully satisfactory as it does not exploit higher regularity of $f=f_\nu-f_\mu$ in case that $f_\mu,f_\nu\in C^{k,\alpha}(\Bar\Omega,\R) $ for $k\geq 1$.
We therefore use classical Schauder estimates for elliptic PDE to derive higher regularity. Application of the classical Schauder estimate by Agmon, Douglis and Nirenberg  to the situation at hand gives the following.

\begin{theorem}[$C^{k,\alpha}$ Schauder estimate~\cite{agmon1959estimates}]
    \label{theorem:ShauderHigherOrder}
 Let $d>2$ and $k\in \N$ and let $f\in C^{k,\alpha}(\bar\Omega,\R)$. Let furthermore $\Omega$ be bounded and fulfill assumptions   A1). If there exists a classical solution to the Poisson equation $\Delta u=f$ on $\Omega$ with Neumann boundary conditions $\nabla u\cdot \vec\eta=g$ on $\partial\Omega$, then actually $u\in C^{2+k,\alpha}(\bar \Omega,\R)$ and the following Schauder estimate holds
 \begin{equation}
     \label{eq:higherSchauder}
    \| u\|_{C^{2+k,\alpha}(\bar\Omega,\R)}\leq C_1(\Omega,d,k,\alpha)\left(\|f\|_{C^{k,\alpha}(\bar\Omega,\R)}+\|g\|_{C^{k+1,\alpha}(\bar\Omega,\R)}\right). 
 \end{equation}
\end{theorem}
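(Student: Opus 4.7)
The plan is to deduce this higher-order Schauder estimate from the Agmon--Douglis--Nirenberg (ADN) theory for elliptic boundary value problems with complementing (Lopatinskii) boundary operators. The Laplacian $\Delta$ is a uniformly elliptic second-order operator, and the Neumann trace $u \mapsto \nabla u \cdot \vec\eta$ is a first-order boundary operator whose principal symbol complements $\Delta$ (the associated covector equation has the required one transversal root). Hence the full ADN machinery in $C^{k,\alpha}$ scales applies directly whenever the boundary chart regularity matches, which is guaranteed by A1).

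Concretely, I would argue by induction on $k$. The base case $k=1$ is bootstrapped from \cref{theorem:Schauder} applied to the existing classical solution $u$, which already yields $u\in C^{2,\alpha}(\bar\Omega,\R)$. For the inductive step, assume the estimate is proven up to index $k-1$, fix $f\in C^{k,\alpha}(\bar\Omega,\R)$, $g\in C^{k+1,\alpha}(\partial\Omega,\R)$, and a classical solution $u$. Introduce a finite partition of unity subordinate to interior and boundary patches. On an interior patch, any first-order derivative $v=\partial_i u$ solves $\Delta v = \partial_i f\in C^{k-1,\alpha}$; the inductive hypothesis (applied locally, with a cutoff to avoid the boundary) controls $v$ in $C^{k+1,\alpha}$ and hence $u$ in $C^{k+2,\alpha}$ on the patch.

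For a boundary patch I would straighten $\partial\Omega$ by one of the $C^{k+2,\alpha}$ hemisphere transformations $\Psi$, so that in the new coordinates $y$ the domain is the half-ball $\{y_d>0\}$ and $\partial\Omega$ becomes $\{y_d=0\}$. In those coordinates $\Delta$ pulls back to an elliptic operator $\tilde L$ whose coefficients, being built from $D\Psi^{-1}\in C^{k+1,\alpha}$, lie in $C^{k+1,\alpha}$, and the Neumann operator transforms into a first-order boundary operator $\tilde B$ with $C^{k+1,\alpha}$ coefficients that still complements $\tilde L$. Tangential differentiations $\partial_{y_j}$, $j<d$, preserve the boundary; applying each of them to both $\tilde L u=\tilde f$ and $\tilde B u|_{y_d=0}=\tilde g$ produces an elliptic boundary value problem for $\partial_{y_j} u$ with data one order less smooth. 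The inductive hypothesis then yields a $C^{k+1,\alpha}$-bound on every tangential derivative of $u$. The remaining second normal derivative $\partial_{y_d}^2 u$ is recovered algebraically from the equation $\tilde L u = \tilde f$ by isolating its coefficient, which is bounded below by ellipticity, so $u\in C^{k+2,\alpha}$ locally. Summing the local bounds against the partition of unity and re-expressing the right-hand side in the original coordinates gives~\eqref{eq:higherSchauder}.

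The main obstacle is the boundary bookkeeping: one must verify that after each tangential differentiation the resulting boundary operator still complements $\tilde L$ (so that the ADN estimates remain applicable), and that the H\"older regularity of the coefficients generated by $\Psi$ is exactly what the inductive step requires. The $C^{k+2,\alpha}$-regularity of the boundary in A1) is sharp here: one loses one order of smoothness in the coefficients per chart composition, and another order per tangential derivative, and the budget is precisely enough to close the induction after $k$ steps. Any weaker boundary assumption would break the bootstrap.
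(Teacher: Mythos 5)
Your route differs from the paper's: the paper proves this result by verifying the hypotheses of Theorem 7.3 of Agmon--Douglis--Nirenberg \cite{agmon1959estimates} (ellipticity, the root condition for the principal symbol, the complementing condition, trivially satisfied since there is a single Neumann boundary operator) and quoting the resulting estimate, whereas you re-derive the higher-order estimate by the classical bootstrap (boundary straightening, tangential differentiation of equation and boundary condition, recovering $\partial_{y_d}^2 u$ from the equation). That method is legitimate in principle, but as written your argument has a genuine gap.

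The inequality \eqref{eq:higherSchauder} cannot hold for an arbitrary classical solution: the Neumann problem determines $u$ only up to an additive constant, so adding a large constant blows up the left-hand side while leaving $f$ and $g$ unchanged. Any proof must therefore pass to a normalized representative and control its $C^0$-norm by the data; the paper does this explicitly by choosing the mean-zero solution and using the $C^{2,\alpha}$ bound \eqref{eq:2ndOrderSchauder} of \cref{theorem:Schauder} to absorb the $\|u\|_{C^0}$-term that the ADN estimate (and equally every local Schauder estimate) leaves on the right-hand side. Your induction never performs this step, and the estimates you feed into it do not close without it. Relatedly, your inductive hypothesis is the \emph{global} estimate for the Neumann problem on $\Omega$, yet you apply it to localized problems: on an interior patch the cutoff $\chi$ turns the equation into $\Delta(\chi\,\partial_i u)=\chi\,\partial_i f+2\nabla\chi\cdot\nabla\partial_i u+(\partial_i u)\Delta\chi$, whose right-hand side contains derivatives of $u$ of the order being estimated and which carries no Neumann condition on $\partial\Omega$; on a boundary patch the straightened problem lives on a half-ball, not on $\Omega$. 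So the global statement cannot simply be ``applied locally with a cutoff''; one needs genuine interior and boundary local Schauder estimates (which always include lower-order norms of $u$), interpolation to absorb the commutator terms, and then the normalization step above to remove the residual $\|u\|_{C^0}$. Finally, your stated base case only delivers the $k=0$ assertion ($u\in C^{2,\alpha}$), not $k=1$. The bootstrap can be completed along these lines, but the zeroth-order/normalization issue is a missing idea rather than routine bookkeeping, and it is precisely the point the paper's short proof takes care of.
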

\begin{proof}
During this proof, we use the notation of~\cite{agmon1959estimates}. We apply Theorem 7.3 of this paper in the setting $m=1$, $l_0=2$ and $l=k+2$. All regions $\mathfrak{A}$ and $\mathcal{D}$ are set to $\Omega$ which fulfills the boundary regularity required by the theorem by Assumption A1). We next consider the (principal) symbol $L(\overline{p},p_{d})=|\overline{p}|^2+p_d^2$ in the formal variables $p=(\bar p,p_d)\in\R^{d-1}\times \R$ where we use local coordinates defined by the Hemispehere transformations such that the boundary is located in direction $p_d$. As required in \cite[p. 632]{agmon1959estimates}, for $\overline{p}\not=0$, the equation $L(\overline{p},p_{d})=0$ has exactly one root with positive imaginary part in $p_d$, namely $i|\bar p|$. Also condition (1.1) on the same page is trivially fulfilled. The complementing condition on page 633 of this work is on linear independence of the coefficients of diverse boundary condition operators is trivially fulfilled, as we only have one single boundary operator. The smoothness and boundedness of the boundary condition operator is just given be the regularity of $\vec\eta$. Thus the conditions of Theorem 7.3 in our case are fulfilled and we obtain the Schauder estimate for any solution $\tilde u$
\begin{equation}
    \| \tilde u\|_{C^{2+k,\alpha}(\bar\Omega,\R)}\leq C'(\Omega,d,k,\alpha)\left(\|f\|_{C^{k,\alpha}(\bar\Omega,\R)}+\|g\|_{C^{k+1,\alpha}(\bar\Omega,\R)}+\|\tilde u\|_{C^0(\bar\Omega,\R)}\right). 
\end{equation}
 Let us now consider the family of solutions $\tilde u\rightarrow \tilde u+c=u$ and choose $c$ such that $\int_\Omega u\, \mathrm{d}x=0$. Now use Theorem \ref{theorem:Schauder} to upper bound the $\|u\|_{C^0(\bar\Omega)}$- term on the right hand side by the right hand side of \eqref{eq:2ndOrderSchauder}. Setting $C_1(\Omega,d,k,\alpha)=C(\Omega,d,\alpha)+C'(\Omega,d,k,\alpha)$ and using the obvious inequalities between $C^{k,\alpha}$ H\"older norms for different $k$, we derive the assertion.   
\end{proof}

\subsection{Regular solutions to Beckmann's problem}
The above two theorems can now be applied to our situation with $g=0$ and $f=f_\nu-f_\mu$ in order to obtain a Hölder regular solution to the Beckmann problem. We also mention the derived regularity for the vector field $\xi$ and the flow $\Phi_t$. The following is the main theorem of this section.
\begin{theorem}
\label{theorem:Ck_alphaForV}
    Let the assumptions A1)--A3) be fulfilled for $k\in \N_0$. Then
    \begin{itemize}
        \item [(i)] There exits a potential field $u\in C^{k+2,\alpha}(\overline{\Omega},\R)$  such that $w=\nabla u$ fulfills~\eqref{eq:divergenceConstraint} and $w\in C^{k+1}(\bar \Omega,\R)$ with
        \begin{equation}
            \|w\|_{C^{k+1,\alpha}(\bar\Omega,\R^d)}\leq C_1(\Omega,d,k,\alpha)\,\|f_\mu-f_\nu\|_{C^{k,\alpha}(\bar\Omega,\R)}     
        \end{equation}
        \item[(ii)] In $W^{1,p}(\Omega,\R^d)$, $p\geq 2$, $w=\nabla u$ is the unique solution to the Beckmann problem  \eqref{eq:min_sq_norm_w} under the constraints. \eqref{eq:divergenceConstraint}.
        \item[(iii)] The vector field $\xi$ defined in~\eqref{eq:beckmannVectorField} lies in $C^{k,\alpha}(\Omega,\R^d)$ with $C^{k,\alpha}$-H\"older norm no larger than
        \begin{equation}
        \hspace{-.15cm} \|\xi\|_{C^{k,\alpha}(\bar\Omega\times[0,1],\R^d)}\leq C_2(\Omega,d,k,\alpha) \left(\frac{\max\{\|f_\nu\|_{C^{k,\alpha}(\bar\Omega,\R)},\|f_\mu\|_{C^{k,\alpha}(\bar\Omega,\R)}\}}{\kappa}\right)^{2^k+5}, 
        \end{equation}
        where $\kappa>0$ is defined in \eqref{eq:kappaPositive}.
        \item[(iv)] If $k\geq 1$, $\xi$ generates a flow $\Phi_{0,t}(\xi)$ with flow endpoint $\Phi(\xi)$ which is a transport map, i.e. $\Phi_*\nu=\mu$. Furthermore, $\Phi_t,\Phi\in C^{k,\alpha}(\bar\Omega,\R^d)$. 
    \end{itemize}  
\end{theorem}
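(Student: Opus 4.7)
Part (i) is an immediate consequence of the two Schauder theorems once we check the Fredholm compatibility condition \eqref{eq:FredholmAlternative}: with $f = f_\nu - f_\mu$ and $g = 0$, one has $\int_\Omega (f_\nu-f_\mu)\,\mathrm{d}x = 1-1 = 0$ because $f_\mu, f_\nu$ are probability densities. Thus Theorem \ref{theorem:Schauder} supplies a $C^{2,\alpha}$ solution $u$ (unique modulo a constant), and Theorem \ref{theorem:ShauderHigherOrder} upgrades its regularity to $C^{k+2,\alpha}(\bar\Omega,\R)$ with $\|u\|_{C^{k+2,\alpha}} \leq C_1(\Omega,d,k,\alpha)\,\|f_\nu-f_\mu\|_{C^{k,\alpha}}$. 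Setting $w=\nabla u$ immediately gives $w\in C^{k+1,\alpha}(\bar\Omega,\R^d)$ with the stated estimate, and by construction $w$ satisfies \eqref{eq:divergenceConstraint}.

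For (ii), I would use strict convexity of $J$ and a standard Hilbert-space orthogonality argument. The feasible set $\mathcal{K} = \{\tilde w \in W^{1,p}(\Omega,\R^d) : \mathrm{div}\,\tilde w = f_\nu-f_\mu,\ \tilde w\cdot\vec\eta|_{\partial\Omega}=0\}$ is affine. For any $\tilde w \in \mathcal{K}$, the difference $h := \tilde w - w$ satisfies $\mathrm{div}\, h = 0$ and $h\cdot\vec\eta = 0$ on $\partial\Omega$. The divergence theorem yields
\begin{equation}
\int_\Omega h \cdot w\,\mathrm{d}x = \int_\Omega h \cdot \nabla u\,\mathrm{d}x = -\int_\Omega u\,\mathrm{div}\,h\,\mathrm{d}x + \int_{\partial\Omega} u\,(h\cdot\vec\eta)\,\mathrm{d}S = 0,
\end{equation}
so $\|\tilde w\|_{L^2}^2 = \|w\|_{L^2}^2 + \|h\|_{L^2}^2$, which proves both minimality and strict uniqueness of $w=\nabla u$.

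Part (iii) is the most technical step and is where I expect the bulk of the work. Since $f_t = (1-t)f_\nu + t f_\mu$ is a convex combination, $\|f_t\|_{C^{k,\alpha}} \leq \max\{\|f_\nu\|_{C^{k,\alpha}},\|f_\mu\|_{C^{k,\alpha}}\} =: M$ uniformly in $t\in[0,1]$, and $f_t \geq \kappa$ by \eqref{eq:kappaPositive}. The key lemma is a Hölder bound for the reciprocal: I would prove by induction on $k$ that
\begin{equation}
\|1/f_t\|_{C^{k,\alpha}(\bar\Omega,\R)} \leq C(\Omega,d,k,\alpha)\,(M/\kappa)^{2^k+c_k}
\end{equation}
for some integer sequence $c_k$. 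The base case $k=0$ uses $|1/f_t(x)-1/f_t(y)| = |f_t(y)-f_t(x)|/(f_t(x)f_t(y)) \leq \|f_t\|_{C^{0,\alpha}}|x-y|^\alpha/\kappa^2$. The induction step applies the quotient rule together with the multiplicative Hölder inequality $\|fg\|_{C^{k,\alpha}} \leq C\|f\|_{C^{k,\alpha}}\|g\|_{C^{k,\alpha}}$, which doubles the exponent at each stage and is the source of $2^k$. Combining this reciprocal estimate with the submultiplicativity of Hölder norms and the bound from (i) yields
\begin{equation}
\|\xi(\cdot,t)\|_{C^{k,\alpha}} \leq C\,\|w\|_{C^{k,\alpha}}\,\|1/f_t\|_{C^{k,\alpha}} \leq C_2(\Omega,d,k,\alpha)\,(M/\kappa)^{2^k+5},
\end{equation}
and the constants are uniform in $t$, giving the desired joint estimate on $\bar\Omega\times[0,1]$.

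Finally for (iv), when $k\geq 1$ part (iii) yields $\xi \in C^{1,\alpha}(\bar\Omega\times[0,1],\R^d)$, so $\xi(\cdot,t)$ is globally Lipschitz on $\bar\Omega$ with Lipschitz constant uniform in $t$. Picard–Lindelöf produces a unique flow $\Phi_{0,t}$ on $[0,1]$, and the tangential boundary condition $\xi\cdot\vec\eta=0$ guarantees trajectories remain in $\bar\Omega$. By construction $\xi$ was chosen in \eqref{eq:beckmannVectorField} precisely so that the continuity equation \eqref{eq:continuityEquation} is satisfied by the interpolation $f_t = (1-t)f_\nu + tf_\mu$; uniqueness of solutions of the continuity equation driven by a $C^1$-vector field (which can be read off via the characteristics provided by $\Phi_{0,t}$) then forces $(\Phi_{0,t})_*\nu = f_t\,\mathrm{d}x$. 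Evaluating at $t=1$ gives $\Phi_*\nu=\mu$. The Hölder regularity $\Phi_t, \Phi \in C^{k,\alpha}(\bar\Omega,\R^d)$ follows from classical results on smooth dependence of flows on initial data for $C^{k,\alpha}$ vector fields by successive differentiation of \eqref{eq:flow}. The main obstacle throughout is the bookkeeping in (iii); once the reciprocal estimate is established, the remaining assertions are routine applications of the potential-theoretic machinery and the standard ODE flow theory.
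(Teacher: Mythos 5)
Your proposal is correct and follows essentially the same route as the paper: Schauder theory with the probability-density compatibility condition for (i), a convexity/orthogonality uniqueness argument for (ii) (your Pythagorean identity is equivalent to the paper's observation that $j(\tau)=J((1-\tau)w+\tau w')$ is a strictly convex quadratic whose derivative vanishes at $\tau=0$), Hölder product/quotient estimates for $w/f_t$ in (iii) (which the paper outsources to a cited proposition rather than proving by induction), and standard Lipschitz ODE flow theory plus the continuity equation for (iv). The only unfinished point is the bookkeeping that your reciprocal estimate indeed yields the exponent $2^k+5$, which matches the level of detail the paper itself leaves to the cited reference.
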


\begin{proof}
 (i) We  apply Theorem \ref{theorem:Schauder} for $g=0$ since by assumption A4) and $f=f_\nu-f_\mu$,  $\int_\Omega f\, \mathrm{d}x=0$ meets the conditions of Theorem \ref{theorem:Schauder}. This gives us $u\in C^{2,\alpha}(\bar\Omega,\R^d)$. For $k\geq 1$, apply Theorem \ref{theorem:ShauderHigherOrder} in addition.  From the $C^{k+2,\alpha}$ regularity of $u$, the $C^{k+1,\alpha}$ regularity of $w$ follows.

 (ii) Clearly, $w\in W^{1,p}(\Omega,\R^d)$ where $w$ is the vector field from (i). Let $w'\in W^{1,p}(\Omega,\R^d)$ fulfill \eqref{eq:divergenceConstraint} and assume $w'\not=w$. Obviously, $j(\tau)= J((1-\tau)w+\tau w')$ is a second order polynomial with non vanishing quadratic coefficient $J(w-w')>0$. Thus, $j(\tau)$ has a global and unique minimum on $\R$ where the first derivative vanishes. By the 1st order optimality condition \eqref{eq:first_order_opt_c} evaluated in the direction $w'-w$, the first derivative vanishes for $\tau=0$. Thus, $J(w)=j(0)<j(1)=J(w')$. 
 (iii)  $f_t$ is linear affine in $t$ and thus $f_t(x)$ is a Hölder function in $C^{k,\alpha}(\overline{\Omega}\times [0,1],\R)$ by assumption A3). Also $f_t(x)\geq \kappa$ by assumption A2). We can thus apply (i) and \cite[Proposition A7]{asatryan2023convenient} which provide Hölder continuity for quotients and products to conclude. 

 (iv) This follows from (ii) and the fact that for $k\geq 1$, the function $\xi \in C^{k,\alpha}(\bar\Omega\times[0,1],\R)$ is differentiable with bounded first derivative. In particular, it is Lipschitz. By the condition $w\cdot \vec\eta=0$ on $\partial\Omega$, we also obtain $\xi\cdot \vec\eta=0$ at the boundary. Hence the flow $\Phi_{0,t}(x)$ applied on some point $x\in\Omega$ never leaves $\Omega$. Along the trajectory, the Lipschitz constant of $\xi$ thus is bounded. We can thus apply standard ODE theory~\cite{HeuserHarro} to prove global existence of a flow $\Phi_{0,t}(x)$ for $x\in\Omega$ and $t\in[0,1]$. That the flow endpoint is a transport map follows from~\eqref{eq:densityFlow}--\eqref{eq:beckmannVectorField}. The $C^{k,\alpha}$-regularity of the flow follows from \cite[Theorem 6]{effland2020convergence} for $k=1$. As the authors remark in their proof, their argument can be easily iterated by applying it to equations like
 \begin{equation}
 \frac{\mathrm{d}}{\mathrm{d}t}\left(D_x \Phi_{s,t}(x)\right)=D_x \xi(\Phi_{s,t}(x),t) \left(D_x \Phi_{s,t}(x)\right)
 \end{equation}
 and higher order analogues, to derive higher $C^{k,\alpha}$-Hölder regularity for $k\geq 2$.
\end{proof}

\section{Generative Learning with Neural ODEs}
Generative learning algorithms aim at sampling from a probability distribution $\mu \in \mathcal{M}_{+}^{1}(\Omega)$ (called “target measure”) over some domain $\Omega \subset \mathbb{R}^{d}$ after having observed a finite amount of data $X_{1}, \ldots, X_{n} \sim \mu$.
Oftentimes, this is done by modeling a parametric distribution $\mu_{\theta} \in \mathcal{M}_{+}^{1}(\Omega)$ where $\theta \in \Theta \subset \mathbb{R}^{q}$ such that $\mu_{\theta}$ is close to $\mu$ in some metric or distance function $\mathrm{d}$ on $\mathcal{M}_{+}^{1}(\Omega)$.
A variety of models (such as GANs, normalizing flows, VAEs or neural ODEs) achieve approximate sampling from $\mu$ by modeling a transport map $\Phi^{\theta}:\Omega \to \Omega$ such that $\Phi_{*}^{\theta} \nu$ is close to $\mu$.
Here, $\nu \in \mathcal{M}_{+}^{1}(\Omega)$ is some probability measure which is easy to generate samples from, called the source measure.

\subsection{Neural ODEs and Runge-Kutta Integration}
The assumption under which neural ODEs are designed is that $\mu= \Phi_{*}\nu$ where $\Phi = \Phi_{0, 1}$ is the end point of a flow map $\Phi_{t_{0},t}$ associated with an ordinary differential equation 
\begin{equation}
    \frac{\mathrm{d}}{\mathrm{d}t} y(t) = \xi(y(t), t)
\end{equation}
given the initial condition $y(t_{0}) = y_{0} \in \mathbb{R}^{d}$ for some vector field $\xi: \mathbb{R}^{d} \times \mathbb{R} \to \mathbb{R}^{d}$ with the solution $y: \mathbb{R} \to \mathbb{R}^{d}, y(t) = \Phi_{t_{0}, t}(y_{0}; \xi)$.
Here, $\Phi_{t_0, t}(\cdot; \xi) = \Phi_{t_0, t}(\xi): y_0 \mapsto y(t)$ denotes the flow map generated by $\xi$.
Since we will regularly consider flow endpoints for time $t = 1$, we shall abbreviate $\Phi(\xi):= \Phi_{0,1}(\xi)$.
What is modeled by the machine learning component is the vector field, i.e., $\eta_{\theta}: \mathbb{R}^{d} \to \mathbb{R}^{d}$ is a parametric model where the flow of the non-autonomous ODE above is oftentimes approximated by a sequence of flows associated with the autonomous ODE generated by $\eta_{\theta}$.
Choosing $\Phi$ to be the solution to an ODE has the advantage that we can easily obtain samples from $\Phi_{*}\nu$ by sampling $z \sim \nu$ and solving backward in time by $x = (\Phi(\eta_\theta))^{-1}(z)$.
However, each evaluation of $\Phi_{0,1}(\eta_\theta)$ requires numerical integration of the vector field $\eta_{\theta}$, e.g., by Runge-Kutta methods introducing numerical errors.

Given a vector field $\xi: \mathbb{R}^d \times \mathbb{R} \to \mathbb{R}^d$ and an initial condition $(t_0,y_0) \in \mathbb{R}\times \mathbb{R}^d$,
the explicit Runge-Kutta method of second order is defined by
\begin{equation}
    \Psi^{2,h}_{t_0}(y_0;\xi) := y_0 + h \cdot \xi \left( y_0 + \frac{h}{2} \cdot \xi (y_0, t_0), t_0 + \frac{h}{2}\right).
\end{equation}
We shall also write $\psi_{t}^h(y;\xi):= h \xi (y + \tfrac{h}{2} \cdot \xi(y,t), t + \tfrac{h}{2})$, i.e., $\Psi_t^{2,h}(y;\xi) = [\mathrm{id} + \psi_t^h(\cdot;\xi)](y)$.
Numerical integration from time $t_0$ to $t > t_0$ can be performed in an interative manner of $m$ steps with step size $h = \frac{t - t_0}{m}$ via composition
\begin{equation}\label{eq:runge-kutta-composition}
    \Psi^{p, h}_{t_0, t}(y_0; \xi) = \Psi^{p,h}_{t_0 + (m - 1)h}(\cdot; \xi) \circ \Psi^{p,h}_{t_0 + (m-2)h}(\cdot; \xi) \circ \cdots \circ \Psi^{p,h}_{t_0} (y_0; \xi),
\end{equation}
where $p = 1,2$.
We call this the $m$-th iterate of the explicit Runge-Kutta method of order $p$.
In the following, we will denote $\Psi_{t_0,t}^h:= \Psi_{t_0, t}^{2,h}$ and the approximate flow endpoint $\Psi^h := \Psi_{0,1}^h$.
The following bound on Runge-Kutta approximations holds.
\begin{theorem}{\cite[Chapter II.3, Thm.\ 3.4]{SolvingODE}}
\label{theorem:discretized_ode-flow-bound}
    Let $\xi \in C^2(\mathbb{R}^d \times \mathbb{R}; \mathbb{R}^d)$ be locally Lipschitz and $t > t_0 \in \mathbb{R}$, $y_0 \in \mathbb{R}^d$ such that $\Phi_{t_0, t}(y_0; \xi)$ exists uniquely.
    Then, for the $m$-th iterate of the explicit Runge-Kutta method of order $p = 1,2$ with step size $h = \frac{t - t_0}{m}$, we have that
    \begin{equation}
        \| \Phi_{t_0,t}(y_0;\xi) - \Psi_{t_0, t}^{p, h}(y_0; \xi)\| \leq h^p \frac{C(\xi)}{\Lip_y(D\xi)} \left( \mathrm{e}^{\Lip_y(D\xi) \cdot |t - t_0|} - 1 \right),
    \end{equation}
    where $\Lip_y(D\xi)$ denotes the Lipschitz constant of the spatial Jacobian of $\xi$ and $C(\xi) = 2\|\xi\|_{C^2}$.
\end{theorem}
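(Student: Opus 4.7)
The plan is to use the classical consistency-plus-stability framework: decompose the global error into per-step local truncation errors, each propagated to the endpoint by the exact flow, and then sum the resulting geometric series. Denote the numerical iterates by $y^k := \Psi_{t_0,\,t_0+kh}^{p,h}(y_0;\xi)$ and the local error of a single RK step applied to the exact solution started at the previous numerical iterate by
\begin{equation*}
\tau_k := \Phi_{t_0+kh,\,t_0+(k+1)h}(y^k;\xi) - y^{k+1}, \qquad k = 0,\ldots, m-1.
\end{equation*}
Setting $E_k := \Phi_{t_0+kh,\,t}(y^k;\xi)$, one has $E_0 = \Phi_{t_0,t}(y_0;\xi)$ and $E_m = y^m = \Psi_{t_0,t}^{p,h}(y_0;\xi)$, and the semigroup property $\Phi_{t_0+kh,t} = \Phi_{t_0+(k+1)h,t}\circ \Phi_{t_0+kh,\,t_0+(k+1)h}$ yields the telescoping identity
\begin{equation*}
\Phi_{t_0,t}(y_0;\xi) - \Psi_{t_0,t}^{p,h}(y_0;\xi) = \sum_{k=0}^{m-1} \bigl[\Phi_{t_0+(k+1)h,\,t}(y^{k+1}+\tau_k;\xi) - \Phi_{t_0+(k+1)h,\,t}(y^{k+1};\xi)\bigr].
\end{equation*}

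The first ingredient is the local consistency bound $\|\tau_k\| \leq \tilde C \, h^{p+1}$. For explicit Euler ($p=1$), Taylor's theorem yields $\Phi_{s,s+h}(y;\xi) = y + h\xi(y,s) + \tfrac12 h^2 \ddot y(s^*)$ with $\ddot y = \partial_t \xi + D_y \xi \cdot \xi$, matching the Euler step $y + h\xi(y,s)$ to first order and leaving a quadratic remainder bounded by $\|\xi\|_{C^2}$- and $\|\xi\|_{C^1}$-type quantities. For the midpoint rule ($p=2$) one Taylor-expands both $\Phi_{s,s+h}(y;\xi)$ and the midpoint formula $y + h\,\xi\bigl(y + \tfrac{h}{2}\xi(y,s), s + \tfrac{h}{2}\bigr)$ to third order in $h$; the defining choice of the midpoint coefficients forces the $h^0,\,h^1,\,h^2$ terms to cancel, leaving a cubic remainder controlled by $\|\xi\|_{C^2}$ and $\Lip_y(D\xi)$. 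In either case the resulting constant can be absorbed into $C(\xi) = 2\|\xi\|_{C^2}$ as stated. The second ingredient is the Lipschitz stability of the exact flow: a standard Gr\"onwall argument applied to $\partial_t(\Phi_{s,t}(y;\xi) - \Phi_{s,t}(y';\xi)) = \xi(\Phi_{s,t}(y;\xi),t) - \xi(\Phi_{s,t}(y';\xi),t)$ gives $\|\Phi_{s,t}(y;\xi) - \Phi_{s,t}(y';\xi)\| \leq \e^{L(t-s)}\|y-y'\|$, where $L$ is a spatial Lipschitz constant of $\xi$ that, under the $C^2$ hypothesis along the relevant trajectory, may be taken as $L := \Lip_y(D\xi)$.

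Combining the two ingredients with the telescoping identity, each summand is bounded by $\e^{L(t-t_0-(k+1)h)}\|\tau_k\|$, so
\begin{equation*}
\|\Phi_{t_0,t}(y_0;\xi) - \Psi_{t_0,t}^{p,h}(y_0;\xi)\| \leq \tilde C \, h^{p+1} \sum_{k=0}^{m-1} \e^{L(t-t_0-(k+1)h)} = \tilde C \, h^{p+1}\, \frac{\e^{L|t-t_0|}-1}{\e^{Lh}-1},
\end{equation*}
and the elementary inequality $\e^{Lh}-1 \geq Lh$ converts this into $\tfrac{C(\xi)}{L}\,h^p\,(\e^{L|t-t_0|}-1)$, the advertised bound. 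The main technical obstacle is the Taylor bookkeeping for the RK2 consistency estimate: verifying term by term that the zeroth-, first-, and second-order coefficients of $\Phi_{s,s+h}$ and the midpoint step $\Psi_s^{2,h}$ agree, and tracking the precise dependence of the cubic remainder on $\|\xi\|_{C^2}$ so that the clean constant $2\|\xi\|_{C^2}$ emerges. This is exactly the classical second-order Butcher/order-condition computation; once in place, the stability step is a one-line Gr\"onwall estimate and the final summation is a geometric-series calculation.
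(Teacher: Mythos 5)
The paper offers no proof of this statement at all --- it is quoted verbatim (up to notation) from Hairer--N{\o}rsett--Wanner, Ch.~II.3, Thm.~3.4 --- and your Lady-Windermere's-Fan argument (local truncation errors transported to the endpoint by the exact flow, Gr\"onwall stability of the flow, geometric-series summation, and the simplification $\e^{Lh}-1\geq Lh$) is precisely the classical proof given in that reference, so in approach and substance your proposal is correct. The one loose point is your closing identification $L:=\Lip_y(D\xi)$: the Gr\"onwall step requires the spatial Lipschitz constant of $\xi$ itself, i.e.\ a bound on $\|D_y\xi\|$, which is not comparable in general to the Lipschitz constant of the Jacobian (a linear vector field has $\Lip_y(D\xi)=0$ but arbitrarily large $\|D_y\xi\|$); this mismatch, like the glossed absorption of the RK2 local-error constant (which involves products of $\|\xi\|_{C^1}$- and $\|\xi\|_{C^2}$-type quantities) into $C(\xi)=2\|\xi\|_{C^2}$, originates in the paper's transcription of the cited theorem rather than in your argument, which establishes the bound with the constants the source actually uses.
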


\subsection[Neural Networks with ReQU Activation]{Neural Networks with \texttt{ReQU} Activation}
Neural ODEs are typically implemented as neural networks.
Here, we fix our notation in order to apply a central universal approximation result~\cite[Theorem 2]{simultaneous_approx}.
We are mostly concerned with fully connected neural networks which are composed of layers $\lambda_{w,b}^\sigma: \R^l \to \R^m$ of the form 
\begin{equation}\label{eq:layer-function}
    \lambda_{w,b}^\sigma(y) = \sigma(w y + b)
\end{equation}
where $w \in \R^{m \times l}$ and $b \in \R^m$ are called the weight matrix and bias, respectively.
The function $\sigma: \R \to \R$ is understood to be applied to each component and will equal the \texttt{ReQU} function $\mathtt{ReQU}(x) = (\mathtt{ReLU}(x))^2 =  (x\vee 0)^2$ in most cases.
The dimensions $m$ is called the input size and $l$ is called the width of $\lambda_{w,b}$.
For $L \in \N$, we may compose $L$ layers with width parameters $\ell = (\ell_1,\ldots, \ell_L) \in \N^{L}$ to a neural network
\begin{equation}
\eta_\theta : \R^d \to \R^{\ell_L}, \qquad y \mapsto \lambda_{w_L, 0}^{\mathrm{id}} \circ \lambda_{w_{L-1}, b_{L-1}}^{\mathtt{ReQU}} \cdots \circ \lambda_{w_1,b_1}^{\mathtt{ReQU}} (y).
\end{equation}
Here, $w_i \in \R^{\ell_{i-1} \times \ell_i}$ and $b_i \in \R^{\ell_i}$ for $i = 1, \ldots, L$ and input size $\ell_0 := d$ which we fix throughout the remainder of this work.
We call $L$ the depth and $W(\ell):= \max\{\ell_1, \ldots, \ell_L\}$ the width of $\eta_\theta$.
Our learnability proof will only take the depth $L$ and total width $W$ of neural networks into consideration but not the widths $\{\ell_1, \ldots, \ell_L\}$ of the individual layers.
The weight and bias matrices will in the following be regarded as collected into one parameter vector $\theta = (w_1, b_1, \ldots, w_L, b_L) \in \Theta \subset \R^q$ for some fixed $q \in \N$.
We will further denote $\theta_\ell := (w_\ell, b_\ell)$.
We define the class 
\begin{equation}
    \mathcal{N \! N}(L, W) = \{\eta_\theta :\R^d \to \R^W | \ell \in \N^L, W(\ell) \leq W\}
\end{equation}
where for $\ell_j < W$ and $\theta \in \ThetaLW := [-1, 1]^{L \times W \times (W+1)}$, we regard $w_j$ and $b_j$ as embedded in $[-1, 1]^{W \times W}$ and $[-1, 1]^W$, respectively.
In the context of $\eta_\theta \in \mathcal{N\! N}(L,W)$, we shall also write $\Phi_\theta := \Phi(\eta_\theta)$, $\Psi^h_\theta := \Psi^h(\eta_\theta)$ and $\psi_{t}^{\theta,h}(y) := \psi_t^h(y; \eta_\theta)$.

The remainder of this section will establish various Lipschitz properties which will be essential in deriving learning rates for neural ODEs.

\begin{lemma}{(Parameter Lipschitz Continuity of $\theta \mapsto \eta_\theta$ on Bounded Input Sets)}
\label{lem:dnn_lipschitz_in_parameters}
    Let $\Omega \subset \R^d$ be a bounded set and $z_0 \in \Omega$, $L, W \in \N$.
    The mapping $\eta_{(\cdot)}(z_0): \ThetaLW \to \R^W$ with $\theta \mapsto \eta_\theta(z_0)$ is Lipschitz continuous with constant 
    \begin{align}
    \widetilde{\Lambda}_\eta^\Theta = 4 L (2W)^{2^{L + 2} + 2L - 3} \left(\sup_{z \in \Omega} \|z\|_2\right)^{2^{L + 1}}.
    \end{align}
\end{lemma}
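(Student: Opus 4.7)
The plan is to proceed by induction on layer depth, simultaneously maintaining two quantities: a uniform bound $a_\ell$ on the norm $\|y_\ell(\theta)\|_2$ of the $\ell$-th intermediate activation (valid for every $\theta \in \ThetaLW$) and a recursive bound on $e_\ell := \|y_\ell(\theta) - y_\ell(\theta')\|_2$ in terms of $\|\theta - \theta'\|_2$. The quadratic nature of $\mathtt{ReQU}$ will cause both quantities to grow doubly exponentially in $\ell$, and evaluating at $\ell = L$ will yield the stated Lipschitz constant. For the activation size, the constraint $\theta \in [-1, 1]^{L \times W \times (W+1)}$ yields $\|w_\ell\|_{\mathrm{op}} \leq \|w_\ell\|_F \leq W$ and $\|b_\ell\|_2 \leq \sqrt{W}$, so $\|w_\ell y_{\ell-1}(\theta) + b_\ell\|_2 \leq 2W \max(a_{\ell-1}, 1)$. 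Combined with the pointwise estimate $\mathtt{ReQU}(x) \leq x^2$, which implies $\|\mathtt{ReQU}(v)\|_2 \leq \|v\|_2^2$, this gives the recursion $a_\ell \leq (2W)^2 a_{\ell-1}^2$; iterating from $a_0 \leq \sup_{z \in \Omega}\|z\|_2$ produces a bound of the form $a_\ell \leq (2W)^{2^{\ell+1}-2}(\sup_z \|z\|_2)^{2^\ell}$.

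For the parameter-Lipschitz step, I use the elementary pointwise estimate $|\mathtt{ReQU}(s) - \mathtt{ReQU}(t)| \leq (|s|+|t|)|s-t|$ (immediate from $\mathtt{ReQU}'(x) = 2x_+$), giving $\|\mathtt{ReQU}(v) - \mathtt{ReQU}(v')\|_2 \leq 2M_\ell\,\|v - v'\|_2$, where $M_\ell$ is a uniform bound on the layer's preactivations. The splitting
\begin{equation*}
    v_\ell - v'_\ell = w_\ell\bigl(y_{\ell-1}(\theta) - y_{\ell-1}(\theta')\bigr) + (w_\ell - w'_\ell)\,y_{\ell-1}(\theta') + (b_\ell - b'_\ell),
\end{equation*}
combined with $\|w_\ell - w'_\ell\|_F + \|b_\ell - b'_\ell\|_2 \leq \sqrt{2}\,\|\theta_\ell - \theta'_\ell\|_2$, produces a linear recursion $e_\ell \leq \alpha_\ell\, e_{\ell-1} + \beta_\ell\,\|\theta_\ell - \theta'_\ell\|_2$, with coefficients $\alpha_\ell$ and $\beta_\ell$ polynomial in $W$ and $a_{\ell-1}$; the final linear layer $y_L = w_L y_{L-1}$ is handled by an analogous but simpler step. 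Starting from $e_0 = 0$ and using $\|\theta_\ell - \theta'_\ell\|_2 \leq \|\theta - \theta'\|_2$ in each summand, the recursion unrolls to $e_L \leq \bigl(\sum_{\ell=1}^L C_\ell\bigr)\|\theta - \theta'\|_2$, where each $C_\ell = \beta_\ell \prod_{k > \ell} \alpha_k$ is a telescoping product.

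The principal difficulty is purely bookkeeping rather than conceptual. The products $\prod_{k>\ell} \alpha_k$ involve geometric series of the form $\sum_k 2^k$ in the exponents of $a_k$, and one must carefully absorb all residual numerical factors (various powers of $W$, $\sqrt{W}$, $\sqrt{2}$, $4$, \ldots) into a single clean power of $2W$, making use of $W \geq 1$. The outer factor $4L$ in the claimed bound arises from summing $L$ per-layer contributions and rounding constants upward; the cumulative doubly exponential growth across $L$ layers produces the exponents $2^{L+2}$ on $2W$ and $2^{L+1}$ on $\sup_z \|z\|_2$; and the extra additive $2L - 3$ in the exponent of $2W$ accounts for the single $W$-factor contributed by the affine part of each of the $L$ layers beyond what the $\mathtt{ReQU}$ growth already provides.
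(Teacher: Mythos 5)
Your proposal is correct and follows essentially the same route as the paper's proof: you use the identical quadratic range recursion (your $a_\ell$ is the paper's $R_\ell = (2W)^2R_{\ell-1}^2$), the same per-layer Lipschitz estimates in the input and in the layer parameters coming from the local Lipschitz property of $\mathtt{ReQU}$, and your unrolled linear recursion $e_L \leq \sum_{\ell}\beta_\ell\prod_{k>\ell}\alpha_k\,\|\theta-\theta'\|_2$ is exactly the telescoping sum the paper obtains by swapping one layer's parameters at a time via the hybrid vectors $\widetilde\theta_\ell$. The remaining differences are only bookkeeping of the non-tight constants, which the paper treats with the same level of looseness.
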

\begin{proof}
Let $\theta, \theta' \in \ThetaLW$, $R_0 := \sup_{z \in \Omega} \|z\|_2$ and for any $\ell \in \N$, $\ell \geq 1$, let 
\begin{equation}
    R_\ell := (2W)^2 R_{\ell - 1}^2 = (2W)^{2^{\ell + 1} - 2} R_0^{2^\ell}
\end{equation}
denote the range that can be reached by layer $\ell$.
By the Cauchy-Schwarz inequality, we have for $z_{\ell - 1} \in \overline{B}_r(0)$ with radius $r \geq 1$
\begin{align}
\label{upper_bound_eta_theta}
    \begin{split}
    \|\lambda_{\theta_\ell}^{\mathtt{ReQU}}(z_{\ell-1})\|_2^2 
    \leq& \langle w_\ell z_{\ell-1} + b_{\ell}, w_\ell z_{\ell-1} + b_{\ell} \rangle_2
    \leq \|w_\ell z_{\ell-1} + b_{\ell}\|_2^2\\
    &\leq \left( \|w_\ell \|_2 r + \|b_{\ell}\|_2 \right)^2
    \leq (r + 1)^2 W^2
    \leq (2W)^2 r^2.
    \end{split}
\end{align}
The \texttt{ReQU}-function is Lipschitz continuous on the ball $\overline{B}_{r}(0)$ with Lipschitz constant $2r$, such that with $\widetilde{z}_{\ell - 1} \in \overline{B}_r(0)$ we have the Lipschitz continuity of $z_{\ell - 1} \mapsto \lambda_{\theta_\ell}^{\mathtt{ReQU}}(z_{\ell - 1})$
\begin{align}
\begin{split}\label{eq:layer-lipschitz-z}
    \|\lambda_{\theta_\ell}^{\mathtt{ReQU}}(z_{\ell-1}) - \lambda_{\theta_\ell}^{\mathtt{ReQU}}(\widetilde{z}_{\ell-1}) \|_2 
    &\leq 2 \cdot (2W r) \|w_\ell z_{\ell-1}+ b_{\ell} - (w_\ell \widetilde{z}_{\ell-1} + b_{\ell})\|_2 \\
    &\leq (2W)^2 r\|z_{\ell-1} - \widetilde{z}_{\ell-1}\|_2
\end{split}
\end{align}
and for fixed $z_{\ell - 1}$ also Lipschitz continuity of $\theta_\ell \mapsto \lambda_{\theta_\ell}^{\mathtt{ReQU}}(z_{\ell - 1})$
\begin{align}
    \begin{split}\label{eq:layer-lipschitz-theta}
    \|\lambda_{\theta_\ell}^{\mathtt{ReQU}}(z_{\ell-1}) - \lambda_{\theta_\ell'}^{\mathtt{ReQU}}(z_{\ell-1})\|_2 
    &\leq 2 \cdot (2W R_{\ell-1}) \left( \|w_\ell  - w_\ell' \|_2 \cdot \|z_{\ell-1}\|_2+ \|b_{\ell} - b_{\ell}'\|_2 \right) \\
    &\leq 2 \cdot (2W R_{\ell - 1}) 2 R_{\ell-1} \|\theta-\theta^\prime\|_2 
    = 8 W R_{\ell - 1}^2\|\theta-\theta^\prime\|_2.
    \end{split}
\end{align}
We set $\widetilde{\theta}_\ell := (\theta_1, \dots, \theta_\ell, \theta^\prime_{\ell+1}, \dots, \theta^\prime_{L})$ with $\widetilde{\theta}_0 := \theta'$ and obtain for $z_0 \in \Omega_d$ and $z_\ell := \lambda_{\theta_\ell}^{\mathtt{ReQU}} \circ \dots \circ \lambda_{\theta_1}^{\mathtt{ReQU}}(z_0) \in \overline{B}_{R_\ell}(0)$ by \cref{eq:layer-lipschitz-z,eq:layer-lipschitz-theta}
\begin{align}
    \begin{split}
    \|\eta_\theta(z_0) - \eta_{\theta^\prime}(z_0)\|_2 &\leq \sum_{\ell=1}^L \|\lambda_{\widetilde{\theta}_\ell}^{\mathtt{ReQU}}(z_0) - \lambda_{\widetilde{\theta}_{\ell-1}}^{\mathtt{ReQU}}(z_0)\|_2 \\
    &\leq  \sum_{\ell=1}^L ((2W)^2R_{\ell-1})^{L-\ell} \|\lambda_{\theta_\ell}^{\mathtt{ReQU}} (z_{\ell-1})-\lambda_{\theta^\prime_\ell}^{\mathtt{ReQU}} (z_{\ell-1})\|_2 \\
    &\leq 4 L (2W)^{2L - 1}R_{L-1}^{L+1}\|\theta-\theta^\prime\|_2.
    \end{split}
\end{align}
\end{proof}
In particular, we will approximate vector fields $\xi: \Omega := \Omega_d \times [0,1] \to \R^{d}$ such that $\sup_{z \in \Omega} \|z\|_2 = (1 + \tfrac{1}{4})^{\frac{1}{2}}$.
Also, by \cref{eq:layer-lipschitz-z}, we have the following conclusion:
\begin{corollary}{(Lipschitz Continuity of $z \mapsto \eta_\theta(z)$ on Bounded Input Sets)}\label{cor:dnn-lipschitz-z}
    Let $\Omega\subset \R^d$ be a bounded set, $L,W \in \N$ and $\theta \in \ThetaLW$.
    The mapping $\eta_\theta: \Omega \to \R^W$ with $z \mapsto \eta_\theta(z)$ is Lipschitz continuous with constant $\widetilde{\Lambda}_\eta^\Omega = (2W)^{2L} \prod_{\ell=1}^L R_{\ell - 1}$.
\end{corollary}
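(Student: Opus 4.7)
The plan is to derive the corollary directly from the single-layer input-Lipschitz estimate~\eqref{eq:layer-lipschitz-z} that was already proven during the proof of Lemma~\ref{lem:dnn_lipschitz_in_parameters}, combined with the radius-propagation bound~\eqref{upper_bound_eta_theta}. Both ingredients are in hand, so no new analytical work is required; the argument is essentially a chain-rule composition across the $L$ layers of $\eta_\theta$.

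Concretely, I would fix two inputs $z_0, \widetilde z_0 \in \Omega$ and track their activations $z_\ell := \lambda_{\theta_\ell}^{\mathtt{ReQU}} \circ \dots \circ \lambda_{\theta_1}^{\mathtt{ReQU}}(z_0)$ and $\widetilde z_\ell$ layer by layer. Iterating the sup-norm bound~\eqref{upper_bound_eta_theta} shows that both $z_{\ell-1}$ and $\widetilde z_{\ell-1}$ lie in the ball $\overline{B}_{R_{\ell-1}}(0)$, where the radii $R_\ell = (2W)^2 R_{\ell-1}^2$ are precisely those introduced in the preceding proof. Applying~\eqref{eq:layer-lipschitz-z} with $r = R_{\ell - 1}$ then yields, for each $\ell = 1, \ldots, L$,
\[ \|z_\ell - \widetilde z_\ell\|_2 \leq (2W)^2 R_{\ell - 1} \cdot \|z_{\ell - 1} - \widetilde z_{\ell - 1}\|_2. \]

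Telescoping this per-layer inequality over $\ell = 1, \ldots, L$ (and absorbing the final linear layer into the same form of bound, since its Lipschitz constant $\|w_L\|_2 \leq W$ is dominated by $(2W)^2 R_{L-1}$ under the standing parameter normalization $\theta \in \ThetaLW$) produces
\[ \|\eta_\theta(z_0) - \eta_\theta(\widetilde z_0)\|_2 \leq \prod_{\ell = 1}^L (2W)^2 R_{\ell - 1} \cdot \|z_0 - \widetilde z_0\|_2 = (2W)^{2L} \prod_{\ell = 1}^L R_{\ell - 1} \cdot \|z_0 - \widetilde z_0\|_2, \]
which is exactly the claimed Lipschitz constant $\widetilde\Lambda_\eta^\Omega$.

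The main (and really only) subtlety is the radius bookkeeping: the per-layer bound~\eqref{eq:layer-lipschitz-z} depends on the radius of the ball containing the layer's inputs, and this radius grows rapidly in $\ell$ due to the squaring behaviour of $\mathtt{ReQU}$. However, the $R_\ell$ were defined precisely to capture this growth, so the chain of inequalities closes cleanly with no further estimation required. I therefore do not anticipate any genuine obstacle beyond careful notational tracking.
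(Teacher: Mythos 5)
Your proposal is correct and is essentially the paper's own argument: the paper's proof is exactly the application of \eqref{eq:layer-lipschitz-z} layer by layer with radii $R_{\ell-1}$, telescoped over $\ell = 1,\dots,L$ to give $(2W)^{2L}\prod_{\ell=1}^L R_{\ell-1}$. Your extra remark about the final (identity-activation) layer being dominated by the same per-layer bound is a harmless refinement of the same route.
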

\begin{proof}
    Apply \cref{eq:layer-lipschitz-z} to each layer of $\eta_\theta$.
\end{proof}

\begin{remark}
    For future proofs, we note that the Lipschitz constant of \cref{cor:dnn-lipschitz-z} is upper bounded by
\begin{equation}
    \label{eq:dnn-lipschitz-constant-hypothsis-space}
    \widetilde{\Lambda}_\eta^{\Omega} \leq ((2W)^2 R_{L-1})^L =: \Lambda.
\end{equation}
    Further, we also have $\widetilde{\Lambda}_\eta^\Theta \leq \Lambda^2$.
    Bounding the spatial Lipschitz constant of $\eta_\theta$ is particularly relevant for the bijectivity of the solver \eqref{eq:runge-kutta-composition} due to the following result.
\end{remark}
\begin{lemma}{(Bijectivity of Lipschitz Perturbations of the Identity)}
    \label{RK bijective}
    Let $g \colon \R^d \to \R^d$ be a Lipschitz continuous function with $\mathrm{Lip}(g) <1$. Then, $f := \mathrm{id} + g$ is bijective.
\end{lemma}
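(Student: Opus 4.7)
The plan is to prove injectivity and surjectivity separately, both following from the Lipschitz contraction property of $g$.

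For injectivity, I would argue directly: if $f(x) = f(y)$, then $x - y = g(y) - g(x)$, so taking norms gives $\|x - y\| \leq \mathrm{Lip}(g)\, \|x - y\|$. Since $\mathrm{Lip}(g) < 1$, this forces $\|x - y\| = 0$, i.e., $x = y$.

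For surjectivity, given any target $z \in \R^d$, I would rewrite the equation $f(x) = z$ as the fixed point equation $x = z - g(x)$ and define the auxiliary map $T_z \colon \R^d \to \R^d$ by $T_z(x) := z - g(x)$. Since $\|T_z(x) - T_z(y)\| = \|g(y) - g(x)\| \leq \mathrm{Lip}(g)\, \|x - y\|$ and $\mathrm{Lip}(g) < 1$, the map $T_z$ is a contraction on the complete metric space $\R^d$. The Banach fixed point theorem then yields a (unique) $x \in \R^d$ with $T_z(x) = x$, which is equivalent to $f(x) = z$. Hence $f$ is surjective.

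No step here is really an obstacle — the only ingredients are the triangle/Lipschitz inequality and the Banach fixed point theorem on $(\R^d, \|\cdot\|)$. If anything, the mildly subtle point is recognizing that surjectivity on the unbounded space $\R^d$ still works: completeness of $\R^d$ and contraction of $T_z$ suffice, no compactness or boundary argument is needed. The result in fact yields a bit more, namely that $f$ is a bi-Lipschitz homeomorphism with $\mathrm{Lip}(f^{-1}) \leq (1 - \mathrm{Lip}(g))^{-1}$, but for the statement as given only bijectivity is required.
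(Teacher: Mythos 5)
Your proof is correct and takes essentially the same route as the paper: injectivity via the identical norm estimate, and surjectivity via a fixed-point argument for the map $x \mapsto y - g(x)$. The only inessential difference is that the paper restricts this map to a closed ball before invoking the contraction principle, whereas you apply the Banach fixed point theorem directly on the complete space $\R^d$, which is slightly cleaner.
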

\begin{proof}
This is a simple generalization of Lemma 6.6.6 in~\cite{Ana2} but we state the proof for sake of completeness.
Let $L := \mathrm{Lip}(g)$.
W.l.o.g. we assume $g(0)=0$. Otherwise we consider $\widetilde{g} = g - g(0)$.
On the one hand, for $x,y \in \R^d$ with $f(x) = f(y)$, we have
\begin{equation}
\|x-y\| = \|(f(x) -  g(x)) - (f(y) - g(y)) \|_2 = \|g(x) - g(y)\| \leq L \|x-y\|,
\end{equation}
i.e., $x = y$.
On the other hand, let $x \in \R^d$ and $r > 0$ such that $\|x\| \leq (1 - L)r$.
Let $F \colon \overline{B_r(0)} \to \overline{B_r(0)}, x \mapsto y - g(x)$. 
By assumption, $F$ is $L$-Lipschitz and for $x \in \overline{B_r(0)}$
\begin{equation}
\|F(x)\| \leq \|y\| + \|g(x) - g(0)\| \leq (1-L)r + L \|x-0\| \leq r.
\end{equation}
Therefore, $F$ is a contraction and there is $z \in \R^d$ such that $z + g(z) = f(z)= y$. 
\end{proof}
\begin{remark}
    The discretized flow of a \texttt{ReQU} vector field $\eta_\theta \in \mathcal{N\!N}(L, W)$ is by \eqref{eq:runge-kutta-composition} bijective on $\Omega_d$ given that each step $\Psi_{\theta, t}^h = \mathrm{id} + \psi_{\theta,t}^h$ is.
    For $x,y \in \Omega_d$ and $t \geq 0$, we have
    \begin{equation}
    \|\psi_{\theta, t}^h(x) - \psi_{\theta,t}^h(y)\|_2 \leq h \Lambda (1 + \tfrac{h}{2} \Lambda) \|x - y\|_2.
    \end{equation}
    Bijectivity of $\Psi_\theta^h$ can then be ensured by the requirement that $h \Lambda < \sqrt{3} - 1$, however, for simplicity we will require
    \begin{equation}\label{eq:assumed-h-bound}
        h < \frac{1}{2 \Lambda}.
    \end{equation}
    This condition couples the capacity of the neural network class $\mathcal{N\!N}(L, W)$ to the regularity of the discretized flow.
\end{remark}
Similarly to \cref{lem:dnn_lipschitz_in_parameters}, we find Lipschitz properties for the derivative $D \eta_\theta$ for fixed $\theta \in \ThetaLW$.

\begin{lemma}{(Lipschitz properties of $z \mapsto D\eta_\theta(z)$ on Bounded Input Sets)}\label{lem:lipschitz-Dxi}
Let $\Omega \subset \R^d$ be a bounded set, $L, W \in \N$ and $z \in \Omega$.
The mapping $D \eta_{(\cdot)}(z): \ThetaLW \to \R^W$ with $\theta \mapsto D\eta_\theta(z)$ is Lipschitz continuous with Lipschitz constant 
\begin{equation}
\widetilde{\Lambda}_{D\eta}^\Theta := L\left[(2W)^2 R_{L - 1}\right]^{L-1}
    \cdot(8 W^2 R_{L-1} + 2W^2 \widetilde\Lambda_\eta^\Theta + 2W(R_{L-1} + 1)) .
\end{equation}
Further, for fixed $\theta \in \Theta$ the mapping $z \mapsto D\xi_\theta(z,t)$
is Lipschitz continuous on $z \in \Omega$ with Lipschitz constant 
\begin{equation}
    \widetilde{\Lambda}_{D \eta}^{\Omega} := L \left[ (2W)^2 R_{L-1} \right]^{L-1} W^2 \Lambda.
\end{equation}
\end{lemma}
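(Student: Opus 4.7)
The plan is to differentiate the chain-rule formula for $\eta_\theta$ and then apply the same telescoping strategy used in Lemma \ref{lem:dnn_lipschitz_in_parameters}, but one level up. Writing $z_\ell := \lambda_{\theta_\ell}^{\mathtt{ReQU}} \circ \cdots \circ \lambda_{\theta_1}^{\mathtt{ReQU}}(z_0)$, the chain rule gives
\begin{equation}
D \eta_\theta(z_0) = w_L \cdot D \lambda_{\theta_{L-1}}^{\mathtt{ReQU}}(z_{L-2}) \cdot D \lambda_{\theta_{L-2}}^{\mathtt{ReQU}}(z_{L-3}) \cdots D \lambda_{\theta_1}^{\mathtt{ReQU}}(z_0),
\end{equation}
where $D\lambda_{\theta_\ell}^{\mathtt{ReQU}}(z_{\ell-1}) = \mathrm{diag}(2\,\mathtt{ReLU}(w_\ell z_{\ell-1} + b_\ell)) \cdot w_\ell$. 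An estimate fully analogous to \eqref{upper_bound_eta_theta} yields $\|D\lambda_{\theta_\ell}^{\mathtt{ReQU}}(z_{\ell-1})\|_2 \leq (2W)^2 R_{\ell-1} \leq (2W)^2 R_{L-1}$, which is the source of the prefactor $[(2W)^2 R_{L-1}]^{L-1}$ appearing in both stated constants.

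For the parameter-Lipschitz claim, I introduce the interpolating parameters $\widetilde\theta_\ell := (\theta_1,\dots,\theta_\ell,\theta_{\ell+1}',\dots,\theta_L')$ and telescope
\begin{equation}
D\eta_\theta(z_0) - D\eta_{\theta'}(z_0) = \sum_{\ell=1}^L \bigl( D\eta_{\widetilde\theta_\ell}(z_0) - D\eta_{\widetilde\theta_{\ell-1}}(z_0) \bigr).
\end{equation}
Each summand differs only in the $\ell$-th layer's parameters, so its operator norm is bounded by $[(2W)^2 R_{L-1}]^{L-1}$ multiplied by the local Lipschitz constant of the single factor $D\lambda_{\theta_\ell}^{\mathtt{ReQU}}(z_{\ell-1})$ in $\theta_\ell$. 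Perturbing $\theta_\ell$ affects this factor through three distinct channels: a change in the outer matrix $w_\ell$ (giving the $8W^2 R_{L-1}$ contribution via Cauchy-Schwarz and the diagonal bound $2\|w_\ell z_{\ell-1}+b_\ell\|_\infty \leq 2(WR_{L-1}+\sqrt W)$); a change in the input $z_{\ell-1}$ whose dependence on $(\theta_1,\dots,\theta_{\ell-1})$ is already $\widetilde\Lambda_\eta^\Theta$-Lipschitz by Lemma \ref{lem:dnn_lipschitz_in_parameters} (giving the $2W^2 \widetilde\Lambda_\eta^\Theta$ term after applying 2-Lipschitzness of $\mathtt{ReLU}$ and $\|w_\ell\|\leq W$); and a direct change in the $\mathtt{ReLU}$-diagonal via the bias and weight of $\theta_\ell$ itself (giving the $2W(R_{L-1}+1)$ term). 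Summing over $\ell$ from $1$ to $L$ produces the factor $L$ and recovers $\widetilde\Lambda_{D\eta}^\Theta$.

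For the spatial-Lipschitz claim at fixed $\theta$, I use the same chain-rule decomposition but now vary $z_0$. Each intermediate $z_{\ell-1}$ is $\Lambda$-Lipschitz in $z_0$ by Corollary \ref{cor:dnn-lipschitz-z} together with \eqref{eq:dnn-lipschitz-constant-hypothsis-space}, while the map $y \mapsto D\lambda_{\theta_\ell}^{\mathtt{ReQU}}(y)$ is itself Lipschitz in $y$ with constant at most $W^2$ (via the 2-Lipschitzness of $\mathtt{ReLU}$, $\|w_\ell\|\leq W$ applied twice, and the triangle-type bound on the diagonal perturbation). Telescoping one layer at a time over all $L$ factors, with each of the remaining factors bounded by $(2W)^2 R_{L-1}$, then yields the stated $\widetilde\Lambda_{D\eta}^\Omega = L[(2W)^2 R_{L-1}]^{L-1} W^2 \Lambda$.

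The main obstacle will be the bookkeeping: three distinct channels through which a single $\theta_\ell$-perturbation propagates into $D\lambda_{\theta_\ell}^{\mathtt{ReQU}}(z_{\ell-1})$ must be tracked and combined, with care not to double count the $z_{\ell-1}$-channel (which only depends on $\theta_1,\ldots,\theta_{\ell-1}$, not on $\theta_\ell$). The coarse bound $R_{\ell-1} \leq R_{L-1}$ used uniformly at every intermediate layer is what collapses the messy $\ell$-dependent factors into the clean prefactor $[(2W)^2 R_{L-1}]^{L-1}$ stated in the lemma.
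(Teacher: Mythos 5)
Your treatment of the spatial Lipschitz constant $\widetilde{\Lambda}_{D\eta}^{\Omega}$ is essentially the paper's argument and is fine: telescope inside the product of layer Jacobians, bound the single changed factor by the $W^2$-Lipschitzness of $y \mapsto D\lambda_{\theta_\ell}^{\mathtt{ReQU}}(y)$ together with \cref{cor:dnn-lipschitz-z}, and bound the remaining $L-1$ factors by $(2W)^2 R_{L-1}$.

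For the parameter Lipschitz constant, however, there is a genuine gap in the decomposition you chose. Telescoping over the hybrid parameter vectors $\widetilde\theta_\ell$ does \emph{not} produce summands that differ in only one factor of the chain-rule product: replacing $\theta_\ell$ by $\theta_\ell'$ changes the hidden state fed into layer $\ell+1$, and hence moves the evaluation points of \emph{all} downstream Jacobian factors $D\lambda_{\theta_j'}^{\mathtt{ReQU}}(\cdot)$, $j>\ell$, in $D\eta_{\widetilde\theta_\ell}(z_0)-D\eta_{\widetilde\theta_{\ell-1}}(z_0)$. Your bound ``$[(2W)^2R_{L-1}]^{L-1}$ times the local $\theta_\ell$-Lipschitz constant of the single factor'' ignores this propagation, so the step fails as written (it could be repaired by additionally invoking the spatial Lipschitzness of $D\lambda$ for the downstream factors, but that yields a different, larger constant than the one stated). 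Moreover, your channel analysis is inconsistent with your own telescoping: under $\widetilde\theta_\ell$ versus $\widetilde\theta_{\ell-1}$ the input $z_{\ell-1}$ is computed from $\theta_1,\dots,\theta_{\ell-1}$ in both terms and therefore does not change, so the $2W^2\widetilde\Lambda_\eta^\Theta$ contribution you attribute to it has no source in your decomposition. The paper instead telescopes \emph{inside} the product of layer Jacobians, with factors $\ell\le k$ taken from the $\theta$-network and factors $\ell>k$ from the $\theta'$-network; then exactly one factor changes per term, namely $D\lambda_{\theta_k}^{\mathtt{ReQU}}(\eta_\theta^{k-1}(z))$ versus $D\lambda_{\theta_k'}^{\mathtt{ReQU}}(\eta_{\theta'}^{k-1}(z))$, in which both the layer parameters and the evaluation point change — and it is precisely the evaluation-point change, controlled by \cref{lem:dnn_lipschitz_in_parameters}, that legitimately produces the $2W^2\widetilde\Lambda_\eta^\Theta$ term, while the remaining $L-1$ factors are bounded by $(2W)^2R_{L-1}$. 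Switching to that factor-wise telescoping makes your three-channel bookkeeping correct and recovers the stated constant.
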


\begin{proof}
Concerning the parameter Lipschitz property, let $\theta = (\theta_1, \ldots, \theta_L) \in \Theta$ and $z \in \R^d$, then the derivative of the layer function \eqref{eq:layer-function} is $D \lambda_{\theta_\ell}^{\mathtt{ReQU}}(z) = 2 \cdot \mathrm{diag}((w_\ell z + b_\ell)\vee 0) \cdot w_\ell$.
For $\ell \in \{1, \ldots, L\}$, we define the subnetwork $\eta_\theta^{\ell}(z) := \lambda_{\theta_\ell}^{\mathtt{ReQU}} \circ \dots \circ \lambda_{\theta_1}^{\mathtt{ReQU}}(z)$ 
while we set $\eta_\theta^{0}(z) := z$.
Now, let $z_0 \in \Omega$ and $R_\ell$ as in \cref{lem:dnn_lipschitz_in_parameters}.
Then,
\begin{align}
    \begin{split}\label{eq:D(dnn)_norm_bound}
        \| D \lambda_{\theta_\ell}^{\mathtt{ReQU}} (\eta_\theta^{\ell - 1}(z_0))\|_2
        \leq& 2 \cdot \left\| w_\ell \eta_\theta^{\ell - 1}(z_0) + b_\ell \right\|_2 \cdot \|w_\ell\|_2 \\
        \leq& 2(W R_{\ell -1} + W) W \leq (2W)^2 R_{\ell - 1}.
    \end{split}
\end{align}
Let $\theta' \in \Theta$, then by the product rule,
\begin{align}
    \begin{split}
    \| D\eta_\theta(z) - D \eta_{\theta'}(z)\|_2 
    =& \left\| \prod_{\ell = 1}^L D \lambda_{\theta_\ell}^{\mathtt{ReQU}} (\eta_\theta^{\ell - 1}(z)) - \prod_{\ell = 1}^L  D \lambda_{\theta_\ell'}^{\mathtt{ReQU}} (\eta_{\theta'}^{\ell - 1}(z)) \right\|_2 \\
    \leq & \sum_{k = 1}^L \bigg\| \prod_{\ell = 1}^k D\lambda_{\theta_\ell}^{\mathtt{ReQU}} (\eta_\theta^{\ell - 1}(z)) \prod_{\ell = k + 1}^L D \lambda_{\theta_\ell'}^{\mathtt{ReQU}} (\eta_{\theta'}^{\ell - 1}(z)) \\
    &- \prod_{\ell = 1}^{k - 1} D \lambda_{\theta_\ell}^{\mathtt{ReQU}} (\eta_\theta^{\ell - 1}(z)) \prod_{\ell = k}^L D\lambda_{\theta_\ell'}^{\mathtt{ReQU}} (\eta_{\theta'}^{\ell - 1}(z)) \bigg\|_2 \\
    \leq& \sum_{k = 1}^L \left[\prod_{\ell = 1}^{k - 1} \left\|D\lambda_{\theta_\ell}^{\mathtt{ReQU}} (\eta_\theta^{\ell - 1}(z))\right\|_2\right]\cdot \left[\prod_{\ell = k + 1}^L \left\|D\lambda_{\theta_\ell'}^{\mathtt{ReQU}} (\eta_{\theta'}^{\ell - 1}(z))\right\|_2\right] \\
    &\cdot\left\|D \lambda_{\theta_k}^{\mathtt{ReQU}} (\eta_\theta^{k - 1}(z)) - D\lambda_{\theta_k'}^{\mathtt{ReQU}} (\eta_{\theta'}^{k - 1}(z))\right\|_2.
    \end{split}
\end{align}
Note, that by insertion of $\mathrm{diag}((w_k \eta_\theta^{k-1}(z) + b_k)\vee 0) w_k'$, the triangle inequality yields
\begin{align}
    \begin{split}        
    \big\|D \lambda_{\theta_k}^{\mathtt{ReQU}} (\eta_\theta^{k - 1}(z)) - &D\lambda_{\theta_k'}^{\mathtt{ReQU}} (\eta_{\theta'}^{k - 1}(z))\big\|_2
    \leq 2 \left\| \mathrm{diag}((w_k \eta_\theta^{k - 1}(z) + b_k)\vee 0)\right\|_2 \cdot \|\theta - \theta'\|_2 \\
    &+ 2\left\| w_k \eta_\theta^{k-1}(z) + b_k - w_k' \eta_{\theta'}^{k-1}(z) - b_k'\right\|_2 W \\
    \leq& 2 (2W)^2 R_{k -1} \|\theta - \theta'\|_2 + 2 W\|w_k\|_2 \|\eta_\theta^{k-1}(z) - \eta_{\theta'}^{k-1}(z)\|_2 \\
    &+ 2 W\|w_k - w_k'\|_2 \|\eta_{\theta'}^{k-1}(z)\|_2 + 2W \|b_k - b_k'\|_2 \\
    \leq& 2 (2W)^2 R_{k -1} \|\theta - \theta'\|_2 + 2 W^2 \widetilde\Lambda_\eta^\Theta \|\theta- \theta'\|_2 + 2 W \|\theta - \theta'\|_2 R_{k - 1} \\
    &+ 2W \|\theta - \theta'\|_2.
    \end{split}
\end{align}
Together with \cref{eq:D(dnn)_norm_bound}, we thus have
\begin{align}
    \begin{split}
    \|D \eta_\theta(z) - D \eta_{\theta'}(z)\|_2
    \leq& L\left[(2W)^2 R_{L - 1}\right]^{L-1} \left\|D \eta_\theta^k(\eta_\theta^{k - 1}(z)) - D\eta_{\theta'}^k(\eta_{\theta'}^{k - 1}(z))\right\|_2 \\
    \leq& L\left[(2W)^2 R_{L - 1}\right]^{L-1}\\
    \cdot&(8 W^2 R_{L-1} + 2W^2 \widetilde\Lambda_\eta^\Theta + 2W(R_{L-1} + 1)) \|\theta - \theta'\|_2.
    \end{split}
\end{align}
Similarly, for $z, \widetilde{z} \in \Omega$, we have
\begin{align}
    \begin{split}
    \|D\eta_\theta(z) - D \eta_\theta(\widetilde{z})\|_2
    \leq& \sum_{k=1}^L \left[ \prod_{\ell=1}^{k-1}\|D \lambda_{\theta_\ell}^{\mathtt{ReQU}}(\eta_\theta^{\ell - 1}(z))\|_2 \right] \cdot \left[ \prod_{\ell=k+1}^L \|D \lambda_{\theta_\ell}^{\mathtt{ReQU}}(\eta_\theta^{\ell - 1}(\widetilde{z}))\|_2 \right] \\
    &\cdot \| \lambda_{\theta_k}^{\mathtt{ReQU}}(\eta_\theta^{k-1}(z)) - \lambda_{\theta_k}^{\mathtt{ReQU}}(\eta_\theta^{k-1}(\widetilde{z}))\|_2 \\
    \leq& L \left[ (2W)^2 R_{L-1} \right]^{L-1} W^2 \Lambda_\eta^{\Omega} \|z - \widetilde{z}\|_2,
    \end{split}
\end{align}
where we used that
\begin{align}
    \begin{split}
    \|D \lambda_{\theta_\ell}^{\mathtt{ReQU}}(\eta_\theta^{k-1}(z)) &- D \lambda_{\theta_\ell}^{\mathtt{ReQU}}(\eta_\theta^{k-1}(\widetilde{z}))\|_2\\
    \leq& \| (w_\ell \eta_\theta^{k-1}(z)+ b_\ell \vee 0) - (w_\ell \eta_\theta^{k-1}(\widetilde{z}) + b_\ell \vee 0)\|_2 \cdot \|w_\ell\|_2\\
    \leq& \|w_\ell(\eta_\theta^{k-1}(z) - \eta_\theta^{k-1}(\widetilde{z}))\|_2 W 
    \leq W^2 \Lambda_\eta^{\Omega} \|z - \widetilde{z}\|_2.
    \end{split}
\end{align}
\end{proof}

\begin{remark}
By $L < 2^L$, we have that $\widetilde{\Lambda}_{D \eta}^{\mathcal{H}} \leq \Lambda^2$.
Particularly, on $\Omega = \Omega_d \times [0,1]$, we have the Lipschitz bound $\widetilde\Lambda_{D \eta}^\Theta\leq \Lambda^4$.
\end{remark}
\begin{lemma}{(Lipschitz Continuity of $z \mapsto \Psi_t^{2, h}(z;\eta_\theta)$ on Bounded Input Sets)}\label{lem:rk-lipschitz-z}
    Let $\Omega \subset \R^d$ be a bounded set, $h > 0$, $t \geq 0$ and $\theta \in \Theta$. 
    The Runge-Kutta step $z \mapsto \Psi_t^{2, h}(z;\eta_\theta)$ is Lipschitz-continuous for $z \in \Omega$ with Lipschitz constant $\Lambda_\Psi^\Omega(h) := 1 + h \widetilde\Lambda_\eta^\Omega + \tfrac{1}{2} h^2 (\widetilde\Lambda_\eta^\Omega)^2$.
\end{lemma}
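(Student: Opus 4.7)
The plan is to apply the triangle inequality directly to the definition
\begin{equation*}
\Psi_t^{2,h}(z;\eta_\theta) = z + h\,\eta_\theta\!\left(z + \tfrac{h}{2}\,\eta_\theta(z,t),\, t + \tfrac{h}{2}\right)
\end{equation*}
and then invoke the spatial Lipschitz bound of \Cref{cor:dnn-lipschitz-z} twice — once on the outer evaluation of $\eta_\theta$ and once on the inner perturbation $\tfrac{h}{2}\eta_\theta(\cdot,t)$ of the identity.

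Concretely, for $z,\tilde z \in \Omega$, first I would write
\begin{equation*}
\|\Psi_t^{2,h}(z;\eta_\theta) - \Psi_t^{2,h}(\tilde z;\eta_\theta)\|_2 \leq \|z - \tilde z\|_2 + h\,\bigl\|\eta_\theta(z_\ast, t+\tfrac{h}{2}) - \eta_\theta(\tilde z_\ast, t+\tfrac{h}{2})\bigr\|_2,
\end{equation*}
where $z_\ast := z + \tfrac{h}{2}\eta_\theta(z,t)$ and $\tilde z_\ast := \tilde z + \tfrac{h}{2}\eta_\theta(\tilde z,t)$. Applying \Cref{cor:dnn-lipschitz-z} to the outer $\eta_\theta$ yields the factor $\widetilde\Lambda_\eta^\Omega$, and a second application to the inner term gives
\begin{equation*}
\|z_\ast - \tilde z_\ast\|_2 \leq \|z - \tilde z\|_2 + \tfrac{h}{2}\,\widetilde\Lambda_\eta^\Omega\,\|z - \tilde z\|_2.
\end{equation*}
Combining the two estimates produces exactly
\begin{equation*}
\bigl(1 + h\,\widetilde\Lambda_\eta^\Omega + \tfrac{1}{2} h^2 (\widetilde\Lambda_\eta^\Omega)^2\bigr)\|z - \tilde z\|_2,
\end{equation*}
which is the claimed constant $\Lambda_\Psi^\Omega(h)$.

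The only subtle point — and the one I would address carefully — is that the inner argument $z_\ast$ need not lie in $\Omega$ itself, so I would note that $\widetilde\Lambda_\eta^\Omega$ as derived in \Cref{cor:dnn-lipschitz-z} is in fact a valid Lipschitz constant on any ball containing $\Omega$ together with the image of $\Omega$ under $\mathrm{id} + \tfrac{h}{2}\eta_\theta(\cdot,t)$, since the layerwise radii $R_\ell$ already accommodate the growth of intermediate activations. In other words, the bound $\widetilde\Lambda_\eta^\Omega$ extends to a slightly enlarged bounded set, which is all that is needed to apply the Lipschitz estimate to $\eta_\theta$ evaluated at $z_\ast,\tilde z_\ast$. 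Aside from this bookkeeping, the argument is a direct two-line calculation.
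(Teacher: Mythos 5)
Your proposal is correct and follows essentially the same route as the paper: decompose $\Psi_t^{2,h} = \mathrm{id} + \psi_t^h$, apply the triangle inequality, and use the spatial Lipschitz constant $\widetilde\Lambda_\eta^\Omega$ of \cref{cor:dnn-lipschitz-z} once for the inner and once for the outer evaluation of $\eta_\theta$, yielding $1 + h\widetilde\Lambda_\eta^\Omega + \tfrac{1}{2}h^2(\widetilde\Lambda_\eta^\Omega)^2$. Your closing remark about the inner argument $z_\ast$ possibly leaving $\Omega$ addresses a bookkeeping point the paper's own (even terser) proof passes over silently, so it is a welcome clarification rather than a deviation.
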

\begin{proof}
    Let $x,y \in \Omega$, then by \cref{cor:dnn-lipschitz-z}, we have
    \begin{align}
        \begin{split}
        \|\Psi_t^{2,h}(x; \eta_\theta) - \Psi_t^{2,h}(y;\eta_\theta)\|_2
        =& \| x + \psi_t^h(x, \eta_\theta) - y - \psi_t^h(y, \eta_\theta)\| \\
        \leq& \left(1 + h \widetilde\Lambda_\eta^\Omega + \frac{h^2}{2} (\widetilde\Lambda_\eta^\Omega)^2\right) \|x - y\|_2. 
        \end{split}
    \end{align}
\end{proof}
The $h$-dependent Lipschitz constant $\Lambda_\Psi^\Omega(h)$ found above can be further conventiently upper bounded by $\e^{h \Lambda}$.
\begin{lemma}{(Lipschitz Continuity of $\theta \mapsto \Psi_t^{2, h}(z;\eta_\theta) $ on Bounded Input Sets)}\label{lem:rk-lipschitz-theta}
    Let $\Omega \subset \R^d$ be a bounded set, $z \in \Omega$, $h > 0$ and $t \geq 0$.
    The map $\Psi_t^{2, h}(z;\eta_{(\cdot)}): \Theta \to \R^W$ with $\theta \mapsto \Psi_t^{2, h}(z;\eta_\theta)$ is Lipschitz continuous with constant $h \cdot \Lambda_\Psi^\Theta(h)$, where $\Lambda_\Psi^\Theta(h) := \widetilde\Lambda_\eta^\Theta (1 + \tfrac{h}{2} \Lambda_\eta^{\Omega})$.
\end{lemma}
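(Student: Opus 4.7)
The plan is to directly unfold the Runge–Kutta step and use a single telescoping insertion to reduce the bound to two quantities already controlled by \cref{lem:dnn_lipschitz_in_parameters} and \cref{cor:dnn-lipschitz-z}. Specifically, let $\theta,\theta' \in \ThetaLW$ and note that by the definition of $\Psi_t^{2,h}$ the leading $z$-terms cancel in the difference, so
\begin{equation*}
\Psi_t^{2,h}(z;\eta_\theta) - \Psi_t^{2,h}(z;\eta_{\theta'}) = h \left[ \eta_\theta\!\left(z + \tfrac{h}{2}\eta_\theta(z,t),\, t+\tfrac{h}{2}\right) - \eta_{\theta'}\!\left(z + \tfrac{h}{2}\eta_{\theta'}(z,t),\, t+\tfrac{h}{2}\right) \right].
\end{equation*}
I would then insert $\eta_{\theta'}$ evaluated at the $\theta$-perturbed argument, writing the bracketed difference as $[\eta_\theta - \eta_{\theta'}](z+\tfrac{h}{2}\eta_\theta(z,t), t+\tfrac{h}{2}) + \eta_{\theta'}(z+\tfrac{h}{2}\eta_\theta(z,t), t+\tfrac{h}{2}) - \eta_{\theta'}(z+\tfrac{h}{2}\eta_{\theta'}(z,t), t+\tfrac{h}{2})$.

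Next I apply the triangle inequality and bound each piece separately. The first piece is a pure parameter-variation of $\eta_{(\cdot)}$ at a single input (which lies in a bounded set since $\Omega$ is bounded and $\eta_\theta$ is bounded on $\Omega$); by \cref{lem:dnn_lipschitz_in_parameters} it is controlled by $\widetilde\Lambda_\eta^\Theta \|\theta-\theta'\|_2$. The second piece is a pure spatial-variation of the fixed network $\eta_{\theta'}$ whose two inputs differ by $\tfrac{h}{2}(\eta_\theta(z,t) - \eta_{\theta'}(z,t))$; by \cref{cor:dnn-lipschitz-z} this is bounded by $\widetilde\Lambda_\eta^\Omega \cdot \tfrac{h}{2}\|\eta_\theta(z,t) - \eta_{\theta'}(z,t)\|_2$, and one more application of \cref{lem:dnn_lipschitz_in_parameters} to the inner difference yields the bound $\tfrac{h}{2}\widetilde\Lambda_\eta^\Omega \widetilde\Lambda_\eta^\Theta \|\theta-\theta'\|_2$.

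Collecting the two contributions and pulling out the common factor $h\,\widetilde\Lambda_\eta^\Theta$ gives
\begin{equation*}
\|\Psi_t^{2,h}(z;\eta_\theta) - \Psi_t^{2,h}(z;\eta_{\theta'})\|_2 \leq h\, \widetilde\Lambda_\eta^\Theta \left(1 + \tfrac{h}{2}\widetilde\Lambda_\eta^\Omega\right) \|\theta-\theta'\|_2,
\end{equation*}
which is exactly $h \cdot \Lambda_\Psi^\Theta(h)\,\|\theta-\theta'\|_2$ as claimed. There is no substantive obstacle here — the proof is a two-term telescoping coupled with the two Lipschitz estimates already at hand; the only point to keep in mind is that the argument $z + \tfrac{h}{2}\eta_\theta(z,t)$ at which \cref{lem:dnn_lipschitz_in_parameters} is invoked must itself lie in a bounded set so that the constant $\widetilde\Lambda_\eta^\Theta$ applies, which is automatic for $\theta \in \ThetaLW$ and bounded $\Omega$.
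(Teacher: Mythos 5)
Your proposal is correct and follows essentially the same route as the paper's proof: reduce the Runge--Kutta difference to the $\psi_t^h$ difference, telescope by inserting the mixed term, bound the parameter variation via \cref{lem:dnn_lipschitz_in_parameters} and the spatial variation via \cref{cor:dnn-lipschitz-z} together with one more application of \cref{lem:dnn_lipschitz_in_parameters}, yielding $h\,\widetilde\Lambda_\eta^\Theta(1+\tfrac{h}{2}\widetilde\Lambda_\eta^\Omega)\|\theta-\theta'\|_2$. Your remark that the perturbed argument $z+\tfrac{h}{2}\eta_\theta(z,t)$ must stay in a bounded set is a point the paper's proof leaves implicit, so no changes are needed.
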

\begin{proof}
    Let $\theta, \theta' \in \Theta$. Then, by \cref{cor:dnn-lipschitz-z,lem:dnn_lipschitz_in_parameters}, we have
    \begin{align}
        \begin{split}
        \|\Psi_t^{2,h}(z; \eta_\theta) - \Psi_t^{2,h}(z;\eta_{\theta'})\|_2
        \leq& \| \psi_t^h(x; \eta_\theta) - \psi_t^h(x; \eta_{\theta'})\|_2 \\
        \leq& h \widetilde\Lambda_\eta^{\Omega} \frac{h}{2} \|\eta_\theta(z,t) - \eta_{\theta'}(z,t)\|_2 + h \widetilde\Lambda_\eta^\Theta\|\theta - \theta'\|_2 \\
        \leq& h \widetilde\Lambda_\eta^\Theta \left(1 + \frac{h}{2} \widetilde\Lambda_\eta^{\Omega}\right) \| \theta - \theta'\|_2.
        \end{split}
    \end{align}
\end{proof}
\begin{remark}
    Assuming that \eqref{eq:assumed-h-bound} holds, we also have that $\Lambda_\Psi^\Theta(h) \leq 2 \Lambda^2$ and $h \Lambda_{\mathrm{RK}}^\Theta(h) \leq \tfrac{1}{4} \Lambda (1 + \tfrac{1}{8}) < \Lambda$.
\end{remark}

\begin{lemma}{(Lipschitz Continuity of $\theta \mapsto\Psi_\theta^h$ on Bounded Input Domains)}\label{lem:lipschitz-flow}
Let $\Omega \subset \R^d$ be a bounded set, $z \in \Omega$, $m \in \N$ and $h:= 1/m$.
The map $\Psi_{(\cdot)}^h(z): \Theta \to \R^W$ with $\theta \mapsto \Psi_\theta^h(z)$ is Lipschitz continuous with Lipschitz constant $\LamRK^\Theta := \Lambda_\Psi^\Theta \cdot \exp(\widetilde\Lambda_\eta^{\Omega})$.
\end{lemma}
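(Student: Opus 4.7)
The plan is to iterate the single-step estimates of \cref{lem:rk-lipschitz-z,lem:rk-lipschitz-theta} through the $m$-fold composition \eqref{eq:runge-kutta-composition} by a standard telescoping argument, and then close the geometric sum using $mh = 1$.

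For $\theta \in \Theta$ and $k = 0,1,\ldots,m$, I would write $z_k^\theta$ for the $k$-th Runge-Kutta iterate, defined recursively by $z_0^\theta := z$ and $z_{k+1}^\theta := \Psi_{kh}^{2,h}(z_k^\theta;\eta_\theta)$, so that $\Psi_\theta^h(z) = z_m^\theta$. For any second parameter $\theta' \in \Theta$, the triangle inequality with the insertion of the intermediate term $\Psi_{kh}^{2,h}(z_k^{\theta'};\eta_\theta)$ gives
\begin{align*}
\|z_{k+1}^\theta - z_{k+1}^{\theta'}\|_2
\leq{}& \|\Psi_{kh}^{2,h}(z_k^\theta;\eta_\theta) - \Psi_{kh}^{2,h}(z_k^{\theta'};\eta_\theta)\|_2 \\
&{}+ \|\Psi_{kh}^{2,h}(z_k^{\theta'};\eta_\theta) - \Psi_{kh}^{2,h}(z_k^{\theta'};\eta_{\theta'})\|_2.
\end{align*}
Applying the spatial bound from \cref{lem:rk-lipschitz-z} to the first summand and the parameter bound from \cref{lem:rk-lipschitz-theta} to the second then yields the recursion
\begin{equation*}
\|z_{k+1}^\theta - z_{k+1}^{\theta'}\|_2 \leq \Lambda_\Psi^\Omega(h)\, \|z_k^\theta - z_k^{\theta'}\|_2 + h\, \Lambda_\Psi^\Theta(h)\, \|\theta-\theta'\|_2 .
\end{equation*}

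With $z_0^\theta = z_0^{\theta'}$, unrolling this linear recursion in $k$ produces the geometric sum
\begin{equation*}
\|z_m^\theta - z_m^{\theta'}\|_2 \leq h\, \Lambda_\Psi^\Theta(h)\, \|\theta-\theta'\|_2 \sum_{k=0}^{m-1} \bigl(\Lambda_\Psi^\Omega(h)\bigr)^k .
\end{equation*}
The main remaining step is to upper bound this geometric sum independently of $m$. I would bound each summand by $(\Lambda_\Psi^\Omega(h))^{m-1}$ to obtain $h \sum_{k=0}^{m-1}(\Lambda_\Psi^\Omega(h))^k \leq hm\,(\Lambda_\Psi^\Omega(h))^{m}$, and then note that by \cref{lem:rk-lipschitz-z}
\begin{equation*}
\Lambda_\Psi^\Omega(h) = 1 + h\widetilde\Lambda_\eta^\Omega + \tfrac{1}{2}h^2(\widetilde\Lambda_\eta^\Omega)^2 \leq \mathrm{e}^{h \widetilde\Lambda_\eta^\Omega},
\end{equation*}
so $(\Lambda_\Psi^\Omega(h))^m \leq \mathrm{e}^{mh\,\widetilde\Lambda_\eta^\Omega} = \mathrm{e}^{\widetilde\Lambda_\eta^\Omega}$ because $mh = 1$. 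Combining with $hm = 1$ collapses the bound to
\begin{equation*}
\|\Psi_\theta^h(z) - \Psi_{\theta'}^h(z)\|_2 \leq \Lambda_\Psi^\Theta(h)\, \mathrm{e}^{\widetilde\Lambda_\eta^\Omega}\, \|\theta-\theta'\|_2,
\end{equation*}
which is exactly the asserted Lipschitz constant $\LamRK^\Theta$.

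The only delicate point is lining up the exponent so that the $m$-fold growth factor $(\Lambda_\Psi^\Omega(h))^m$ telescopes into $\exp(\widetilde\Lambda_\eta^\Omega)$ independently of the step number $m$; everything else is a routine triangle-inequality telescoping of the discrete flow, and the single-step Lipschitz inputs are supplied by the two preceding lemmas.
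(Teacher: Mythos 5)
Your proposal is correct and follows essentially the same route as the paper's proof: both compare the two discrete trajectories step by step using the one-step spatial and parameter Lipschitz bounds of \cref{lem:rk-lipschitz-z,lem:rk-lipschitz-theta}, and close the resulting sum via $mh=1$ together with $\Lambda_\Psi^\Omega(h) \leq \e^{h\widetilde\Lambda_\eta^\Omega}$. The only difference is cosmetic: the paper telescopes over hybrid compositions (first $k$ steps with $\theta'$, remaining steps with $\theta$), while you unroll an equivalent discrete Gronwall-type recursion for $\|z_k^\theta - z_k^{\theta'}\|_2$.
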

\begin{proof}
Let $\theta, \theta' \in \Theta$ and for $k \in \N$ let $z_{kh}' := \Psi_{0, kh}^{2,h}(z; \eta_{\theta'})$.
Then, similarly to the proof of \cref{lem:lipschitz-Dxi} and using \cref{lem:rk-lipschitz-z,lem:rk-lipschitz-theta} we have
\begin{align}
    \begin{split}
    \|\Psi_\theta^{h}(z) - \Psi_{\theta'}^h(z)\|_2 
    \leq& \sum_{k=0}^{m-1} \bigg\| \left[(\id + \psi_{1-h}^\theta)\circ \cdots \circ (\id + \psi_{(k+1)h}^\theta) \circ (\id + \psi_{kh}^\theta)\right] (z_{kh}') \\
    & - \left[(\id + \psi_{1 - h}^\theta) \circ \cdots \circ (\id + \psi_{(k+1)h}^\theta) \circ (\id + \psi_{kh}^{\theta'})\right] (z_{kh}')\bigg\|_2 \\
    \leq& \sum_{k = 0}^{m-1} \left[\Lambda_\Psi^\Omega\right]^{m - k - 1} \| \psi_{kh}^{h}(z_{kh}'; \eta_{\theta'}) - \psi_{kh}^{h}(z_{kh}'; \eta_\theta)\|_2 \\
    \leq& \sum_{k=0}^{m-1} \left[\Lambda_\Psi^\Omega\right]^{m - k - 1} \cdot h \Lambda_{\Psi}^\Theta\|\theta - \theta'\|_2 
    \leq \e^{\widetilde\Lambda_\eta^{\Omega}} \Lambda_\Psi^\Theta \|\theta - \theta'\|_2.
    \end{split}
\end{align}
\end{proof}
We used the coarser estimate $\Lambda_\Psi^\Omega \leq \e^{h \widetilde\Lambda_\eta^\Omega}$ since the leading order term in $h = 1/m$ is 
\begin{equation}
    [\Lambda_\Psi^\Omega]^{m} = (1 + h \widetilde\Lambda_\eta^\Omega + (h\widetilde\Lambda_\eta^\Omega)^2)^{\tfrac{1}{h}} \sim \e^{ \widetilde\Lambda_\eta^\Omega}, \qquad (h\widetilde\Lambda_\eta^\Omega \to 0).
\end{equation}
We, therefore, cannot expect any better asymptotic behavior from the estimates obtained so-far together with the standard step-wise error bounding method employed above.
Treatment of the numerics in this way leads to a Lipschitz constant that is exponential in $\Lambda$.

\subsection{Learning Objective}
The objective in statistical learning is minimizing some distance function $\mathrm{d}(\mu\| \widehat\mu)$ with respect to some model $\widehat\mu$.
One of the most widely used is the Kullback-Leibler divergence between $\mu$ and $\widehat\mu$, given by
\begin{equation}
\mathrm{KL}(\mu\|\widehat\mu) = \int_{\Omega} \log \left( \frac{\mathrm{d}\mu}{\mathrm{d} \widehat\mu} \right) \mathrm{d}\mu.
\end{equation}
Assuming Lebesgue densities $f_{\mu}$ and $f_{\widehat\mu}$ of $\mu$ and $\widehat\mu$, respectively, minimization of $\mathrm{KL}(\mu\|\widehat\mu)$ is equivalent to minimization of the expected negative log-likelihood
\begin{equation}
- \int_{\Omega} \log \left( f_{\widehat\mu}(x) \right) \cdot f_{\mu}(x) \mathrm{d}x
\end{equation}
which requires evaluation of the log density of the model $\widehat\mu$.
The latter is notoriously difficult to evaluate for generative models since the density of the parametric model $\widehat\mu = \Phi^\theta_*\nu$ is by change of variables $f_{\widehat\mu}(x) = |\det D_{\! x} (\Phi^{\theta})^{-1}(x)|$ which inspired a range of specialized design choices such as affine coupling flows.
In the case of neural ODEs, the log-determinant of the flow takes on a reasonably simple form discussed in \cref{sec:liouville-formula}.
This tells us, that we only require the divergence of $\eta_{\theta}$ instead of the entire Jacobian of $\Phi^{\theta}$.
For simplicity, we will in the following reverse the roles of $\Phi^\theta$ and $(\Phi^\theta)^{-1}$ and regard the computation of $|\det D_x \Phi^\theta(x)|$.

\subsection{Statistical Learning}
Stochastic guarantees of approximating measures $\mu \in \mathcal{T}$ in some target set $\mathcal{T} \subset \mathcal{M}_{+}^{1}(\Omega)$ by a model $\widehat\mu$ can be given in terms of statistical learning from samples $X_{1}, \ldots, X_{n} \sim \mu$.
We call a family of maps $\left\{ \widehat{\mu}_{n} \right\}_{n \in \mathbb{N}}$ where $\widehat{\mu}_{n}: \Omega^{n} \to \mathcal{H}_{n} \subset \mathcal{M}_{+}^{1}(\Omega)$ a statistical learning algorithm over the (sequence of) hypothesis spaces $\{\mathcal{H}_{n}\}_{n \in \mathbb{N}}$.
The learner $\widehat{\mu}_{n}$ maps $n$ data samples $(x_{1}, \ldots, x_{n}) \in \Omega^{n}$ to a probability distribution with the target of being close to $\mu$ in the sense that $\mathrm{KL}(\mu\|\widehat{\mu}_{n}(x_{1}, \ldots, x_{n}))$ (or some other notion of distance on $\mathcal{M}_+^1(\Omega)$) becomes small.
From a probabilistic point of view, we regard the data points as i.i.d. random variables $X_{1}, \ldots, X_{n} \sim \mu$ and collect them in $\chi_{n} := (X_{1}, \ldots, X_{n})$.
The target space $\mathcal{T}$ is called learnable by $\left\{ \widehat{\mu}_{n} \right\}_{n \in \mathbb{N}}$ with respect to $\mathrm{KL}$, if for each $\mu \in \mathcal{T}$,
\begin{equation}
 \mathrm{KL}(\mu\|\widehat{\mu}_{n}(\chi_{n})) \stackrel{\mu}{\to } 0 \qquad (n \to \infty).
\end{equation}
$\mathcal{T}$ is probably approximately correctly (PAC) learnable by $\left\{ \widehat{\mu}_{n} \right\}_{n \in \mathbb{N}}$ if there is a function $n: (0,1)^{2} \to \mathbb{N}$ such that 
\begin{equation}
\Pr \left( \mathrm{KL}(\mu\|\widehat{\mu}_{n}(\chi_{n})) > \varepsilon \right) \leq \delta
\end{equation}
for all $n \geq n(\varepsilon, \delta)$.
That is, there is a stochastic guarantee that given $n(\varepsilon, \delta)$ points of data sampled from $\mu$, $\widehat{\mu}_{n(\varepsilon, \delta)}$ approximates $\mu$ with error more than $\varepsilon$ (precision) with probability no more than $\delta$ (confidence).

Practically, minimizing $\mathrm{KL}(\mu\| \widehat{\mu}_{n})$ or the expected negative log-likelihood is impossible since $\mu$ is not known. 
Therefore, the samples $\chi_{n}$ are utilized for formulating a proxy optimization objective, called an empirical risk function. 
That is, a sequence of triples $(c_{n}, e_{n}, \widehat{L}_{n})$ such that $c_{n} > 0$, $h_{n}: \mathcal{T} \to \mathbb{R}$ and $\widehat{L}_{n}: \mathcal{H}_{n} \times \Omega^{n} \to \mathbb{R}$ with the requirement that 
\begin{equation}
c_{n_{k}} \cdot \widehat{L}_{n_{k}}(\nu, \chi_{n_{k}}) + e_{n_{k}}(\mu) \stackrel{\mu}{\to } \mathrm{KL}(\mu\|\nu), \qquad (k \to \infty)
\end{equation}
for all sub-sequences $\left\{ n_{k} \right\}_{k \in \mathbb{N}}$, any $\mu \in \mathcal{T}$, and any $\nu \in \mathcal{M}_{+}^{1}(\Omega)$ such that also $\nu \in \mathcal{H}_{n_{k}}$ for all $k \in \mathbb{N}$.
Again, here $\chi_{n} = (X_{1}, \ldots, X_{n})$ with $X_{1}, \ldots, X_{n} \sim \mu$ i.i.d.

In the following, consider dimension $d \in \mathbb{N}$, 
\begin{equation}
    \Omega_{d}:= B_{1/2}(0) = \{y \in \R^d: \|y\|_2 < \tfrac{1}{2}\},
\end{equation}
the open Euclidean ball of radius $1/2$ centered at the origin in $\R^d$ and regularity $C^k$ for $k \geq 4$ and let $\nu_0 := \mathcal{U}(\Omega_d)$ be the uniform distribution on $\Omega_d$.
The target space we aim at learning with discretized neural flows is
\begin{equation}
\mathcal{T}_{d,k} = \left\{ \mu \in \mathcal{M}_+^1(\Omega_d)\left| \exists \alpha \in (0,1]:f_\mu := \tfrac{\mathrm{d}\mu}{\mathrm{d} \nu_0}\in C^{k,\alpha}(\Omega_d, \R),\,  f_\mu > 0 \right.\right\}.
\end{equation}
By \cref{theorem:Ck_alphaForV}, we then know that for any $\mu \in \mathcal{T}_{d,k}$ there is $\alpha \in (0, 1]$ and a unique vector field $\xi \in C^{k,\alpha}(\Omega_d, \R^d)$ which generates a $C^{k,\alpha}$-flow $\Phi_t$ that transports $\mu$ to $\nu_0$.
Our choice of hypothesis spaces may, therefore, be guided by our effort to approximate $\xi$ by a suitably regular neural network.

The hypothesis spaces we consider a priori for learning $\mathcal{T}_{d,k}$ are the transports 
\begin{equation}
\mathcal{H}_{L, W, K}^h := \left\{\widehat{\mu} \in \mathcal{M}_+^1(\Omega_d): \mu = \left(\Psi^h(\eta_\theta)\right)^{-1}_* \nu_0, \eta_\theta \in \mathcal{N\! N}(L, W), h \in (0,1), K \in \mathbb{N}\right\}.
\end{equation}
under discretized neural flows generated from $\mathtt{ReQU}$-neural networks \linebreak $\mathcal{N\! N}(L, )$ of width $W$, depth $L$ and weights $\theta \in \ThetaLW$.
Here, $h \in (0,1)$ is the discretization time step utilized for second-order Runge-Kutta integration which becomes an architecture hyperparameter.
The parameter $K$ is coupled to the width $W$ and the regularity parameter $k$ through a universal approximation result we will utilize in \cref{sec:model-error}.
The choice of letting a learning algorithm $\{\widehat{\mu}_n\}_{n \in \mathbb{N}}$ be defined on $n$-dependent hypothesis spaces $\{\mathcal{H}_n\}_{n \in \mathbb{N}}$ lets us adapt the hyperparameters $h$, $L$, $W$ and $K$ to $n$.
Keeping this in mind, we will in the following use the short-hand $\mathcal{H}_n = \mathcal{H}_{L_n, W_n, K_n}^{h_n}$ in the concrete neural network setting defined above.

\subsection{Empirical Risk Minimization}
This formulation comes with the advantage that the optimization objective for some $\nu$ is split off the explicit contribution of the distribution $\mu$ with influence only by the samples $\chi_{n}$ drawn from $\mu$.
One can, therefore, formulate the proxy objective of empirical risk minimization seeking
\begin{equation}
\widehat{\mu}_{n}(\chi_{n}) \in \mathrm{argmin}_{\nu \in \mathcal{H}_{n}}\widehat{L}_{n}(\nu, \chi_{n}).
\end{equation}
If both, $\mu$ and $\nu$ have Lebesgue densities, an unbiased empirical risk functional for the KL divergence is the negative log-likelihood
\begin{equation}\label{eq:log-likelihood}
    \widehat{L}_{n}(\nu, \chi_{n}) = - \sum_{i = 1}^{n} \log(f_{\nu}(X_{i}))
\end{equation}
with $c_{n} = 1/n$ and $e_{n}(\mu) = e(\mu) = \mathbb{E}_{X \sim \mu}[\log(f_{\mu}(X))]$ the entropy of $\mu$.
Empirical risk minimization introduces an error in the optimization objective which is governed by the stochastic behavior of $\mu$ which can be made explicit by the ERM error decomposition.
Let $\nu' \in \mathcal{H}_{n}$ and let
\begin{equation}
\varepsilon^{\mathrm{model}}_{n} := \inf_{\widetilde \nu \in \mathcal{H}_{n}} \mathrm{KL}(\mu\|\widetilde{\nu}), \quad
\varepsilon^{\mathrm{gen}}_{n}(\chi_n) := \sup_{\widetilde{\nu} \in \mathcal{H}_{n}} |\mathbb{E}_{X \sim \mu}[- \log (f_{\widetilde{\nu}} (X))] - \widehat{L}_{n}(\widetilde{\nu}, \chi_{n})|.
\end{equation}
Then, we have the following bound
\begin{align}
    \begin{split}
    \mathrm{KL}(\mu\|\nu) =& \mathbb{E}_{X \sim \mu}[- \log(f_{\nu}(X))] + e(\mu)
\leq \widehat{L}_{n}(\nu, \chi_{n}) + \varepsilon_{n}^{\mathrm{gen}} + e(\mu) \\
\leq& \mathbb{E}_{X \sim \mu}[- \log(f_{\nu'}(X))] + 2 \varepsilon_{n}^{\mathrm{gen}} + e(\mu)
= \mathrm{KL}(\mu\| \nu') + 2 \varepsilon_{n}^{\mathrm{gen}}
    \end{split}
\end{align}
and in particular, also $\mathrm{KL}(\mu\|\nu) \leq \varepsilon_{n}^{\mathrm{model}} + 2 \varepsilon_{n}^{\mathrm{gen}}$.
In the following sections, we will derive a universal approximation bound for the model error $\epsilon_{n}^{\mathrm{model}}$ and concentration bounds for the generalization error $\epsilon_{n}^{\mathrm{gen}}$ for generative learning with neural ODEs.
The obtained bound on $\mathrm{KL}(\mu\|\nu)$ will further yield learnability and PAC learnability.

\section{Main Results}
Our main results employ a deterministic model error bound based on a universal approximation result for Hölder functions and probabilistic generalization error bounds obtained by concentration techniques.
\subsection{Model Error Bounds}
\label{sec:model-error}
Let $\mu = (\Phi(\xi))^{-1}_* \nu \in \mathcal{T}_{d,k}$, then the change of variables formula guarantees that $\mu$ has a $\nu_0$-density given by $f_\mu(x) = |\det(D_x \Phi(x;\xi))|$, so in particular we have
\begin{align}
    \epsilon^\mathrm{model}_n = 
    \inf_{\nu \in \mathcal{H}_n} \mathrm{KL}\left(\mu \| \nu\right)
    = \inf_{(\Psi^{h}(\eta_\theta))^{-1}_* \nu_0 \, \in \mathcal{H}_n} \int_{\mathbb{R}^d} \frac{\log(|\det(D\Phi((x;\xi))|)}{\log(|\det(D \Psi^{h}(x;\eta_\theta)|)} \cdot f_\mu(x) \, \mathrm{d}x.
\end{align}
In the following, we proceed by showing that the log-determinant terms can be approximated by the flows generated by $\mathtt{ReQU}$ neural networks $\eta_\theta$.
We use a fact first proven by Belomestny et al.~\cite[Thm.\ 2]{simultaneous_approx} stating that $(k, \alpha)$-Hölder-continuous functions can be $C^\ell$-approximated by \texttt{ReQU} neural networks and apply it to approximation of flow endpoints of the ``true'' vector field $\xi$ by flow endpoints of the approximated vector field $\eta_\theta$.

\begin{theorem}{(Approximation of Flow Endpoints by Neural Flow Endpoints)}\\
    \label{theorem:approx_neuralflow}
    For every flow endpoint $\Phi = \Phi_{0,1}$ generated by a compactly $y$-supported vector field $\xi \in C^k(\R^d \times \R, \R^d)$ with $y$-support in $\Omega_d$, $k\geq 4$, and every $K \in \N, K > 2(k+1)$ there exists a neural flow endpoint $\Phi_\theta = \Phi^\theta_{0,1}$ generated by a \texttt{ReQU}-network $\eta_\theta \in C^{k-1}(\R^d \times \R)$ with $y$-support in $\Omega_d$ and
\begin{align}
\left| \log \frac{\mathrm{d} \mu}{\mathrm{d} \mu_\theta} \right| := 
\left|\log(|\det(D_y\Phi(y))|) - \log(|\det(D_y\Phi_\theta(y))|)   \right| \leq C_\xi \frac{1}{K^{k-1}},
\end{align}
where $C_\xi$ is a constant depending on $\xi$ and $ k,d$.
The network $\eta_\theta$ consists of a neural network $\widetilde{\eta}_\theta$ with  depth 
\begin{equation}
 6 + 2((k-1)-2)+ \lceil \log_2(d+1) \rceil + 2 (  \lceil  \log_2((d+1)(2k-1)) +  \log_2(\log_2(\|\xi\|_{C^k}))  \rceil+1), 
\end{equation}
width $\lceil 12(d+1)(3K)^{d+1} \rceil$
and weights in $[-1, 1]$, which is multiplied by a cutoff neural network delivering the appropriate degree of regularity on $\partial \Omega_d$.
\end{theorem}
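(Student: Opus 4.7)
The plan is to approximate the generating vector field $\xi$ by a $\mathtt{ReQU}$ network $\eta_\theta$ in a $C^{k-1}$-sense and then transfer this approximation to the log-determinants via Liouville's formula. First, I would invoke \cite[Thm.\ 2]{simultaneous_approx} componentwise for $\xi \in C^k(\R^d \times \R, \R^d)$ restricted to $\bar{\Omega}_d \times [0,1]$ to obtain a $\mathtt{ReQU}$ subnetwork $\widetilde{\eta}_\theta$ of the depth and width stated in the theorem such that $\|\xi - \widetilde{\eta}_\theta\|_{C^{k-1}} \leq C_\xi'/K^{k-1}$. Since $\xi$ is compactly $y$-supported in $\Omega_d$, multiplying $\widetilde{\eta}_\theta$ by a fixed cutoff $\mathtt{ReQU}$-network that equals one on $\mathrm{supp}_y(\xi)$ and vanishes on $\partial \Omega_d$ produces the final $\eta_\theta$. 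This cutoff preserves the $C^{k-1}$ rate up to a constant depending only on $\xi$, guarantees that the neural flow $\Phi_\theta$ cannot leave $\Omega_d$, and accounts for the multiplicative cutoff structure in the statement.

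Next, I would apply \cref{lem:logdet-formula} to both $\xi$ and $\eta_\theta$ to obtain, for every $y \in \Omega_d$,
\begin{equation}
\log|\det D_y\Phi(y)| - \log|\det D_y\Phi_\theta(y)| = \int_0^1 \bigl[\mathrm{div}_y \xi(\Phi_{0,\tau}(y;\xi),\tau) - \mathrm{div}_y \eta_\theta(\Phi_{0,\tau}(y;\eta_\theta),\tau)\bigr]\, \mathrm{d}\tau.
\end{equation}
Adding and subtracting $\mathrm{div}_y \xi(\Phi_{0,\tau}(y;\eta_\theta),\tau)$ under the integral splits the integrand into a pointwise divergence-approximation term, bounded uniformly by $\|\xi - \eta_\theta\|_{C^1} \leq C_\xi'/K^{k-1}$, and a trajectory-displacement term, bounded by $\|\xi\|_{C^2}\cdot \|\Phi_{0,\tau}(y;\xi) - \Phi_{0,\tau}(y;\eta_\theta)\|$ via the mean value theorem. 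It therefore remains to control the separation of trajectories.

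The trajectory separation is handled by a standard Grönwall argument: setting $e(\tau) := \|\Phi_{0,\tau}(y;\xi) - \Phi_{0,\tau}(y;\eta_\theta)\|$ and differentiating yields $e'(\tau) \leq \|D_y\xi\|_\infty \cdot e(\tau) + \|\xi - \eta_\theta\|_{C^0}$ with $e(0) = 0$, so that
\begin{equation}
e(\tau) \leq \frac{\mathrm{e}^{\|D_y\xi\|_\infty} - 1}{\|D_y\xi\|_\infty} \cdot \|\xi - \eta_\theta\|_{C^0} \leq \frac{C_\xi''}{K^{k-1}}
\end{equation}
uniformly in $\tau \in [0,1]$ and $y \in \Omega_d$. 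Integrating both contributions over $\tau \in [0,1]$ and absorbing all constants (depending only on $\|\xi\|_{C^k}$, $k$ and $d$) into a single $C_\xi$ gives the claimed bound.

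The main obstacle is the architectural part of step one: realizing the boundary-vanishing cutoff as a $\mathtt{ReQU}$ product network without destroying the $C^{k-1}$ approximation rate of \cite[Thm.\ 2]{simultaneous_approx}, and verifying that the resulting network still has weights in $[-1,1]$ together with the stated depth and width parameters. This bookkeeping is precisely what forces the extra depth summands and the multiplicative cutoff appearing in the theorem. Once the approximating network is constructed, the remaining Liouville-plus-Grönwall perturbation estimate is routine and produces the $1/K^{k-1}$ rate directly.
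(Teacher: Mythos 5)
Your proposal follows essentially the same route as the paper's proof: approximate $\xi$ by a \texttt{ReQU} network via \cite[Thm.\ 2]{simultaneous_approx}, multiply by a cutoff to keep the $y$-support in $\Omega_d$, and then transfer the $C^1$-closeness of the vector fields to the log-determinants through \cref{lem:logdet-formula} together with the triangle decomposition (pointwise divergence error plus $\Lip(D\xi)$ times the trajectory separation) and a Gr\"onwall bound — precisely the paper's argument, up to your using a fixed cutoff equal to one on $\mathrm{supp}_y(\xi)$ where the paper uses a $K$-dependent B-spline cutoff near $\partial\Omega_d$ combined with boundary-decay estimates for $\xi$, which is a benign variation. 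One small correction: the claimed bound $\|\xi-\widetilde{\eta}_\theta\|_{C^{k-1}}\leq C_\xi'/K^{k-1}$ overstates what \cite[Thm.\ 2]{simultaneous_approx} gives (the rate in the $C^\ell$ norm is $K^{-(k-\ell)}$), but the estimate you actually invoke later, namely the $C^1$ bound at rate $K^{-(k-1)}$, is the correct one, so the argument goes through.
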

\begin{proof}
We will approximate $\overline{\xi}(y,t):= \xi(y - \tfrac{1}{2} \vec{1}_d, t)$ in $[0,1]^{d+1}$ for which we introduce a constant shift $\tfrac{1}{2} \vec{1}_d$ of length $\sqrt{d}/2$ along the $y$-diagonal such that the restriction of $\overline{\xi}$ to $[0,1]^{d+1}$ is $C^{k-1, 1}$ and, therefore, allows by Theorem 2 in \cite{simultaneous_approx} for the existence of a \texttt{ReQU} approximation $\widetilde{\eta}_\theta$ with
\begin{align}
    \label{C ell bound}
    \|\overline{\xi} - \widetilde{\eta}_\theta\|_{C^\ell([0,1]^{d+1})} \leq \frac{(1+9^{(d+1)(k-2)}(2k-1)^{2d+5})(\sqrt{2}e(d+1))^{k}2\|\xi\|_{C^{k}}}{K^{k-\ell}} =: \frac{c(d, k, \xi)}{K^{k - \ell}}
\end{align}
for $\ell = 0,\dots,3$.
The neural network $\widetilde{\xi}_\theta$ is $C^{k-1}$ on $[0,1]^{d+1}$ with weights in $[-1, 1]$. 
However, we require it to be $C^{k-1}$ on $\R^{d} \times [0,1]$ with $y$-support in $\Omega_d$.
We implement a spatial shift of $\widetilde{\eta}_\theta$ back to $[-\tfrac{1}{2}, \tfrac{1}{2}]^d \times [0, 1]$ which can be implemented as biases in the first layer.
By construction of $\widetilde{\eta}_\theta$, this leaves the weights in $[-1, 1]$.
Further, we will multiply it by sums of B-splines to ensure the needed regularity on $\partial \Omega_d$.
By the support of $\xi$ and the mean value theorem, we have the bounds
\begin{align}
\label{eq:estimate_xi_norm}
    \left\| \xi(y,t) \right\|_2 \leq \Lip(D^m \xi) \cdot \min_{j=1,\dots,d} \min\{|- \tfrac{1}{2} - y_j|, |\tfrac{1}{2} -y_j|\}^{m+1}, 
\end{align}
and 
\begin{align}
    \label{eq:estimate_Dxi_norm}
    \left\| \partial_{y_\ell} \xi_i(y,t) \right\|_2 \leq \Lip(D^m \xi) \cdot \min_{j=1,\dots,d} \min\{|- \tfrac{1}{2} - y_j|, |\tfrac{1}{2}-y_j|\}^{m}, 
\end{align}
for $m \in \{0, \dots, k-1\}$.
We now define the cutoff spline $\chi_{K}(y) = \sum_{j=0}^{K-k-1} B_{k,j}(4\|y\|_2^2)$
to restrict the spatial support of the approximation $\eta_\theta(y,t) := \chi_K(y) \cdot \widetilde{\eta}_\theta(y,t)$ to $\Omega_d$.
The network $\eta_\theta$ requires one additional layer for the multiplication and computing $\chi_K$ additionally requires at most the same width as computing $\widetilde{\eta}_\theta$.
Note that computing $4 \|y\|_2^2 = 4\sum_{j = 1}^d y_j^2$ can be achieved in 2 layers with weights in $[-1, 1]$.
The flow $\Phi(\eta_\theta)$ generated by $\eta_\theta \in C^{k-1}(\R^d\times [0,1], \R^d)$ is the identity outside of $\Omega_d$.
We now obtain using the partition of unity property of $\chi_K$ and \cref{eq:estimate_xi_norm} that
\begin{align}
    \begin{split}
        \|\xi - \eta_\theta\|_{C^0(\R^d \times [0,1])} =& \sup_{y \in \Omega_d, t \in [0,1]} \|\xi(y,t) - \widetilde{\eta}_\theta(y + \tfrac{1}{2}\vec{1}_d, t) \chi_K(y)\|_2 \\
        \leq& \sup_{y \in \Omega_d, t \in [0,1]} \Big\{|\chi_K(y)| \cdot \|\xi(y,t) - \widetilde{\eta}_\theta(y + \tfrac{1}{2} \vec{1}_d, t)\|_2 \\
        &\hspace{3.5cm}+ |1 - \chi_K(y)| \cdot \|\xi(y,t)\|_2\Big\} \\
        \leq& \|\overline{\xi} - \widetilde{\eta}_\theta\|_{C^0(\R^d\times[0,1]} + \sup_{\|y\|_2 \in \left[\tfrac{K - k + 1}{2K}, 1/2\right], t \in [0,1]} \|\xi(y,t)\|_2 \\
        \leq& \|\overline{\xi} - \widetilde{\eta}_\theta\|_{C^0(\R^d\times[0,1]} + \Lip(D^m\xi) \cdot \frac{(k+1)^{m+1}}{K^{m+1}}
    \end{split}
\end{align}
for $m \in \{0, \dots, k-1\}$.
We shall require bounds on the partial derivative terms $|\partial_i \xi_i(y,t) - \partial_i [\eta_\theta]_i(y,t)|$ for which we obtain by triangle decomposition
\begin{align}
    \begin{split}
    |\partial_i \xi_i - \partial_i [\eta_\theta]_i| 
    \leq& |\chi_K|\cdot|\partial_i \overline\xi_i - \partial_i [\widetilde{\eta}_\theta]_i| 
    + |\partial_i \chi_K| \cdot |\overline\xi_i - [\widetilde{\eta}_\theta]_i|\\
    &+ |1 - \chi_K| \cdot |\partial_i \xi_i| + |\partial_i \chi_K| \cdot |\xi_i|     
    \end{split}
\end{align}
where we have omitted the arguments $y \in \Omega_d$ and $t \in [0,1]$ for brevity.
Bounding the derivatives of $\chi_K$ by Corollary 2 in \cite{simultaneous_approx}, we have
\begin{align}\label{eq:del_xi_difference_bound}
    \begin{split}
    \left\| \frac{\partial \xi_i}{\partial y_i} - \frac{\partial [\eta_\theta]_i}{\partial y_i} \right\|_{C^0(\R^d \times [0,1])}
    \leq& \|\overline\xi - \widetilde{\eta}_\theta\|_{C^1(\R^d \times [0,1])} + 2k^2 K  \|\overline\xi - \widetilde{\eta}_\theta\|_{C^0(\R^d \times [0,1])} \\
    &+ \Lip(D^{k-1} \xi) \frac{(k+1)^{k-1}}{K^{k-1}}+ 2k^2 \Lip(D^{k-1}\xi) \frac{(k+1)^k}{K^{k-1}}.
    \end{split}
\end{align}
Finally, by \cref{lem:logdet-formula,eq:del_xi_difference_bound}, we find

\begin{align}
    \begin{split}
    \left| \log \frac{\mathrm{d} \mu}{\mathrm{d} \mu_\theta} \right|
    &\leq  \sum_{i=1}^d \int_0^1 \left| \frac{\partial \xi_i}{\partial y_i}(\Phi_{0,t}(y),t) - \frac{\partial [\eta_\theta]_i}{\partial y_i}(\Phi^\theta_{0,t}(y),t) \right|\;dt  \\
    &\leq  \sum_{i=1}^d \int_0^1 \Lip(D\xi) \cdot \|\Phi_{t} - \Phi_{t}^\theta\|_\infty + \left| \partial_i\xi_i(\Phi_{t}^\theta(y),t)- \partial_i [\eta_\theta]_i(\Phi_{t}^\theta(y),t) \right| \;dt  \\
    &\quad+ \Lip(D^{k-1} \xi) \frac{(k+1)^{k-1}}{K^{k-1}}+ 2k^2  \Lip(D^{k-1}\xi) \cdot \frac{(k+1)^k}{K^{k-1}} \bigg)\\
    &\leq d\left( \frac{\Lip(D\xi)}{\Lip(\xi)} (\e^{\Lip(\xi)}-1) + 1 + 2k^2 \right) \frac{c(d, k, \xi)}{K^{k-1}}\\
    &\quad+ \Lip(D^{k-1} \xi) \frac{(k+1)^{k-1}}{K^{k-1}}+ 2k^2  \Lip(D^{k-1}\xi) \frac{(k+1)^k}{K^{k-1}} 
    =: C_\xi \frac{1}{K^{k-1}}.
    \end{split}
\end{align}
\end{proof}

\begin{remark}
    By fixing $\xi$, we see from our definition of hypothesis spaces $\mathcal{H}_{L, W, K}^h$, that we only need a fixed depth $L$ for arbitrarily accurate approximation.
    Choosing
    \begin{equation}
        \label{eq:bound-on-K}
        K = K(W) =  \left\lfloor \frac{1}{3} \left( \frac{1}{48(d+1)} W \right)^{\frac{1}{d+1}} \right\rfloor
    \end{equation}
    the width of the neural network $\widetilde{\eta}_\theta$ is sufficient for applying \cref{theorem:approx_neuralflow}.
    Moreover, our radial cutoff spline requires an additional layer to the depth of $\widetilde{\eta}_\theta$.
\end{remark}
Based on the Runge-Kutta approximation error bounds from \cref{theorem:discretized_ode-flow-bound}, we obtain the following bound on the log determinant difference:
\begin{theorem}{(Log-Determinant Bounds for Discretized Neural Flows)}\label{theorem:discretized_ode_logdet-bound}
    Let $\Phi_\theta$ be a neural flow with the generating neural network $\eta_\theta$, which is in $C^k$ on some compact set $\Omega$ and the identity outside of $\Omega$ for $k \geq 3$ and $\Psi^h(\eta_\theta)$ the corresponding discretized neural flow obtained by the Runge-Kutta scheme of order 2 with step size $h < \frac{1}{2 \Lip(\eta_\theta)}$. Then we have
    \begin{align}
        |\log|\det(D\Phi_\theta(y))| -\log|\det(D\Psi^h(y;\eta_\theta))||
        \leq \frac{1}{h} \left(\left( 2 \cdot h^3 C(\theta,d) +1\right)^d -1\right)
\end{align}
with 
\begin{align}
    \begin{split}
    C(\theta,d) &=   \frac{1}{24} \Gamma_{\theta,2} + \frac{d}{8}\Gamma_{\theta,1}\|\eta_\theta\|_{C^1} +  \|\eta_\theta \|_{C^2}  \frac{d C_\theta}{4}  \left(1 + \frac{hd}{2}\|\eta_\theta\|_{C^1}\right) \\
    &\quad +    \left(\left(1+\frac{h}{2}d\|\eta_\theta\|_{C^1} \right)^2 
      + \frac{h}{2} d\|\eta_\theta\|_{C^1}\right) C_\theta  (\e^{ \|\eta_\theta \|_{C^2} }-1)
    \end{split}
\end{align}
and $C_\theta = C(\eta_\theta)$.
Whenever not explicitly stated otherwise, we denote $\|\cdot\|_{C^k} := \| \cdot\|_{C^k(\R^d \times [0,1]; \R^d)}$.
\begin{proof}
    Since $\Phi^\theta$ and $\Psi_\theta^h$ are the identity outside of $\Omega$, we only regard $y \in \Omega$.
    By the chain rule and equation \eqref{eq:runge-kutta-composition}, we have
\begin{align}
    \begin{split}
    \log \frac{|\det \,D\Phi_\theta(y)|}{|\det \, D\Psi^h_\theta(y)|}
    &= \sum_{k=0}^{m-1}
    \log \frac{|\det \, D\Phi_{kh,(k+1)h}^\theta(\Phi_{0,kh}^\theta(y))|}{| \det \, D (\mathrm{id} + \psi_{kh}^h(\cdot; \eta_\theta))(\Psi_{\theta}^h(y))|}
    \end{split}
\end{align}
From $\Lip(\eta_\theta) < \tfrac{1}{2h}$, we have that
\begin{equation}
    \|D \psi_{kh}^h(\Psi_{\theta}^h(y);\eta_\theta)\|_2 \leq h \Lip(\eta_\theta) (1 + \tfrac{h}{2} \Lip(\eta_\theta)) < \tfrac{1}{2}.
\end{equation}
Hence, $D (\mathrm{id} + \psi_{kh}^h(\cdot; \eta_\theta)) = D \Psi_{kh}^h(\cdot; \eta_\theta)$ is non-singular.
By the elementary bound $\log(a/b) \leq (a - b)/b$ for $a, b > 0$, the matrix perturbation bound in Cor.\ 2.14 in \cite{determinants} and the Neumann series bound in Thm.\ V.6.1 in \cite{wernerEinfuehrungHoehereAnalysis2009a}, we have with $z:=\Phi_{0,kh}^\theta(y)$ and $z^h:=\Psi_\theta^h(y)$ that
\begin{equation}
     \log \frac{|\det \, D\Phi_{kh,(k+1)h}^\theta(z)|}{| \det \, D \Psi_{kh}^h(\cdot; \eta_\theta)(z^h)|}
     \leq (2 \cdot \| D \Phi_{kh, (k+1)h}^\theta (z) - D \Psi_{kh}^h(\cdot; \eta_\theta)(z^h)\|_2 + 1)^d - 1.
\end{equation}
for $k = 0, \ldots, m-1$.
Treating the spectral norm term, we find estimates for the triangle decomposition
\begin{align}
\label{eq:spectral-diff-decomposition}
\begin{split}
    \|D \Phi_{kh,(k+1)h}^\theta(z) - D \Psi_{kh}^h(z^h;\eta_\theta)\|_2
    \leq& \| D \Phi_{kh,(k+1)h}^\theta(z) - D\Psi_{kh}^h(z;\eta_\theta)\|_2 \\
    &+ \|D\Psi_{kh}^h(z;\eta_\theta) - D \Psi_{kh}^h(z^h;\eta_\theta)\|_2.
\end{split}
\end{align}
The first term in \eqref{eq:spectral-diff-decomposition} can be further decomposed as
\begin{align}
    \label{eq:sdc-first-term-decomposition}
    \begin{split}
    \|D\Phi&_{kh, (k+1)h}^\theta(z) - D\Psi^h_{kh}(z;\eta_\theta)\|_2   \\
    &\leq  \bigg\|\int_{kh}^{(k+1)h} D\eta_\theta(\Phi^\theta_{kh,\tau}(z), \tau)D\Phi^\theta_{kh,\tau}(z) \; d\tau \\
    &\quad- hD\eta_\theta \left(\Phi^\theta_{kh, kh+\frac{h}{2}}(z),kh+\tfrac{h}{2}\right)D\Phi^\theta_{kh,kh+\frac{h}{2}}(z)
    \bigg\|_\mathrm{F} \\
    &\quad + h\left\|D\eta_\theta\left(\Phi^\theta_{kh, kh+\tfrac{h}{2}}(z),kh+\tfrac{h}{2} \right) \right\|_\mathrm{F}  \left\|D\Phi^\theta_{kh,kh+\tfrac{h}{2}}(z) - \left(I + \tfrac{h}{2}D\eta_\theta(z,kh)\right) 
    \right\|_\mathrm{F} \\
    &\quad + h\left\| D\eta_\theta \left(\Phi^\theta_{kh, kh+\tfrac{h}{2}}(z),kh+\tfrac{h}{2} \right)-D\eta_\theta\left( z+\tfrac{h}{2}\eta_\theta(z,kh),kh+\tfrac{h}{2}\right) 
    \right\|_2 \\
    &\qquad\cdot \left\| I + \tfrac{h}{2}D\eta_\theta(z,kh) \right\|_2\\
    \end{split}
\end{align}
Here, we utilize the midpoint rule~\cite[Thm.\ 8.41]{RichterThomas2017EidN} and the left rectangle rule~\cite[Thm.\ 8.40]{RichterThomas2017EidN} to estimate the first and second term against expressions 
\begin{equation}
\label{eq:gamma-constants}
\Gamma^\theta_{m} := \sup_{x \in \Omega} \sup_{s,\tau \in [0,1]} \max_{i,j=1, \dots, d} \left| \frac{\partial^m}{\partial \tau^m} \left( D\eta_\theta(\Phi^\theta_{s,\tau}(x), \tau)D\Phi^\theta_{s,\tau}(x) \right)_{i,j} \right|.
\end{equation}
for $m = 1,2$ and the third term is bounded with \cref{theorem:discretized_ode-flow-bound} and the Runge-Kutta integration bound \cite[Thm.\ 3.1]{SolvingODE} and we obtain
\begin{align}
    \begin{split}
    \|D\Phi^\theta_{kh, (k+1)h}(z) - D\Psi^h_{kh, (k+1)h}(z;\eta_\theta)\|_2
    \leq& \tfrac{h^3}{24} \Gamma^\theta_{2} + \tfrac{dh^3}{8}\Gamma^\theta_{1} \|\eta_\theta\|_{C^1} \\
    &+ h^3d  \|\eta_\theta \|_{C^2}  \tfrac{C_{\theta}}{4}  \left(1 + \tfrac{hd}{2} \|\eta_\theta\|_{C^1} \right).
    \end{split}
\end{align}
Furthermore, the second term in \eqref{eq:spectral-diff-decomposition} is bounded by
\begin{align}
    \begin{split}
    &\|D\Psi_{kh}^h(z;\eta_{\theta}) - D \Psi_{kh}^h(z^h;\eta_\theta)\|_2\\
    &\leq h\left\|D\eta_\theta\left( z+\tfrac{h}{2}\eta_\theta(z,kh),kh+\tfrac{h}{2}\right) - D\eta_\theta\left(z^h+\tfrac{h}{2}\eta_\theta(z^h,kh),kh+\tfrac{h}{2}\right) \right\|_2 
    \\
    &\quad\cdot\left\|I + \tfrac{h}{2}D\eta_\theta(z,kh)\right\|_2\\
    &\quad + h \left\| D\eta_\theta\left( z^h+\tfrac{h}{2}\eta_\theta(z^h,kh),kh+\tfrac{h}{2}\right) \cdot \tfrac{h}{2}\left(D\eta_\theta(z^h,kh)-   D\eta_\theta(z,kh) \right)\right\|_2\\
    &\leq hd  \|\eta_\theta \|_{C^2} \left(1 + \tfrac{hd}{2}\|\eta_\theta\|_{C^1}\right)\left\| z+\tfrac{h}{2}\eta_\theta(z,kh) - y_{kh}-\tfrac{h}{2}\eta_\theta(y_{kh},kh) \right\|_2 \\
    &\quad + \tfrac{h^2}{2} d\|\eta_\theta\|_{C^1} \|\eta_\theta \|_{C^2}  \left\|  y_{kh}-z \right\|_2\\
      &\leq hd  \|\eta_\theta \|_{C^2}  \left(\left(1 + \tfrac{hd}{2}\|\eta_\theta\|_{C^1}\right) \left(1+\tfrac{h}{2}L_{\eta_\theta} \right) 
      + \tfrac{h}{2} d\|\eta_\theta\|_{C^1}\right)\left\|  y_{kh}-z \right\|_2\\
      &\leq  h^3  \left(\left(1+\tfrac{h}{2}d \|\eta_\theta\|_{C^1} \right)^2 
      + \tfrac{h}{2} d\|\eta_\theta\|_{C^1}\right)C_{\theta}  (\e^{ \|\eta_\theta \|_{C^2} }-1).
    \end{split}
\end{align}
Finally, we arrive at the following bound for \eqref{eq:spectral-diff-decomposition}:
\begin{align}
    \begin{split}
    \|D{\Phi}^\theta_{kh, (k+1)h}(z)&- D\Psi^h_{kh}(z^h; \eta_\theta)\|_2 \\
    &\leq h^3 \Big[ \tfrac{1}{24} \Gamma^\theta_{2} + \tfrac{d}{8}\Gamma^\theta_{1}\|\eta_\theta\|_{C^1} +  \|\eta_\theta \|_{C^2}  \tfrac{dC_\theta}{4}  \left(1 + \tfrac{hd}{2}\|\eta_\theta\|_{C^1} \right) \\
    &\quad +    \left(\left(1+\tfrac{h}{2}d\|\eta_\theta\|_{C^1} \right)^2 
      + \tfrac{h}{2} d\|\eta_\theta\|_{C^1}\right)C_\theta  (\e^{ \|\eta_\theta \|_{C^2} }-1) \Big]\\
      &= h^3 C(\theta,d).
    \end{split}
\end{align}
We finally conclude
\begin{align}
    \left| \log(|\det(D\Phi_\theta(y))|) -\log(|\det(D\Psi^h_\theta(y))|)  \right|
    \leq \frac{1}{h} \left(\left( 2 \cdot h^3 C(\theta,d) +1\right)^d -1\right).
\end{align}
\end{proof}

\end{theorem}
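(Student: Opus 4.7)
The plan is to exploit the compositional structure of both flows. Writing $\Phi_\theta$ as $m = 1/h$ successive sub-flows $\Phi^\theta_{kh,(k+1)h}$ and $\Psi^h_\theta$ as the $m$-fold composition of single RK2 steps, the chain rule converts the log-determinant difference into a telescoping sum
\begin{equation*}
\log|\det D\Phi_\theta(y)| - \log|\det D\Psi^h_\theta(y)| = \sum_{k=0}^{m-1} \log\frac{|\det D\Phi^\theta_{kh,(k+1)h}(z_k)|}{|\det D\Psi^h_{kh}(z^h_k;\eta_\theta)|},
\end{equation*}
where $z_k = \Phi^\theta_{0,kh}(y)$ and $z^h_k = \Psi^h_{0,kh}(y;\eta_\theta)$ are the respective continuous and discrete trajectories. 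The step-size condition $h < 1/(2\Lip(\eta_\theta))$ forces $\|D\psi^h_{kh}\|_2 < 1/2$, so $D\Psi^h_{kh}$ is non-singular by a Neumann series argument, justifying the division. I then use the elementary bound $\log(a/b) \leq (a-b)/b$ together with a matrix perturbation bound for determinants to majorize each summand by a polynomial (of degree $d$) in the spectral-norm difference $\delta_k := \|D\Phi^\theta_{kh,(k+1)h}(z_k) - D\Psi^h_{kh}(z^h_k;\eta_\theta)\|_2$.

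Next I would triangle-decompose $\delta_k$ into a local-truncation contribution $\|D\Phi^\theta_{kh,(k+1)h}(z_k) - D\Psi^h_{kh}(z_k;\eta_\theta)\|_2$, comparing exact and RK derivatives starting from the same base point, and a stability contribution $\|D\Psi^h_{kh}(z_k;\eta_\theta) - D\Psi^h_{kh}(z^h_k;\eta_\theta)\|_2$, absorbing the trajectory mismatch. For the local-truncation part I use the integral representation $D\Phi^\theta_{kh,(k+1)h}(z_k) = I + \int_{kh}^{(k+1)h} D\eta_\theta(\Phi^\theta_{kh,\tau}(z_k),\tau)\, D\Phi^\theta_{kh,\tau}(z_k)\,d\tau$ and approximate it by the midpoint rule, picking up an $O(h^3)$ remainder controlled by the quantities $\Gamma^\theta_1,\Gamma^\theta_2$ from \eqref{eq:gamma-constants}. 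The midpoint evaluation $\Phi^\theta_{kh,kh+h/2}(z_k)$ must then be replaced by the RK2 predictor $z_k + (h/2)\eta_\theta(z_k,kh)$, with the resulting gap bounded by Theorem \ref{theorem:discretized_ode-flow-bound}.

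For the stability contribution I would apply the Lipschitz continuity of $D\eta_\theta$ established in Lemma \ref{lem:lipschitz-Dxi} together with the RK2 global error bound from Theorem \ref{theorem:discretized_ode-flow-bound}, which converts the trajectory gap $\|z_k - z^h_k\|$ into another $O(h^3)$ contribution modulated by $C_\theta$ and an $\exp(\|\eta_\theta\|_{C^2})-1$ Gr\"onwall-type factor. Combining both parts yields a uniform per-step bound $\delta_k \leq h^3 C(\theta,d)$; feeding this into the matrix-perturbation inequality $\log(\cdot) \leq (2\delta_k + 1)^d - 1$ and summing the $m = 1/h$ telescope terms produces the $1/h$ prefactor in the asserted bound.

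The main obstacle is bookkeeping rather than conceptual: I must simultaneously track the midpoint quadrature remainder, the half-step predictor error, and the Gr\"onwall-type growth in the stability term, and combine $\|\eta_\theta\|_{C^1}$, $\|\eta_\theta\|_{C^2}$, $C_\theta$, and $\Gamma^\theta_{1},\Gamma^\theta_{2}$ into a single consolidated constant $C(\theta,d)$ with the right powers of $h$, $d$, and $\Lip(\eta_\theta)$. A subtler point is that the $d$-th power in the determinant perturbation bound is unavoidable, so the dimensional factor $(2 h^3 C(\theta,d) + 1)^d - 1$ appearing in the final expression reflects the genuine worst-case amplification when passing from the spectral norm on the Jacobians to the determinant; the $O(h^3)$ local order is only preserved in the regime where this quantity is small enough that the Taylor expansion $(2\delta_k+1)^d-1 \approx 2 d\delta_k$ applies.
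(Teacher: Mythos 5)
Your proposal follows essentially the same route as the paper's proof: the telescoping chain-rule decomposition over the $m=1/h$ steps, non-singularity of each RK2 step via the condition $h<1/(2\Lip(\eta_\theta))$, the bound $\log(a/b)\leq(a-b)/b$ combined with a determinant perturbation estimate giving $(2\delta_k+1)^d-1$ per step, and the split of $\delta_k$ into a midpoint-rule local truncation term (controlled by $\Gamma^\theta_1,\Gamma^\theta_2$ and \cref{theorem:discretized_ode-flow-bound}) plus a trajectory-mismatch stability term with the $C_\theta(\e^{\|\eta_\theta\|_{C^2}}-1)$ factor. No gaps beyond the bookkeeping you already flag; this matches the paper's argument.
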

Finally, we obtain the following bound on the model error:
\begin{theorem}{(Model Error Bound)}
    \label{theorem:model_error-bound}
    Let $\mu = \Phi^{-1}_*\nu \in \mathcal{T}_{d,k}$ and $\mathcal{H}^h_{L,W,K}$ fulfill \cref{eq:assumed-h-bound} and~\eqref{eq:bound-on-K} and $K > 2(k+1)$. 
    Furthermore, let
    \begin{equation}
    L \geq 7 + 2((k-1)-2)+ \lceil \log_2(d+1) \rceil + 2 (  \lceil  \log_2((d+1)(2k-1)) +  \log_2(\log_2(\|\xi\|_{C^k}))  \rceil+1). 
    \end{equation}
    Then, there exist constants $C_\xi, \hat{C}_\xi$ only depending on $k,d$ and the corresponding vector field $\xi$ to $\Phi$, such that we have
    \begin{align}
        &\inf_{\Phi_\theta^h \in \mathcal{H}^h_{L,W,K}} \mathrm{KL}(\mu \| \left(\Psi^h(\xi_\theta)\right)^{-1}_*\nu) \leq   C_\xi \frac{1}{K^{k-1}} + \widehat{C}_\xi h^2.
    \end{align}
\end{theorem}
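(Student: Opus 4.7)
The plan is to decompose the KL divergence into a deterministic model approximation error (controlled by Theorem~\ref{theorem:approx_neuralflow}) and a numerical discretization error (controlled by Theorem~\ref{theorem:discretized_ode_logdet-bound}). Using the change-of-variables identity together with Liouville's formula (Lemma~\ref{lem:logdet-formula}), for $\nu = (\Psi^h(\eta_\theta))^{-1}_* \nu_0$ we can write
\begin{align*}
\mathrm{KL}(\mu\|\nu) = \int_{\Omega_d} \bigl[\log|\det D\Phi(x;\xi)| - \log|\det D\Psi^h(x;\eta_\theta)|\bigr]\, f_\mu(x)\, \mathrm{d}x,
\end{align*}
since $\mu = (\Phi(\xi))^{-1}_*\nu_0$ with $f_\mu(x) = |\det D\Phi(x;\xi)|$ (and analogously for the model density).

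Adding and subtracting $\log|\det D\Phi_\theta(x)| = \log|\det D\Phi(x;\eta_\theta)|$ and applying the triangle inequality under the integral splits the integrand into two pieces. To the first piece I apply Theorem~\ref{theorem:approx_neuralflow}: choosing $\theta$ so that the $\mathtt{ReQU}$-network $\eta_\theta$ realizes the universal approximation with parameter $K$ (using the lower bound on $L$ and the coupling $W = W(K)$ from~\eqref{eq:bound-on-K}, together with $K > 2(k+1)$), we obtain the pointwise bound $C_\xi/K^{k-1}$ on the first piece. For the second piece I apply Theorem~\ref{theorem:discretized_ode_logdet-bound} to this same $\theta$, whose hypothesis $h < 1/(2\Lip(\eta_\theta))$ is exactly~\eqref{eq:assumed-h-bound}. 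This gives a bound of the form $h^{-1}\bigl((2h^3 C(\theta,d)+1)^d - 1\bigr)$.

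To extract the desired $h^2$ rate, I Taylor expand: for $h$ small, $(1+2h^3 C(\theta,d))^d - 1 = 2dh^3 C(\theta,d) + O(h^6)$, so dividing by $h$ yields $2dh^2 C(\theta,d) + O(h^5)$. After integrating against $f_\mu$ (which contributes a factor of $1$) and combining, we arrive at
\begin{align*}
\mathrm{KL}(\mu\|\nu) \leq \frac{C_\xi}{K^{k-1}} + 2d\, C(\theta,d)\, h^2 + O(h^5).
\end{align*}

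The main obstacle is that $C(\theta,d)$ depends on $\|\eta_\theta\|_{C^1}$, $\|\eta_\theta\|_{C^2}$, the constants $\Gamma^\theta_1, \Gamma^\theta_2$ from~\eqref{eq:gamma-constants}, and $C_\theta$, all of which a priori depend on the particular network realizing the universal approximation. However, since by Theorem~\ref{theorem:approx_neuralflow} we have $\|\xi - \eta_\theta\|_{C^\ell} \leq c(d,k,\xi) K^{-(k-\ell)}$ for $\ell = 0,1,2,3$, these network norms are uniformly bounded in terms of $\|\xi\|_{C^2}$ plus a vanishing remainder, and the $\Gamma^\theta_m$ can be estimated in terms of such norms via standard ODE stability arguments. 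This allows me to absorb all $\theta$-dependence into a constant $\widehat{C}_\xi$ depending only on $\xi$, $k$ and $d$, yielding the asserted bound $C_\xi K^{-(k-1)} + \widehat{C}_\xi h^2$.
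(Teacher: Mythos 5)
Your proposal is correct and follows essentially the same route as the paper's proof: the same decomposition inserting the continuous neural flow $\Phi_\theta$, the application of \cref{theorem:approx_neuralflow} and \cref{theorem:discretized_ode_logdet-bound} to the two resulting log-determinant differences, and the uniform-in-$\theta$ bound on $C(\theta,d)$ via the $C^\ell$-approximation bounds and Gronwall-type estimates for the $\Gamma_m^\theta$. The only cosmetic difference is that the paper extracts the $h^2$ rate with the explicit elementary bound $(1+h^3\widetilde{C}_\xi)^d-1\leq d(d!)h^3\widetilde{C}_\xi^d$ rather than your Taylor expansion with an $O(h^5)$ remainder, which is immaterial.
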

\begin{proof}
    For $\mu = \Phi^{-1}_*\nu \in \mathcal{T}$ there exists a $\mathrm{ReQU}$-network $\eta_\theta$ from \cref{theorem:approx_neuralflow} such that we have
\begin{align}
    \begin{split}
    \mathrm{KL}(\mu \| \left(\Psi_\theta^h\right)^{-1}_*\nu) =&    
    \mathrm{KL}(\mu \| \left(\Phi_\theta\right)^{-1}_*\nu)  - \mathrm{KL}(\mu \| \Phi^{-1}_*\nu) \\
    &+  \mathrm{KL}(\mu \| \left(\Psi_\theta^h\right)^{-1}_*\nu) -  \mathrm{KL}(\mu \| \left(\Phi_\theta\right)^{-1}_*\nu).
    \end{split}
\end{align}
All divergences are well-defined since $\Phi$, $\Phi_\theta$ and $\Psi^h(\cdot, \eta_\theta)$ are bijective and the identity outside of $\Omega_d$.
By the change of variables formula and \cref{theorem:approx_neuralflow,theorem:discretized_ode_logdet-bound} we have
\begin{align}
    \begin{split}\label{eq:bound KL discretized}
    d_{KL}(\mu \| \left(\Psi_\theta^h\right)^{-1}_*\nu)
    &\leq \int_{\Omega_d} \left| \log \frac{|\det \, D_y (\Phi_{\theta})^{-1}(y)|}{|\det \, D_y \Phi^{-1}(y)|} \right| 
    + \left| \log \frac{|\det\, D_y (\Phi_\theta)^{-1}(y)|}{|\det D_y \Psi^h(y;\eta_\theta)|}\right| \mathrm{d}\mu(y) \\
    &\leq C_\xi \frac{1}{K^{k-1}} + \frac{1}{h} \left(\left( 2 \cdot h^3 C(\theta,d) +1\right)^d -1\right)
    \end{split}
\end{align}
In order to show that $C(\theta,d)$ can be bounded uniformly in $\theta$, we regard the constants $\Gamma_m^\theta$ defined in \cref{eq:gamma-constants}.
By the triangle inequality to $\xi$, \cref{theorem:approx_neuralflow,C ell bound,eq:estimate_xi_norm}, there are constants $a_i = a_i(k,d,\xi)$ such that $\| \eta_\theta\|_{C^i} \leq a_i$ for $i = 0, \ldots, 3$.
We regard the expression $f_{i,j}(\tau) := [D \eta_\theta(\Phi^\theta_{s,\tau}(y, \tau) \cdot D \Phi^\theta_{s,\tau}(y)]_{i,j}$ appearing in $\Gamma^\theta_m$.
Using the $\theta$-independent the Gronwall bound
\begin{equation}
    |\partial_{y_j} (\Phi^\theta_{s,\tau})_k(y)| \leq \| D \Phi^\theta_{s,\tau}(y)\|_2 \leq \e^{d a_1}
\end{equation}
which implies $|f_{i,j}(\tau)| \leq d a_1 \e^{da_1}$.
Similarly, $f_{i,j}'$ contains second spatial derivatives of $\eta_\theta$ and we have
\begin{equation}
    |f_{i,j}'(\tau)| \leq d [a_2 (d a_0 + 1) + a_1^2] \e^{d a_1} =: \Gamma_1
\end{equation}
as well, as
\begin{equation}
    |f_{i,j}''(\tau)| \leq (d a_0 + 1) [(d^2 a_0 + 1) a_3 \e^{d a_1} + d a_1 a_2 (3 \e^{d a_1} + 1)] + d a_1^3 \e^{d a_1} =: \Gamma_2.
\end{equation}
We can, therefore, upper bound $\Gamma_1^\theta$ and $\Gamma_2^\theta$ uniformly in $\theta$ which allows for $2 C(\theta,d) \leq \widetilde{C}_\xi$ and with $\widehat{C}_\xi := d (d!)  \widetilde{C}_\xi^d$

\begin{align}
    \begin{split}
    \mathrm{KL}(\mu \| \left(\Phi_\theta^h\right)^{-1}_*\nu)
    &\leq C_\xi \frac{1}{K^{k-1}} + \frac{1}{h} \left(\left(  h^3 \widetilde{C}_\xi +1\right)^d -1\right)
    \leq C_\xi \frac{1}{K^{k-1}} + \frac{1}{h} \left(d (d!) h^{3} \widetilde{C}_\xi^d \right)\\
    &\leq C_\xi \frac{1}{K^{k-1}} + \widehat{C}_\xi h^2.
    \end{split}
\end{align}
\end{proof}

\subsection{Generalization Error Bounds}
The universal approximation property allowed us to derive a deterministic bound on the model error.
Meanwhile, under our assumptions, we can only hope to find probabilistic bounds for the generalization error
\begin{equation}
    \varepsilon_n^\mathrm{gen} = \varepsilon_n^\mathrm{gen}(\chi_n) = \sup_{\widetilde{\nu} \in \mathcal{H}_n} \left| \mathbb{E}_{X \sim \mu} [- \log(f_{\widetilde{\nu}}(X))] - \widehat{L}_n(\widetilde{\nu}, \chi_n) \right|.
\end{equation}
The main idea is to apply McDiarmid's concentration inequality to the generalization error to obtain a probabilistic bound for the deviation of the generalization error from its expected value. 
By finding a deterministic upper bound for the expected generalization error using Dudley's inequality, we will then be able to obtain a probabilistic bound for the generalization error.

Throughout this section we will again assume that \cref{eq:assumed-h-bound} holds
and write $\ThetaLW := [-1,1]^q$ for the parameter space.
Let $\mu \in \mathcal{T}_{d,k}$.
We will first show the sub-Gaussian property of the process $\{Z_{\theta, n}\}_{\theta \in \Theta}$ with
\begin{equation}
    Z_{\theta, n} = \mathbb{E}_{X \sim \mu}[- \log(f_{{\mu}_{\theta}}(X))] - \widehat{L}_n({\mu}_{\theta}, \chi_n)
\end{equation}
with respect to the parameter metric $d_\Theta$.
\begin{theorem}
\label{theorem:subgaussian-property}
    $\{Z_{\theta,n}\}_{\theta \in \ThetaLW}$ with $\ThetaLW \subseteq [-1, 1]^q$ is a sub-Gaussian process with variogram proxy $\Lambda_Z^2/n \cdot  d_\Theta$, where $\Lambda_Z = (8 d \Lambda^5 \e^\Lambda)^d$.
\end{theorem}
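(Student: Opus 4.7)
The strategy is to reduce the sub-Gaussian claim to a uniform $\theta$-Lipschitz estimate on the log density $g_\theta(x) := -\log f_{\mu_\theta}(x)$, and to obtain that Lipschitz estimate by combining the change-of-variables formula, the telescoping Runge-Kutta product structure, and the parameter and spatial Lipschitz lemmas established earlier. Setting $Y_i := g_\theta(X_i) - g_{\theta'}(X_i)$ one has
\begin{equation*}
Z_{\theta,n} - Z_{\theta',n} \;=\; \mathbb{E}[Y_1] - \frac{1}{n}\sum_{i=1}^{n} Y_i,
\end{equation*}
so that if $M := \|g_\theta - g_{\theta'}\|_\infty$ is finite, then the $Y_i$ are i.i.d. with values in $[-M, M]$ and Hoeffding's inequality implies that $Z_{\theta,n} - Z_{\theta',n}$ is sub-Gaussian with variance proxy $M^2 / n$. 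It therefore suffices to establish the deterministic bound $M \leq \Lambda_Z \|\theta - \theta'\|_2$ uniformly in $\theta, \theta' \in \ThetaLW$.

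To bound $M$, I would use the change-of-variables formula: since \eqref{eq:assumed-h-bound} together with \cref{RK bijective} guarantee that $\Psi^h_\theta$ is a diffeomorphism of $\Omega_d$, and since $\nu_0$ is uniform on $\Omega_d$, one has $g_\theta(x) = \log|\Omega_d| - \log|\det D\Psi^h_\theta(x)|$ on $\Omega_d$. Writing the Runge-Kutta map as $\Psi^h_\theta = \prod_{k=0}^{m-1}(\mathrm{id} + \psi^{\theta,h}_{kh})$ and applying the chain rule produces the telescoping identity
\begin{equation*}
\log|\det D\Psi^h_\theta(x)| = \sum_{k=0}^{m-1} \log|\det A_k(\theta, x)|, \qquad A_k(\theta,x) := D(\mathrm{id} + \psi^{\theta,h}_{kh})(z^\theta_k),
\end{equation*}
where $z^\theta_k := \Psi^h_{0,kh}(x;\eta_\theta)$. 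Because \eqref{eq:assumed-h-bound} also forces $\|A_k(\theta,x) - I\|_2 < 1/2$, each $A_k$ is uniformly invertible, and the Neumann-series/Cor.~2.14 of \cite{determinants} argument used in the proof of \cref{theorem:discretized_ode_logdet-bound} bounds each per-step log-determinant difference by $(1 + 2\|A_k(\theta,x) - A_k(\theta',x)\|_2)^d - 1$.

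For the inner matrix difference I would use the triangle decomposition
\begin{equation*}
\|A_k(\theta,x) - A_k(\theta',x)\|_2 \leq \|A_k(\theta, z^\theta_k) - A_k(\theta, z^{\theta'}_k)\|_2 + \|A_k(\theta, z^{\theta'}_k) - A_k(\theta', z^{\theta'}_k)\|_2.
\end{equation*}
The first summand is handled by the spatial Lipschitz bound $\widetilde\Lambda_{D\eta}^\Omega \leq \Lambda^2$ of \cref{lem:lipschitz-Dxi} (propagated through the chain-rule expression for $D\psi^{\theta,h}_{kh}$) combined with the trajectory deviation $\|z^\theta_k - z^{\theta'}_k\|_2 \leq \LamRK^\Theta \|\theta - \theta'\|_2$ from \cref{lem:lipschitz-flow}; the second summand is handled directly by the parameter Lipschitz bound $\widetilde\Lambda_{D\eta}^\Theta \leq \Lambda^4$, again from \cref{lem:lipschitz-Dxi}. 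Inserting the remark estimates $\Lambda_\Psi^\Theta \leq 2\Lambda^2$ and $\LamRK^\Theta \leq 2\Lambda^2 \mathrm{e}^\Lambda$ and summing over the $m = 1/h$ Runge-Kutta steps (whose $h$-factors cancel the step count exactly as in the proof of \cref{theorem:discretized_ode_logdet-bound}) collapses all the constants into a compound Lipschitz coefficient of order $8 d \Lambda^5 \mathrm{e}^\Lambda$ inside the $d$-th-power determinant expansion, finally yielding $M \leq (8 d \Lambda^5 \mathrm{e}^\Lambda)^d \|\theta - \theta'\|_2 = \Lambda_Z \|\theta - \theta'\|_2$.

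The main obstacle is the tight bookkeeping of constants in this last step: the $(1 + \cdot)^d - 1$ form naturally produced by the determinant comparison yields a bound of the shape $(1 + C\|\theta - \theta'\|_2)^d - 1$, so that matching the stated constant $\Lambda_Z = (8 d \Lambda^5 \mathrm{e}^\Lambda)^d$ will require collecting all the Lipschitz-constant contributions at the base of the $d$-th power rather than as a prefactor, and carefully accommodating the factor $d$ arising from the derivative of $(1+x)^d$. Conceptually the argument directly mirrors the proof of \cref{theorem:discretized_ode_logdet-bound}; the only new ingredient is replacing the small parameter $h^3$ there by $\|\theta-\theta'\|_2$ here.
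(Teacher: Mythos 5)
Your proposal is correct and follows essentially the same route as the paper: bound the increment $Z_{\theta,n}-Z_{\theta',n}$ by a uniform bound on the log-determinant difference, telescope over the Runge--Kutta steps with the $(1+2\|\cdot\|_2)^d-1$ determinant-perturbation bound, split each step into a parameter part (via \cref{lem:lipschitz-Dxi}) and a trajectory-deviation part (via \cref{lem:lipschitz-flow}), and finish with Hoeffding's inequality. The constant bookkeeping you flag as the remaining obstacle is handled in the paper exactly as you anticipate, by a binomial expansion of the $d$-th power with $\mathrm{diam}(\ThetaLW)\leq\sqrt{2W^2L}\leq\Lambda$ absorbed into the base, yielding $\tfrac12(8d\Lambda^5\e^\Lambda)^d\|\theta-\theta'\|_2$ per the supremum term.
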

\begin{proof}
    Let $\theta, \theta' \in \Theta$ and $X \sim \mu$.
    Then, by \cref{lem:logdet-formula}, we have
\begin{align}
    |Z_\theta - Z_{\theta^\prime}| 
    &\leq  2\sup_{x \in  [0,1]^d}\left| \log\left(\left|\det(D\Psi_\theta^h(x))\right|\right) - \log\left(\left|\det(D\Psi_{\theta^\prime}^h(x))\right|\right) \right|.
\end{align}
Similarly to the proof of \cref{theorem:discretized_ode_logdet-bound} we obtain with $y_{kh}^\theta := \Psi_{kh}^h(y; \eta_{\theta})$
\begin{align}
\label{eq: bound log det difference discretized}
    &\left| \log \frac{|\det(D\Psi_\theta^h(y))|)}{|\det(D\Psi^h_{\theta^\prime}(y))| }  \right|
    &\leq \sum_{k=0}^{m-1}(2 \| D\psi_{kh}^h(y^\theta_{kh};\eta_\theta) - D\psi_{kh}^h(y^{\theta^\prime}_{kh};\eta_{\theta'}) \|_2+1)^d
-1.
\end{align}
Due to \cref{cor:dnn-lipschitz-z}, we have $\|D \eta_\theta(z)\|_2 \leq \Lambda$
which yields by triangle decomposition with $t \geq 0$ and $x, y \in \Omega$
\begin{align}
    \begin{split}
    \| [D \Psi_{t}^h(\cdot; \eta_\theta)](x) &- [D \Psi_t^h(\cdot; \eta_{\theta'})](x)\|_2
    =
     \|D\psi_{t}^h(x; \eta_\theta) - D \psi_{t}^h(x; \eta_{\theta'}) \|_2 \\
     \leq& \frac{h^2}{2} \Lip(\eta_\theta) \widetilde{\Lambda}_{D \eta}^\Theta \| \theta - \theta'\|_2 \\
     &+ \frac{h^2}{2} \widetilde{\Lambda}_{D\eta}^\Omega \|\eta_\theta (x, t) - \eta_{\theta'}(x, t)\|_2 (1 + \tfrac{h}{2} \Lip(D\eta_{\theta'}))  \\
     \leq& \frac{h^2}{2} \left( \widetilde{\Lambda}_\eta^\Omega \widetilde{\Lambda}_{D \eta}^\Theta + \widetilde{\Lambda}_{D\eta}^\Omega \widetilde{\Lambda}_\eta^\Theta (1 + \tfrac{h}{2} \widetilde{\Lambda}_\eta^\Omega) \right) \| \theta - \theta'\|_2 \\
     \leq& \tfrac{h^2}{2} [\Lambda^5 + \Lambda^4 (1 + \tfrac{h}{2} \Lambda)] \|\theta - \theta'\|_2
    \leq h\Lambda^4 \|\theta - \theta'\|_2.
    \end{split}
\end{align}
as well as
\begin{align}
    \begin{split}
    \| D \psi_{t}^{h} (x; \eta_{\theta}) -& D \psi_{t}^{h}(x; \eta_{\theta})\|_{2} \\
    \leq& h \Lip (\eta_{\theta}) \tfrac{h}{2} \Lip(D \eta_{\theta}) \|x - y\|_{2} \\
    &+ h \Lip(D \eta_{\theta}) \|x + \tfrac{h}{2} \eta_{\theta}(x) - y - \tfrac{h}{2} \eta_{\theta}(y)\|_{2} (1 + \tfrac{h}{2} \Lip(\eta_{\theta}) ) \\
	\leq& h ( \tfrac{h}{2} \widetilde{\Lambda}_{\eta}^{\Omega} \widetilde{\Lambda}_{D\eta}^{\Omega} + \widetilde{\Lambda}_{D \eta}^{\Omega} (1 + \tfrac{h}{2} \widetilde{\Lambda}_{\eta}^{\Omega})^{2}) \|x - y\|_{2} \\
	\leq& h (\tfrac{h}{2} \Lambda^{3} + \Lambda^{2} (1 + \tfrac{h}{2} \Lambda)^{2}) \|x - y\|_2.
    \end{split}
\end{align}
From the latter, we find for any $k = 0, \ldots, m-1$
\begin{align}
    \begin{split}
    \| [D \Psi_{kh, \theta}^{h}]&(\Psi_{0, kh}^{h}(x; \eta_{\theta}))- [D \Psi_{kh, \theta}^{h}](\Psi_{0, kh}^{h}(x; \eta_{\theta'}))\|_{2} \\
=& h \|[D \psi_{kh}^{h}(\cdot; \eta_{\theta})] (\Psi_{0, kh}^{h}(x; \eta_{\theta})) - [D \psi_{kh}^{h}(\cdot; \eta_{\theta})] (\Psi_{0, kh}^{h}(x; \eta_{\theta'}))\|_{2} \\
\leq& h^{2} ( \tfrac{h}{2} \widetilde{\Lambda}_{\eta}^{\Omega} \widetilde{\Lambda}_{D\eta}^{\Omega} + \widetilde{\Lambda}_{D \eta}^{\Omega} (1 + \tfrac{h}{2} \widetilde{\Lambda}_{\eta}^{\Omega})^{2}) \|\Psi_{0, kh}^{h}(x; \eta_{\theta}) - \Psi_{0, kh}^{h}(x; \eta_{\theta'})\|_{2} \\
\leq& h^{2} ( \tfrac{h}{2} \widetilde{\Lambda}_{\eta}^{\Omega} \widetilde{\Lambda}_{D\eta}^{\Omega} + \widetilde{\Lambda}_{D \eta}^{\Omega} (1 + \tfrac{h}{2} \widetilde{\Lambda}_{\eta}^{\Omega})^{2}) \e^{\widetilde{\Lambda}_{\eta}^{\Omega}} \Lambda_{\Psi}^{\Theta} \|\theta- \theta'\|_{2} \\
\leq& h^2 [\tfrac{h}{2} \Lambda^3 + \Lambda^2 (1 + \tfrac{h}{2} \Lambda)^2] \e^{\Lambda} h \Lambda^2 (1 + \tfrac{h}{2}\Lambda) \|\theta - \theta'\|_2 \\
\leq& h \Lambda^2 \e^{\Lambda} \|\theta - \theta'\|_2.
    \end{split}
\end{align}
These two results together allow us to bound
\begin{align}
    \begin{split}
    \| D \Psi_{kh,\theta}^{h}(\Psi_{0, kh}^{h}(x; \eta_{\theta})) -& D \Psi_{kh,\theta'}^{h}(\Psi_{0, kh}^{h}(x; \eta_{\theta'})) \|_{2}\\
\leq& [h \Lambda^{2} \e^{\Lambda} + h \Lambda^{4}] \| \theta - \theta' \|_{2}
\leq 2 h \Lambda^{4} \e^{\Lambda} \| \theta- \theta' \|_{2}
    \end{split}
\end{align}

Finally, this allows us to bound the log-determinant difference
\begin{align}
    \begin{split}
    \big| \log(|\det(D\Psi_\theta^h(x))|) -&\log(|\det(D\Psi^h_{\theta^\prime}(x))|)  \big| \\
    \leq&\sum_{k=0}^{m-1} \left[2 \| D \Psi_{kh,\theta}^{h}(\Psi_{0, kh}^{h(x; \eta_{\theta}))}- D \Psi_{kh, \theta'}^{h}(\Psi_{0, kh}^{h}(y; \eta_{\theta})) \|_{2} + 1\right]^{d} - 1 \\
    \leq& \sum_{k=0}^{m-1} \left[4 h \Lambda^{4} \e^{\Lambda} \| \theta - \theta' \|_{2} + 1\right]^{d} - 1 \\
    \leq& m \sum_{j=1}^{d} \binom{d}{j} [4 h \Lambda^{4} \e^{\Lambda}]^{j} \mathrm{diam}(\ThetaLW)^{j-1} \| \theta - \theta' \|_{2} \\
    \leq& \frac{1}{2} d (d!) 8^{d} \sqrt{2 W^{2}L}^{d} (\Lambda^{4} \e^{\Lambda})^{d} \| \theta- \theta' \|_{2} \\
    \leq& \frac{1}{2} (8 d \Lambda^5 \e^{\Lambda})^d \|\theta - \theta'\|_2
    =: \frac{1}{2} \Lambda_Z \|\theta - \theta'\|_2,
    \end{split}
\end{align}
where $x \in \Omega_d$ and we have used that $\mathrm{diam}(\ThetaLW) \leq 2 W^2 L$ because in each layer we have less than $W$ weights each in the weight matrix and the bias and also that $\sqrt{2 W^2 L} \leq \Lambda$.
This yields
\begin{align}
    \begin{split}
      |Z_\theta - Z_{\theta^\prime}| &\leq   2  \sup_{x \in  \Omega_d}\left| \log\left(\left|\det(D\Psi_\theta^h(x))\right|\right) - \log\left(\left|\det(D\Psi_{\theta^\prime}^h(x))\right|\right) \right|\\
      &\leq \Lambda_Z \|\theta-\theta^\prime\|_2 = \Lambda_Z \cdot d_\Theta(\theta,\theta^\prime).
    \end{split}
\end{align}
Hoeffding's inequality (Lemma 4.5 in \cite{Shalev-Shwartz_Ben-David_2014}) completes the proof.
\end{proof}
Now, we can apply Dudley's inequality to obtain an upper bound for the expected generalization error.
\begin{theorem}{(Bound on Expected Generalization Error)}\label{theorem:bound-generalization-error}
    With $\Lambda_Z$ as in the lemma above, it holds for all $n \in \N$ and $\mathcal{H} = \mathcal{H}^h_{L,W}$ with \cref{eq:assumed-h-bound} fulfilled, that 
    \begin{equation}
     \mathbb{E}[\varepsilon_{\text{gen}}(n)] \leq 48  L W^2 \frac{\Lambda_Z}{\sqrt{n}}.
    \end{equation}
\end{theorem}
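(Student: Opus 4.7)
The plan is to recognize that $\varepsilon_n^\mathrm{gen}(\chi_n) = \sup_{\theta \in \ThetaLW} |Z_{\theta,n}|$ and that $\{Z_{\theta,n}\}_{\theta \in \ThetaLW}$ is a \emph{centered} process, since $\widehat L_n(\mu_\theta, \chi_n) = -\tfrac{1}{n}\sum_i \log f_{\mu_\theta}(X_i)$ is an unbiased estimator of $\mathbb{E}_{X\sim\mu}[-\log f_{\mu_\theta}(X)]$. By \cref{theorem:subgaussian-property} this process is sub-Gaussian with respect to the Euclidean parameter metric $d_\Theta$ with variogram proxy $\sigma^2 d_\Theta$, where $\sigma = \Lambda_Z/\sqrt{n}$. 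Symmetrisation together with centering gives $\mathbb{E}[\sup_\theta |Z_{\theta,n}|] \leq 2\,\mathbb{E}[\sup_\theta Z_{\theta,n}]$, so it suffices to control the one-sided supremum.

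Next I would invoke Dudley's entropy integral inequality for centered sub-Gaussian processes, yielding a bound of the form
\begin{equation}
    \mathbb{E}\Bigl[\sup_{\theta \in \ThetaLW} Z_{\theta,n}\Bigr] \leq C_1 \,\frac{\Lambda_Z}{\sqrt{n}} \int_0^{\mathrm{diam}(\ThetaLW)} \sqrt{\log N(\ThetaLW, d_\Theta, \varepsilon)} \, \mathrm{d}\varepsilon
\end{equation}
for a universal constant $C_1$. The remaining task is then purely metric: estimate the metric entropy of the cube $\ThetaLW = [-1,1]^q$ with $q = L \cdot W \cdot (W+1) \leq 2 L W^2$ in the Euclidean metric.

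For this, I would use the standard cubical covering argument: subdividing $[-1,1]^q$ into $k^q$ axis-aligned subcubes of side $2/k$ each having Euclidean diameter $2\sqrt{q}/k$, one obtains
\begin{equation}
    N(\ThetaLW, \|\cdot\|_2, \varepsilon) \leq \Bigl( 1 + \tfrac{\sqrt{q}}{\varepsilon}\Bigr)^{q},
\end{equation}
while $\mathrm{diam}(\ThetaLW) \leq 2\sqrt{q}$. Substituting $v = \varepsilon/(2\sqrt{q})$ reduces the entropy integral to $2q \int_0^{1} \sqrt{\log(1 + 1/(2v))}\, \mathrm{d}v$, and the inner integral is a purely numerical constant bounded independently of $q$. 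Thus the entropy integral is $O(q) = O(LW^2)$, and carrying the universal constants through produces the claimed bound $48\, L W^2 \Lambda_Z/\sqrt{n}$.

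The main obstacle here is purely bookkeeping of the numerical constants: one has to pick a version of Dudley with a sufficiently sharp constant (e.g.\ the van Handel form), combine it cleanly with the factor $2$ from passing to $|Z_\theta|$, and absorb the $\int_0^1 \sqrt{\log(1 + 1/(2v))}\,\mathrm{d}v = \tfrac{\sqrt\pi}{2}$-type numerical factor together with the $q \leq 2 L W^2$ dimension count to land on exactly $48$. Conceptually the argument is routine: the nontrivial work (sub-Gaussianity with the explicit Lipschitz-in-$\theta$ control) was already done in \cref{theorem:subgaussian-property}, so the present step is the standard Dudley-plus-cube-covering combination applied to a parameter space of dimension $q$.
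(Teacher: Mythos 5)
Your route is essentially the paper's: identify the generalization error with the supremum of the centered process $Z_{\theta,n}$, use the sub-Gaussianity established in \cref{theorem:subgaussian-property}, apply Dudley's entropy integral, and control the covering numbers of the cube $\ThetaLW=[-1,1]^q$ by the standard volumetric argument. The paper does exactly this, with $N(\varepsilon,d_\Theta)\le(1+2\sqrt{q}/\varepsilon)^q$ and the constant-$12$ form of Dudley's inequality (van Handel, Cor.\ 5.25), and it calibrates the constant $48$ to the \emph{one-sided} supremum $\mathbb{E}[\sup_\theta Z_{\theta,n}]$ (the absolute value in $\varepsilon_n^{\mathrm{gen}}$ is silently dropped there).

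Two concrete points, however. First, the step ``symmetrisation together with centering gives $\mathbb{E}[\sup_\theta|Z_{\theta,n}|]\le 2\,\mathbb{E}[\sup_\theta Z_{\theta,n}]$'' is not valid for a general centered process: with a single index and $Z$ a symmetric $\pm1$ variable the left side is $1$ and the right side $0$. What rescues the two-sided bound here is structural, not generic: $\theta=0$ lies in $\ThetaLW$, $\eta_0\equiv 0$, so $\Psi^h_0=\mathrm{id}$ and $Z_{0,n}\equiv 0$; hence $\sup_\theta Z_{\theta,n}$ and $\sup_\theta(-Z_{\theta,n})$ are both a.s.\ nonnegative, $\sup_\theta|Z_{\theta,n}|\le \sup_\theta Z_{\theta,n}+\sup_\theta(-Z_{\theta,n})$, and each term is bounded by the same Dudley integral since $-Z$ has the same sub-Gaussian increments (equivalently, use an oscillation/two-sided form of the chaining bound). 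Second, and more consequential for the statement as written, the factor $2$ you introduce makes the explicit constant $48$ unreachable along your route with the cited constant-$12$ Dudley inequality: your own numbers give roughly $2\cdot 12\cdot 2q\int_0^1\sqrt{\log(1+1/(2v))}\,\mathrm{d}v\approx 45\,q$ with $q=LW(W+1)$, which exceeds $48\,LW^2$ for small and moderate $W$ (and is about twice the target when $q\approx 2LW^2$). So ``pick a sharper version of Dudley'' is doing real, unsupplied work; the deferred bookkeeping is a genuine gap, not routine. Either bound only the one-sided supremum as the paper does (after justifying why that suffices, or accepting the paper's convention), or keep the two-sided bound but then track the chaining constant explicitly rather than asserting that $48$ comes out.
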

\begin{proof}
    As $\ThetaLW \subseteq [-1,1]^q$ with $q \leq L(W+1)W$ is a $\|\cdot\|_2$-compact metric space, $\mathcal{H}$ is a $d_\Theta$-compact space. Obviously, $Z_{\theta,n}$ has continuous paths. With the above lemma this shows that all of the conditions of Dudley's inequality are fulfilled. For the covering number we obtain 
\begin{equation}
N(\varepsilon, d_\Theta, \mathcal{H}) \leq \left(1+ \frac{2\sqrt{q}}{\varepsilon} \right)^q 
\end{equation}
as in Example 13.7 (i) in \cite{MFML}, which can be proven using the comparison of volumes of multidimensional-balls. It suffices to show that the packing number is less or equal than $\left(1+ \frac{2\sqrt{q}}{\varepsilon} \right)^q$, as the covering number obviously cannot be larger than the packing number. By definition of the packing number $D(\varepsilon)$ there exist $\theta_1, \dots, \theta_{D(\varepsilon)}$ centers in $\Theta$ with distance of at least $\varepsilon$ to each other. The $\frac{\varepsilon}{2}$-balls around these centers are thus disjoint and contained in $\overline{B}_{\sqrt{q}+\frac{\varepsilon}{2}}(0)$ since $\Theta \subset \overline{B}_{\sqrt{q}}(0)$. Using the formula for the volume of multidimensional-balls and the fact that the combined volume of all of the $\frac{\varepsilon}{2}$-balls around the centers cannot exceed the volume of the $(\sqrt{q}+\frac{\varepsilon}{2})$-ball, we obtain
\begin{equation}
D(\varepsilon, d_\Theta, \mathcal{H}) \leq \left(1+ \frac{2\sqrt{q}}{\varepsilon} \right)^q. 
\end{equation}
According to Dudley's inequality~\cite[Cor.\ 5.25]{van2014probability} we obtain
\begin{align}
    \begin{split}
    \mathbb{E}[\varepsilon_{\mathrm{gen}}(n)] &= \mathbb{E}[\sup_{\theta \in \Theta} Z_{\theta,n}] 
    \leq 12 \frac{\Lambda_Z}{\sqrt{n}}\int_0^\infty \sqrt{\log(N(\varepsilon, d_\Theta, \mathcal{H}))} \;d\varepsilon \\
    &\leq \frac{12 \Lambda_Z q}{\sqrt{n}} \int_0^{1}\sqrt{\log\left(1+ \frac{2}{u} \right)} \;du
    \leq \frac{12 \sqrt{2} q \Lambda_Z}{\sqrt{n}} \int_0^1 \frac{1}{\sqrt{u}} \, \mathrm{d} u 
    \leq \frac{48 L W^2 \Lambda_Z}{\sqrt{n}} . 
    \end{split}
\end{align}
\end{proof}
Finally, we can apply McDiarmid's concentration inequality~\cite[Thm.\ 3.11]{van2014probability} to obtain the following probabilistic bound:

\begin{theorem}{(Concentration Bound for Generalization Error)}\label{theorem:prob_bound-generalization-error}
For $\varepsilon > 0$ it holds that
\begin{equation}
    \Pr\left(\varepsilon_{\mathrm{gen}}(n) - 48 L W^2 \frac{\Lambda_Z}{\sqrt{n}} > \varepsilon\right) \leq \e^{-\frac{1}{4}\frac{\varepsilon^2n}{D^2}},
\end{equation}
with $D := D(d, \Lambda) = (4 d \Lambda^2)^d$.
\end{theorem}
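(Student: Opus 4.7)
\medskip

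\noindent\textbf{Proof proposal.} The plan is to apply McDiarmid's bounded-differences inequality to $F(\chi_n) := \varepsilon_n^{\mathrm{gen}}(\chi_n) = \sup_{\theta \in \ThetaLW}|Z_{\theta,n}|$, viewed as a measurable function of the i.i.d.\ samples $X_1,\dots,X_n \sim \mu$, and then to substitute the expectation bound from Theorem~\ref{theorem:bound-generalization-error} to turn a concentration-around-the-mean statement into the stated concentration-around-a-deterministic-threshold statement.

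First, I would establish the bounded-differences condition. If we replace $X_i$ by an independent copy $X_i'$, the expectation $\mathbb{E}_{X\sim\mu}[-\log f_{\mu_\theta}(X)]$ is unchanged while the empirical loss $\widehat L_n(\mu_\theta,\chi_n) = -\tfrac{1}{n}\sum_j \log f_{\mu_\theta}(X_j)$ shifts by at most $\tfrac{1}{n}|\log f_{\mu_\theta}(X_i)-\log f_{\mu_\theta}(X_i')|$, which in turn is bounded uniformly in $\theta$ by $\tfrac{2}{n}\sup_{x,\theta}|\log f_{\mu_\theta}(x)|$. Taking the supremum over $\theta$ preserves such a uniform pointwise bound, so
\begin{equation*}
|F(\chi_n) - F(\chi_n^{(i)})| \leq \frac{2}{n}\sup_{x \in \Omega_d,\, \theta \in \ThetaLW}|\log f_{\mu_\theta}(x)| =: \frac{2D}{n}.
\end{equation*}

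The quantitative bound $D = (4d\Lambda^2)^d$ comes from controlling the log-density. Since $\nu_0$ is uniform on $\Omega_d$ and $\mu_\theta = (\Psi^h_\theta)^{-1}_*\nu_0$, we have $\log f_{\mu_\theta}(x) = -\log|\Omega_d| + \log|\det D_x\Psi^h_\theta(x)|$, so the problem reduces to an absolute-value analogue of the log-determinant estimate used in Theorems~\ref{theorem:discretized_ode_logdet-bound} and \ref{theorem:subgaussian-property}. I would decompose $\log|\det D\Psi^h_\theta(x)| = \sum_{k=0}^{m-1}\log|\det(I + D\psi^h_{kh}(z_{kh}^\theta;\eta_\theta))|$ along the Runge--Kutta iterates, apply the matrix-perturbation bound (Cor.~2.14 of \cite{determinants}) to each factor comparing $I+D\psi^h_{kh}$ against $I$, and then invoke $\|D\psi^h_{kh}\|_2 \leq h\Lambda(1 + h\Lambda/2)$, which under assumption~\eqref{eq:assumed-h-bound} together with Corollary~\ref{cor:dnn-lipschitz-z} is uniformly small. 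Summing the resulting terms $(2\|D\psi_{kh}^h\|_2+1)^d - 1$ over $m = 1/h$ steps and expanding binomially (exactly as in the proof of Theorem~\ref{theorem:subgaussian-property}) produces the claimed constant $D(d,\Lambda) = (4d\Lambda^2)^d$ up to absorbable factors.

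With the bounded-differences constants $c_i = 2D/n$ in hand, McDiarmid's inequality \cite[Thm.~3.11]{van2014probability} yields
\begin{equation*}
\Pr(F - \mathbb{E}[F] > \varepsilon) \leq \exp\!\left(-\frac{2\varepsilon^2}{\sum_{i=1}^n c_i^2}\right) = \exp\!\left(-\frac{\varepsilon^2 n}{2D^2}\right),
\end{equation*}
which is at most $\exp(-\varepsilon^2 n/(4D^2))$. Substituting the deterministic upper bound $\mathbb{E}[F] \leq 48LW^2\Lambda_Z/\sqrt{n}$ from Theorem~\ref{theorem:bound-generalization-error} gives the inclusion $\{F > 48LW^2\Lambda_Z/\sqrt{n} + \varepsilon\} \subseteq \{F - \mathbb{E}[F] > \varepsilon\}$, and the desired tail bound follows. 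The main technical obstacle is step three: carrying out the step-wise telescoping of the log-determinant with the correct bookkeeping of $h$, $m$, $d$, and $\Lambda$ so that the uniform $\theta$-independent oscillation bound actually closes at $(4d\Lambda^2)^d$ and is not spoiled by an extra factor of $m = 1/h$ that would otherwise appear if the perturbation estimate were applied too crudely.
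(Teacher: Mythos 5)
Your proposal is correct and follows essentially the same route as the paper: McDiarmid's bounded-differences inequality applied to $\sup_{\theta\in\ThetaLW}|Z_{\theta,n}|$, with the per-coordinate oscillation controlled by telescoping the Runge--Kutta log-determinant along the $m=1/h$ steps (the factor $m$ cancelling against the per-step factor $h$ in the binomial expansion), followed by substitution of the Dudley expectation bound from \cref{theorem:bound-generalization-error}. The only cosmetic difference is that the paper bounds the oscillation by the difference of the log-determinants at the two exchanged sample points, using the spatial Lipschitz constant of $D\psi^h_{kh}$ together with $\mathrm{diam}(\Omega_d)$, whereas you bound twice the supremum of $|\log|\det D\Psi^h_\theta||$ by comparing each step with the identity; both variants close at the stated constant $D=(4d\Lambda^2)^d$ and yield an exponent at least as strong as the claimed $\tfrac{1}{4}\varepsilon^2 n/D^2$.
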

\begin{proof}
    To apply McDiarmid's inequality we consider the function $f$ defined as
\begin{equation}
f(y_1, \dots, y_n) =  \sup_{\theta \in \ThetaLW} |f_\theta(y_1, \dots, y_n)|,
\end{equation}
where 
\begin{equation}
f_\theta(y_1, \dots, y_n) = \mathbb{E}_{Y \sim \mu } \left[ - \log(f(Y, \theta, h)) \right]  + \frac{1}{n} \sum_{j=1}^n   \log( |\det(D\Psi^h(y_j; \eta_\theta))|)  .
\end{equation}

Next, we need to find an upper bound for 
\begin{equation}
\|D_j^-f(Y_1, \dots, Y_n)\|_\infty := f(Y_1,\dots, Y_n) - \inf_{y_j \in \Omega_d} f(Y_1, \dots, Y_{j-1}, y_j, Y_{j+1}, \dots, Y_n). 
\end{equation}
First, note that
\begin{equation}
f(y_1, \ldots, y_n) = \sup_{\theta \in \ThetaLW} \left| f_\theta(\widetilde{y}_1, y_2, \ldots, y_n) + \tfrac{1}{n} \frac{\log |\det(D \Psi^h(y_1;\eta_\theta))|}{\log |\det(D \Psi^h(\widetilde{y}_1; \eta_\theta))|} \right|.
\end{equation}
Using a similar approach as in (\ref{eq: bound log det difference discretized}) and the Lipschitz property of $D(\Psi^h_{0,kh}(\cdot ; \eta_\theta))$, we obtain for $y_1, \dots, y_n, \widetilde{y}_1 \in \Omega_d$ that
\begin{align}
    \begin{split}
   &\hspace{-1cm} \left| \frac{\log |\det(D \Psi^h(y_1;\eta_\theta))|}{\log |\det(D \Psi^h(\widetilde{y}_1; \eta_\theta))|} \right|\\
& \leq\sum_{k=0}^{m-1} \left[2 \| D \Psi_{kh,\theta}^{h}\left(\Psi_{0,kh}^{h}(y_1;\eta_{\theta}\right)- D \Psi_{kh,\theta}^{h}\left(\Psi_{0,kh}^{h}(\widetilde{y}_{1};\eta_{\theta}\right) \|_{2} + 1\right]^{d} - 1 \\
\leq& \sum_{k=0}^{m-1}\left[2 h (\tfrac{h}{2} \Lambda^{3} + \Lambda^{2}(1 + \tfrac{h}{2} \Lambda)^{2}) \mathrm{diam}(\Omega_{d}) + 1\right]^{d} - 1 \\
\leq& m \sum_{j=1}^{d} \binom{d}{j} (2 h \cdot 2 \Lambda^{2} \cdot \mathrm{diam}(\Omega_{d}))^{j}
\leq (4 d \Lambda^{2})^{d} =: D,
    \end{split}
\end{align}
where we used the fact that for $y \in \Omega_d$ we have $\Psi^h_{0,kh}(y,\eta_\theta) \in \Omega_d$.
Hence, we have that $f(y_1, \dots, y_n) -  f(\widetilde{y}_1, y_2, \dots, y_n) \leq \frac{1}{n}D$,
i.e., the oscillation of $f$ in the first argument can be bounded by $\frac{1}{n}D$. The same upper bound can be derived for all other components and $\|D_j^-f(Y_1, \dots, Y_n)\|_\infty \leq \frac{1}{n}D$ for $j=1,\dots,n$.
Applying McDiarmid's concentration inequality~\cite[Thm.\ 3.11]{van2014probability} concludes the proof.
\end{proof}

\subsection{Learnability of $\mathcal{T}_{d,k}$}
Finally, we are able to show that $\mathcal{T}_{d,k}$ is learnable. For this we combine the bounds for the model and generalization error from the last two sections to obtain the following main theorem:

\begin{theorem}{(Learnability of $\mathcal{T}_{d,k}$)}
\label{theorem:learnability}
    Let $\mu \in \mathcal{T}_{d,k}$, $p \in (0,1)$ and $\varepsilon > 0$.
    Then, a sequence of ERM learners $(\Psi_{\theta_n}^{h_n})_*^{-1} \nu \in \mathcal{H}_{L_n, W_n, K_n}^{h_n}$ with $h_{n} = 1 / \left \lceil 2 \ln (n ^{\frac{1 - p}{8d}}) \right\rceil$, $W_n = \left\lceil (\ln n^{p/2d}) / 2 \sqrt{R_0}\right\rceil$, 
    \begin{equation}
    K_{n} = \left \lfloor \frac{1}{3} \left(\frac{W_{n}}{48 (d+1)}\right) ^{\frac{1}{d+1}} \right\rfloor, \quad \textnormal{and} \quad L_{n} = \left \lfloor \log_{4} \log_{2 \sqrt{R_{0}} W_{n}} \ln n ^{\frac{1 - p}{8d}} \right\rfloor
    \end{equation}
    achieves $\lim_{n \to \infty} \Pr (d_\mathrm{KL}(\mu \|(\Psi_{\theta_n}^{h_n})_*^{-1} \nu) > \varepsilon) = 0. $
\end{theorem}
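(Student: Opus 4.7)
The plan is to apply the ERM error decomposition $\mathrm{KL}(\mu \| \widehat{\mu}_n) \leq \varepsilon_n^{\mathrm{model}} + 2\varepsilon_n^{\mathrm{gen}}$ derived at the end of the empirical risk minimization subsection, combining the deterministic model error bound of \cref{theorem:model_error-bound} with the probabilistic concentration bound of \cref{theorem:prob_bound-generalization-error}. Fix $\varepsilon > 0$; it suffices to show that for all $n$ sufficiently large we have both $\varepsilon_n^{\mathrm{model}} < \varepsilon/3$ and $\Pr(\varepsilon_n^{\mathrm{gen}} \geq \varepsilon/3) \to 0$. The conclusion then follows by combining the two via a single union-bound.

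\textbf{Controlling the model error.} Invoke \cref{theorem:model_error-bound} to get $\varepsilon_n^{\mathrm{model}} \leq C_\xi K_n^{-(k-1)} + \widehat{C}_\xi h_n^2$. The given choices $h_n = 1/\lceil 2\ln n^{(1-p)/8d}\rceil \to 0$ and $W_n \to \infty$ (hence $K_n \to \infty$ via \eqref{eq:bound-on-K}) make both terms vanish. I would first verify that for $n$ sufficiently large, the prerequisites $K_n > 2(k+1)$, the lower depth bound on $L_n$ required by \cref{theorem:approx_neuralflow}, and the step-size constraint \eqref{eq:assumed-h-bound} are all satisfied; this is automatic because $L_n, W_n \to \infty$ and $h_n \to 0$ faster than $1/(2\Lambda)$ grows, the latter fact being exactly what the choice of $L_n$ is tailored for (see below).

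\textbf{Controlling the generalization error.} By \cref{theorem:prob_bound-generalization-error} we have $\Pr(\varepsilon_n^{\mathrm{gen}} > 48 L_n W_n^2 \Lambda_Z/\sqrt{n} + t) \leq \exp(-n t^2 /(4D^2))$. Set $t = \varepsilon/6$; the claim reduces to showing that the expected generalization bound $48 L_n W_n^2 \Lambda_Z/\sqrt{n} \to 0$ and that $n/D^2 \to \infty$. The crux is estimating $\Lambda = ((2W_n)^2 R_{L_n-1})^{L_n}$ where $R_{L-1} = (2W_n)^{2^{L_n}-2} R_0^{2^{L_n-1}}$, so writing $A := 2\sqrt{R_0}\,W_n$ one finds $\Lambda \asymp A^{L_n \cdot 2^{L_n}}$. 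The careful choice $L_n = \lfloor \log_4 \log_A \ln n^{(1-p)/8d}\rfloor$ yields $A^{4^{L_n}} \asymp \ln n^{(1-p)/8d}$; since $L_n 2^{L_n} = o(4^{L_n})$, taking logs gives $\ln \Lambda \lesssim \tfrac{1-p}{8d}\ln n \cdot (L_n/2^{L_n}) = o(\ln n)$, so $\mathrm{e}^{\Lambda} \leq n^{o(1)}$. Consequently $\Lambda_Z = (8d\Lambda^5\mathrm{e}^\Lambda)^d \leq n^{o(1)}$ and $D = (4d\Lambda^2)^d$ is sub-polynomial in $n$, which yields $48L_n W_n^2\Lambda_Z/\sqrt{n} \to 0$ and exponentially fast decay of $\exp(-n\varepsilon^2/(144 D^2))$.

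\textbf{Main obstacle.} The heart of the argument is the arithmetic of plugging the specified $L_n, W_n, h_n, K_n$ into the layered Lipschitz constants derived in the previous sections. The quantities $\Lambda$, $\Lambda_Z$, $D$ are built from iterated exponentials in $L$ (through $R_{L-1}$), and only the specific scaling $L_n = \Theta(\log_4 \log_A \log n)$ against the logarithmic $W_n$ keeps $\Lambda$ sub-logarithmic in $n$; any more aggressive growth would make $\mathrm{e}^\Lambda$ outpace $\sqrt{n}$ and destroy the concentration bound, while slower growth would break the universal approximation depth hypothesis of \cref{theorem:approx_neuralflow}. Once this bookkeeping is executed, combining the two vanishing contributions yields $\lim_{n\to\infty}\Pr(\mathrm{KL}(\mu\|(\Psi_{\theta_n}^{h_n})_*^{-1}\nu) > \varepsilon) = 0$, completing the proof.
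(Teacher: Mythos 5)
Your overall architecture coincides with the paper's proof: the same ERM decomposition $\mathrm{KL}(\mu\|\widehat\mu_n)\le\varepsilon_n^{\mathrm{model}}+2\varepsilon_n^{\mathrm{gen}}$, \cref{theorem:model_error-bound} for the model error, \cref{theorem:prob_bound-generalization-error} for the generalization error, and the reduction to showing $48\,L_nW_n^2\Lambda_Z/\sqrt n\to0$ together with $n/D^2\to\infty$. The gap sits exactly at the step you yourself call the crux. Writing $\omega_n=2\sqrt{R_0}\,W_n$, you correctly identify $\Lambda_n=\omega_n^{L_n2^{L_n}}$ and that the choice of $L_n$ forces $\omega_n^{4^{L_n}}\le\ln n^{\frac{1-p}{8d}}$, but the deduction ``$\ln\Lambda\lesssim\tfrac{1-p}{8d}\ln n\cdot(L_n/2^{L_n})=o(\ln n)$, hence $\e^\Lambda\le n^{o(1)}$'' is a non sequitur: a bound of the form $\ln\Lambda_n=o(\ln n)$ only gives $\Lambda_n\le n^{o(1)}$, and $\e^{\,n^{o(1)}}$ can outgrow every polynomial in $n$, so nothing about $\Lambda_Z/\sqrt n$ or $n/D^2$ follows from it. In addition, your displayed estimate replaces $\ln\bigl(\tfrac{1-p}{8d}\ln n\bigr)$ by $\tfrac{1-p}{8d}\ln n$ (confusing the quantity $\ln n^{\frac{1-p}{8d}}$ with its logarithm), and the auxiliary claim $L_n2^{L_n}=o(4^{L_n})$ presupposes $L_n\to\infty$, which is neither needed nor justified for the stated $L_n$.

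What is required — and what the paper actually does — is a bound on $\Lambda_n$ itself rather than on $\ln\Lambda_n$: since $L_n\le 2^{L_n}$ always, one has directly
\begin{equation}
\Lambda_n=\omega_n^{L_n2^{L_n}}\le\omega_n^{4^{L_n}}\le\ln n^{\frac{1-p}{8d}}=\tfrac{1-p}{8d}\ln n\le\tfrac1{2h_n},
\end{equation}
which simultaneously yields the invertibility condition \eqref{eq:assumed-h-bound} and $\e^{d\Lambda_n}\le n^{(1-p)/8}=o(\sqrt n)$. Consequently $\Lambda_Z=(8d\Lambda_n^5\e^{\Lambda_n})^d$ is bounded by a polylogarithmic factor times $n^{(1-p)/8}$, while $D=(4d\Lambda_n^2)^d$ is only polylogarithmic in $n$, and both of your conditions follow; the paper pushes the same computation slightly further to obtain the explicit rate $n^p$ in the concentration exponent, which it then reuses for the PAC statement. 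So your claim $\e^{\Lambda}\le n^{o(1)}$ is both unproven as argued and stronger than necessary; once the direct estimate above is substituted, the remaining structure of your argument (verification of the depth, width and $K_n>2(k+1)$ prerequisites, the $\varepsilon/3$--$\varepsilon/6$ splitting, and letting $n\to\infty$) matches the paper's proof.
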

\begin{proof}
    Our choice of $K_n$ ensures sufficient width of the models and for $\omega_n := 2 \sqrt{R_0} W_n$, we have that
    \begin{equation}
        \log_{\omega_{n}} (n^{\frac{1-p}{4d}}) = \frac{\ln n^{\frac{1-p}{4d}}}{\ln \omega_n}
        \geq \frac{\ln n^{\frac{1-p}{4d}}}{\ln (\ln n^{\frac{p}{2d}})},
    \end{equation}
    so with $n$ large enough, $L_n$ allows for sufficient model capacity to apply \cref{theorem:approx_neuralflow}. 
    Our choice of $h_n$ ensures invertibility of the discretized neural flow by
    \begin{equation}
        \Lambda_n := \omega_n^{L_n 2^{L_n}} < \omega_n^{4^{L_n}} < \ln n^{\frac{1-p}{8d}} \leq \frac{1}{2 h_n}.
    \end{equation}
    Hence, the model error bound \cref{theorem:model_error-bound} applies with
    \begin{equation}
        \varepsilon_n^\mathrm{model} \leq C_\xi \frac{1}{K_n^{k+1}} + \widehat{C}_\xi h_n^2 =: \overline{\varepsilon}_n.
    \end{equation}
    With $C_n^\mathrm{gen} := (8d \Lambda_n^6 \e^{\Lambda_n})^d$ and $D_n = (4d \Lambda_n^2)^{d}$, the concentration bound \cref{theorem:bound-generalization-error} implies
    \begin{align}
        \Pr \left( d_\mathrm{KL}(\mu \| (\Psi_{\theta_n, n}^{h_n})_*^{-1}\nu) > \varepsilon \right) 
        \leq& \Pr \left(\varepsilon^\mathrm{gen}_n - \frac{C_n^\mathrm{gen}}{\sqrt{n}} > \frac{1}{2} (\varepsilon - \overline{\varepsilon}_{n}) - \frac{C_{n}^\mathrm{gen}}{\sqrt{n}}\right) \\
        \leq& \exp \left(- \frac{n}{4D_n^{2}} \left[\frac{1}{2} (\varepsilon - \overline{\varepsilon}_n) - C_{n}^\mathrm{gen} 1/\sqrt{n}\right]^{2}\right).
    \end{align}
    Since $\overline{\varepsilon}_n$ is a null sequence, we will finally have $\overline{\varepsilon}_n < \varepsilon/2$ and, therefore
    \begin{align}
        \left(\frac{1}{2}(\varepsilon - \overline{\varepsilon}_{n}) - \frac{C_{n}^\mathrm{gen}}{\sqrt{n}}\right)^{2} 
         &= \frac{1}{4} (\varepsilon - \overline{\varepsilon}_{n})^{2} - (\varepsilon - \overline{\varepsilon}_{n}) \frac{C_{n}^\mathrm{gen}}{\sqrt{n}} + \frac{(C_{n}^\mathrm{gen})^{2}}{n}\\
         &\geq \frac{1}{4} (\varepsilon - \overline{\varepsilon}_{n})^{2} + \frac{(C_{n}^\mathrm{gen})^{2}}{n}
         \geq \frac{\varepsilon^{2}}{16} + \frac{(C_{n}^\mathrm{gen})^{2}}{n}.
    \end{align}
    With $\Lambda_n < \ln n^\frac{1-p}{8d} < n^\frac{1-p}{4d}$, we have that $n / \Lambda_n^{4d} \ge n^p$.
    On the other hand, $(C_n^\mathrm{gen})^2 /D_n^2$ $= (2 \Lambda_n^4 \e^{\Lambda_n})^{2d}$ and for $n$ large enough, it is $\sqrt{\log_{\omega_n} n^{\frac{1-p}{4d}}} > \log_{\omega_n} \ln n^{\frac{p}{2d}}$ so that 
    \begin{equation}
        \Lambda_n^4 \e^{\Lambda_n} \geq \e^{\Lambda_n} \geq \exp(\omega_n^{2^{L_n}}) \geq n^{\frac{p}{2d}}.
    \end{equation}
    Finally, this leads to 
    \begin{equation}
        \frac{n}{D_n^2} \left(\frac{1}{2}(\epsilon - \overline{\epsilon}_{n}) - \frac{C_{g}}{\sqrt{n}}\right)^{2} \geq \left( \frac{\varepsilon^2}{16 (4d)^{2d}} + 2^{2d}\right) n^p
    \end{equation}
    which concludes the proof.
\end{proof}

\subsection{PAC-Learnability of $\mathcal{T}_{d,k,H}$}
Finally, we want to consider PAC-learn\-ability. While learnability denotes the convergence in probability of $\mathrm{KL}(\mu \| \widehat{\mu}_n)$ to zero for any target measure $\mu$ and $\widehat{\mu}_n$ the corresponding ERM-learner, we seek a PAC bound on the data requirement for learning a target space $\mathcal{T}$.
To show PAC learnability, we fix a maximal $C^k$-norm for the vector fields generating the flow endpoints of the target space. Thus, we define
\begin{align}
    \mathcal{T}_{d,k,H}
    = \left\{ \mu = [\Phi(\xi)]^{-1}_*\nu_0 \in \mathcal{T}_{d,k} : \|\xi\|_{C^k} \leq H \right\}
\end{align}
Note, that for this new target space the constants $C_\xi$ and $\widehat{C}_\xi$ can be upper bounded by constants $C_H$ and $\widehat{C}_H$, which are independent of $\xi$.   

\begin{theorem}\label{theorem:pac-learnability}
    Let $p, \varepsilon, \delta \in (0,1)$.
    With $W_n,L_n,K_n,h_n$ as in Theorem \ref{theorem:learnability} the target space $\mathcal{T}_{d,k,H}$ is PAC-learnable, i.e. it holds that 
    $
    \Pr(\mathrm{KL}(\mu \| (\Psi_{\theta_n,n}^{h_n})^{-1}_*\nu) > \varepsilon) \leq \delta
    $
    for all $n \geq n(\varepsilon, \delta)$ with 
    \begin{equation}
        n(\varepsilon, \delta) = \left\lceil \max \left\{ \widetilde{N}, \exp \left( \left(\tfrac{2c(p)}{\varepsilon}\right)^{\min \left\{2, \tfrac{k-1}{d+1}\right\}^{-1}} \right), \left( \frac{\ln 1/\delta}{\frac{\varepsilon^2}{16 (4d)^{2d}} + 2^{2d}}\right)^{\frac{1}{p}} \right\} \right\rceil
    \end{equation}
    where
    \begin{equation}
        c(p) := \left[ 3^{k-1} \left( \frac{192 d (d+1) \sqrt{R_0}}{p} \right)^{\frac{k-1}{d+1}}C_H + \frac{1}{4} \left( \frac{8d}{1-p}\right)^2 \widehat{C}_H \right].
    \end{equation}
    and $\widetilde{N} \in \mathbb{N}$ is independent of both, $\varepsilon$ and $\delta$.
\end{theorem}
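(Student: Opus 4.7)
The strategy is to turn the ERM decomposition $\mathrm{KL}(\mu \| (\Psi_{\theta_n, n}^{h_n})_*^{-1}\nu) \leq \varepsilon_n^\mathrm{model} + 2\varepsilon_n^\mathrm{gen}$ into two quantitative requirements on $n$: a deterministic one forcing the model error below $\varepsilon/2$, and a probabilistic one forcing the event $\{2\varepsilon_n^\mathrm{gen} > \varepsilon/2\}$ to have probability at most $\delta$. Restricting to $\mathcal{T}_{d,k,H}$ is used precisely so that the $\xi$-dependent constants $C_\xi, \widehat{C}_\xi$ produced in \cref{theorem:model_error-bound} admit uniform upper bounds $C_H, \widehat{C}_H$ depending only on $H$, $d$, $k$.

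For the deterministic step I substitute the prescribed sequences into $\overline{\varepsilon}_n := C_H / K_n^{k-1} + \widehat{C}_H h_n^2$. From $h_n \leq 4d / ((1-p) \ln n)$ one obtains $\widehat{C}_H h_n^2 \leq \tfrac{1}{4} (8 d / (1-p))^2 \widehat{C}_H (\ln n)^{-2}$, and from $W_n \geq (p / (4 d \sqrt{R_0})) \ln n$ one gets $K_n^{k-1} \geq 3^{-(k-1)} (p \ln n / (192 d (d+1) \sqrt{R_0}))^{(k-1)/(d+1)}$. Combining yields $\overline{\varepsilon}_n \leq c(p)(\ln n)^{-\alpha}$ with $\alpha = \min\{2, (k-1)/(d+1)\}$ and exactly the $c(p)$ of the statement. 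Solving $c(p)(\ln n)^{-\alpha} \leq \varepsilon / 2$ produces the candidate $\exp((2 c(p) / \varepsilon)^{1/\alpha})$ in the maximum.

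For the probabilistic step I reuse the concentration argument of \cref{theorem:learnability}: once $\overline{\varepsilon}_n < \varepsilon/2$, the concentration bound \cref{theorem:prob_bound-generalization-error} combined with the asymptotic estimates $\Lambda_n^4 \e^{\Lambda_n} \geq n^{p/(2d)}$ and $n / \Lambda_n^{4d} \geq n^p$ already established in that proof gives
\begin{equation}
\Pr\bigl( \mathrm{KL}(\mu \| (\Psi_{\theta_n, n}^{h_n})_*^{-1}\nu) > \varepsilon \bigr) \leq \exp \left( - \left( \frac{\varepsilon^2}{16 (4d)^{2d}} + 2^{2d} \right) n^p \right).
\end{equation}
Demanding the right-hand side to be at most $\delta$ is equivalent to $n \geq (\ln(1/\delta) / (\varepsilon^2 / (16 (4d)^{2d}) + 2^{2d}))^{1/p}$, the third candidate in the maximum. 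The constant $\widetilde{N}$ absorbs all remaining $\varepsilon$- and $\delta$-independent thresholds on $n$: $K_n > 2(k+1)$, the depth lower bound needed for \cref{theorem:approx_neuralflow}, the step-size condition $h_n \Lambda_n < 1/2$ from \eqref{eq:assumed-h-bound}, and the iterated-exponential comparison $\sqrt{\log_{\omega_n} n^{(1-p)/(4d)}} > \log_{\omega_n} \ln n^{p/(2d)}$ required to guarantee $\Lambda_n^4 \e^{\Lambda_n} \geq n^{p/(2d)}$.

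The principal obstacle is the arithmetic in the second paragraph: the four hyperparameters $W_n, L_n, K_n, h_n$ all carry ceilings or floors, and one must verify that these rounding perturbations only tighten the bound on $\overline{\varepsilon}_n$ so that precisely the $c(p)$ stated is obtained, with no residual dependence on $\varepsilon$ or $\delta$. Once this bookkeeping is in place, the remainder of the proof is direct substitution into bounds that have already been proved in \cref{theorem:model_error-bound,theorem:prob_bound-generalization-error,theorem:learnability}.
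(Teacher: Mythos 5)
Your proposal is correct and follows essentially the same route as the paper's proof: bound $\overline{\varepsilon}_n$ by $c(p)(\ln n)^{-\min\{2,(k-1)/(d+1)\}}$ via the prescribed $W_n, K_n, h_n$ (yielding the $\exp((2c(p)/\varepsilon)^{1/\alpha})$ threshold), reuse the exponential tail $\exp(-[\tfrac{\varepsilon^2}{16(4d)^{2d}}+2^{2d}]n^p)$ from the learnability argument to get the $\delta$-threshold, and collect the $\varepsilon,\delta$-independent capacity and asymptotic conditions (depth bound, $\log_{\omega_n}$ comparison, etc.) into $\widetilde{N}$, with uniformity over $\mathcal{T}_{d,k,H}$ supplied by $C_H,\widehat{C}_H$. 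The only minor deviation is bookkeeping-level (your explicit floor/ceiling caveat and the minus sign in the tail bound, which the paper's display omits as a typo), so no substantive difference.
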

\begin{proof}
    To guarantee sufficient model capacity in \cref{theorem:learnability}, we require $n$ large enough such that
    \begin{align}
        L_n \geq& 7 + 2((k-1)-2)+ \lceil \log_2(d+1) \rceil \\ &+ 2 (  \lceil  \log_2((d+1)(2k-1)) +  \log_2(\log_2(H))  \rceil+1)
    \end{align}
    which can be achieved by some $N_L \in \mathbb{N}$ independently of $\varepsilon$ and $\delta$.
    By our parameter choices, we find for the model error bound
    \begin{align}
        \overline{\varepsilon}_n
        \leq& \frac{3^{k-1} (96 (d+1) \sqrt{R_0})^{\frac{k-1}{d+1}}}{(\ln n^{\frac{p}{2d}})^{\frac{k-1}{d+1}}} C_\xi + \frac{1}{4} \frac{1}{(\ln n^{\frac{1-p}{8d}})^2} \widehat{C}_\xi 
        \leq c (\ln n)^{- \min\left\{2, \tfrac{k-1}{d+1}\right\}} \leq \frac{\varepsilon}{2}
    \end{align}
    for $n \geq \exp \left( \left(\tfrac{2c(p)}{\varepsilon}\right)^{\min \left\{2, \tfrac{k-1}{d+1}\right\}^{-1}} \right)$.
    In order to ensure rate $n^p$, we required 
    \begin{equation}
        \sqrt{\log_{\omega_n} n^{\frac{1-p}{4d}}} > \log_{\omega_n} \ln n^{\frac{p}{2d}}
        \iff \omega_n > \left(\ln n^{\frac{p}{2d}}\right)^{\frac{\ln (\ln n^{\frac{p}{2d}})}{\ln n^{\frac{1-p}{4d}}}}
    \end{equation}
    One sufficient condition for this is
    \begin{equation}
        n > \left( \frac{2d}{p} \right)^{\frac{1-p}{4d - (1-p)}} =: N_\omega \implies \frac{1-p}{4d} \ln n > \ln \left( \frac{p}{2d} \right) + \ln n > \ln \left( \frac{p}{2d} \ln n\right).
    \end{equation}
    Again, we have a lower sample size bound independent of $\varepsilon$ and $\delta$, so we set $\widetilde{N}:= \max\{N_L, N_\omega\}$.
    Finally, the PAC criterion is fulfilled for 
    \begin{align}
        \Pr \left( \mathrm{KL}(\mu \| (\Psi_{\theta_n, n}^{h_n})_*^{-1}\nu) > \varepsilon \right) 
        \leq& \exp \left(\left[ \frac{\varepsilon^2}{16 (4d)^{2d}} + 2^{2d}\right] n^p\right) \leq \delta \\
        \iff n \geq& \left( \frac{\ln 1/\delta}{\frac{\varepsilon^2}{16 (4d)^{2d}} + 2^{2d}}\right)^{\frac{1}{p}}.
    \end{align}
\end{proof}
Our derived sample requirement grows exponentially in $1/\varepsilon$ which is inferior to the algebraic rates derived in \cite{marzouk2024distribution} and a consequence of our treatment of the numerical Runge-Kutta time integration.
Standard error estimates of the integration scheme lead to Lipschitz constants growing with $\e^\Lambda$.

\paragraph{Acknowledgments}
The authors thank G. Steidl, Sebastian Neumayer and Sven Wang for interesting discussions.
This work was supported by a fellowship of the German Academic Exchange Service (DAAD).

\bibliographystyle{siamplain}
\bibliography{references}

\end{document}